\documentclass[runningheads]{llncs}
\usepackage[T1]{fontenc}
\usepackage{graphicx}
\usepackage{booktabs}
\usepackage{amsmath,amssymb}
\newtheorem{assumption}{Assumption}
\usepackage{amsfonts}
\usepackage{appendix}
\usepackage{xcolor}
\newcommand{\rev}[1]{#1}

\begin{document}

\title{Noise-Adaptive High-Probability Regret Bounds for Online Convex Optimization}

\titlerunning{Noise-Adaptive High-Probability Regret Bounds for OCO}

\author{Wentao Zhang\inst{1} \and Yutong Zhang\inst{2} \and Wentao Mo\inst{1}}

\authorrunning{W. Zhang et al.}

\institute{
Tsinghua Shenzhen International Graduate School, Tsinghua University, China\\
\email{\{zhang-wt24,mow10\}@mails.tsinghua.edu.cn}
\and
College of Mathematics, Sichuan University, China\\
\email{yutongzhang@stu.scu.edu.cn}
}

\maketitle              

\begin{abstract}
We study high-probability regret bounds for online convex optimization (OCO) with strongly convex losses and establish three results that resolve open questions at the intersection of noise adaptivity, feedback structure, and constraint satisfaction. For the full-information setting with sub-Gaussian stochastic gradients, we prove a noise-adaptive high-probability regret bound in which the martingale deviation term scales with the noise level $\sigma$ rather than the gradient bound $G$, yielding a multiplicative improvement of $G/\sigma$ over the classical Azuma-Hoeffding baseline. Our analysis introduces an exponential supermartingale argument that bypasses the bounded-difference requirement of Freedman's inequality, enabling direct treatment of unbounded sub-Gaussian noise without truncation artifacts. For bandit feedback, we prove a minimax lower bound: the high-probability regret scales linearly in $\log(1/\delta)$, in contrast to the $\sqrt{\log(1/\delta)}$ confidence cost under full information. This constitutes a formal separation in the confidence cost of strongly convex OCO across feedback models. Regarding constrained OCO with stochastic constraints satisfying a Slater condition, we provide simultaneous high-probability guarantees for both cumulative regret and long-run constraint violation, achieving $\mathcal{O}(\sqrt{T\log(m/\delta)})$ regret and $\mathcal{O}(\sqrt{T}/(\zeta\delta) + m\sqrt{T\log(m/\delta)})$ violation. Synthetic experiments corroborate all theoretical predictions.

\keywords{Online convex optimization \and Regret bounds \and Concentration.}
\end{abstract}

\section{Introduction}
Online convex optimization (OCO) provides a principled framework for sequential decision-making under uncertainty, with applications spanning online learning, portfolio selection, resource allocation, and real-time control \cite{shalev2025online,hazan2016introduction}. In the standard OCO protocol, a learner iteratively selects decisions from a convex set, incurs losses chosen by an adversary, and aims to minimize regret, defined as the cumulative excess loss relative to the best fixed decision in hindsight. When the loss functions are $\alpha$-strongly convex, classical results establish that Online Gradient Descent (OGD) with step size $\eta_t = 1/(\alpha t)$ achieves $\mathcal{O}(\frac{G^2}{\alpha}\log T)$ expected regret, where $G$ bounds the gradient norms and $T$ is the time horizon \cite{hazan2007logarithmic}. This logarithmic rate is known to be minimax optimal for expected regret in the strongly convex setting.

In many safety-critical applications, such as algorithmic trading with risk constraints, online resource provisioning under service-level agreements, and medical treatment optimization, bounding the expected regret is insufficient. Practitioners require guarantees that hold with high probability over the randomness in stochastic gradient feedback. The standard approach to obtaining high-probability regret bounds decomposes the regret into a deterministic component, controlled by the OGD telescoping argument, and a stochastic martingale component $M_T = \sum_{t=1}^T \langle \xi_t, x^\star - x_t \rangle$, where $\xi_t = g_t - \nabla f_t(x_t)$ denotes the gradient noise. The classical route applies Azuma-Hoeffding's inequality to $M_T$, yielding a tail bound of $GD\sqrt{2T\log(1/\delta)}$, where $D$ is the domain diameter and $\delta$ is the failure probability. However, this approach treats $M_T$ as having bounded differences of size $GD$ per round, which is a worst-case bound that ignores the actual noise level $\sigma$ in the stochastic gradients. When the noise level $\sigma$ is substantially smaller than the gradient bound $G$, a regime that frequently arises in regularized empirical risk minimization where $G$ reflects the deterministic gradient magnitude while $\sigma$ captures only the sampling noise, the Azuma-based bound is loose by a factor of $G/\sigma$ in the martingale deviation term.

\rev{Concrete instances of this regime include online portfolio rebalancing with quadratic transaction-cost or risk regularization, cloud resource provisioning with service-level constraints, and personalized recommendation or treatment allocation based on regularized generalized linear models. In these examples, strong convexity is typically induced by explicit $\ell_2$ regularization or by a local curvature condition, stochastic gradients arise from mini-batches or sampled users, and the constraint functions encode risk, budget, fairness, latency, or safety requirements. A Slater point corresponds to a conservative baseline policy that satisfies these requirements with slack, which is often available even when the optimal policy is unknown.}

A natural question is whether the high-probability martingale deviation can be made noise-adaptive, scaling with $\sigma$ rather than $G$. In the expected-regret setting, variance-aware bounds are well-established \cite{rakhlin2013optimization,harvey2019tight}. In the high-probability setting, however, achieving noise-adaptivity is substantially more challenging. The difficulty is technical but fundamental: Freedman's inequality, the variance-sensitive extension of Azuma-Hoeffding, requires bounded differences almost surely, but sub-Gaussian gradient noise $\xi_t$ is unbounded, as it merely has exponentially decaying tails. Applying Freedman after truncating $\xi_t$ to a high-probability event $\mathcal{E}_0$ is invalid, because the conditional measures after conditioning on $\mathcal{E}_0$ may destroy the martingale property. Prior works either assume bounded noise, sidestepping the issue but sacrificing generality, or accept the Azuma-based $GD$ scaling as unavoidable.

Two further frontiers in OCO high-probability theory remain largely unexplored, specifically moving beyond full information to bandit feedback and constrained settings. First, in the bandit setting where only the scalar loss $f_t(x_t)$ is observed, lacking gradient information, the information-theoretic cost of confidence is poorly understood. It remains unclear whether the high-probability regret scales as $\sqrt{\log(1/\delta)}$, as in the full-information martingale deviation, or whether the reduced feedback fundamentally alters the confidence structure. Second, in constrained OCO where the learner must simultaneously minimize regret and satisfy stochastic constraints in expectation, no prior work provides high-probability guarantees for both the regret and the constraint violation simultaneously. Existing constrained OCO results, such as those by Mahdavi et al. \cite{mahdavi2012trading} and Yu et al. \cite{yu2017online}, typically bound violations in expectation, leaving the high-probability regime open. 

In this work, we resolve these three open questions, providing a unified treatment of noise adaptivity, feedback structure, and constraint satisfaction in the high-probability regime of strongly convex OCO. Our specific contributions are as follows:
\begin{itemize}
    \item We prove that projected OGD with sub-Gaussian stochastic gradients achieves a high-probability regret bound in which the martingale deviation term scales with the noise level $\sigma$ rather than the gradient bound $G$, yielding a multiplicative $G/\sigma$ improvement over the Azuma-Hoeffding baseline.  Our proof introduces an exponential supermartingale argument that bypasses the bounded-difference requirement of Freedman's inequality, enabling direct treatment of unbounded sub-Gaussian noise without truncation artifacts.
    \item We establish a minimax lower bound proving that the high-probability regret under bandit feedback scales linearly in $\log(1/\delta)$, in contrast to the $\sqrt{\log(1/\delta)}$ confidence cost achievable under full information.  This constitutes the first formal separation in the confidence cost of strongly convex OCO across feedback structures.
    \item We provide the first algorithm achieving simultaneous high-probability control of both cumulative regret and long-run constraint violation under stochastic constraints satisfying a Slater condition, attaining $\mathcal{O}(\sqrt{T\log(m/\delta)})$ regret and $\mathcal{O}(\sqrt{T}/(\zeta\delta) + m\sqrt{T\log(m/\delta)})$ violation with cleanly separated probability budgets.
\end{itemize}
\section{Related Work}
\subsection{High-probability regret bounds and concentration techniques}
The $\mathcal{O}(\frac{G^2}{\alpha}\log T)$ expected regret rate for strongly convex OCO was established by Hazan et al. \cite{hazan2007logarithmic} and shown minimax optimal by Abernethy et al. \cite{abernethy2008optimal}. Lifting expected bounds to high probability requires concentrating the martingale component $M_T = \sum_t \langle \xi_t, x^\star - x_t \rangle$. Under bounded noise, Azuma-Hoeffding yields a $GD\sqrt{T\log(1/\delta)}$ tail directly \cite{kakade2008efficient}. Freedman's inequality  and its Bernstein-type variants achieve variance-sensitivity but retain the almost-sure boundedness requirement, making them inapplicable when gradient noise is sub-Gaussian and thus unbounded. Truncation-based workarounds, which involve conditioning on a high-probability event to restore boundedness, risk destroying the martingale property, as discussed by de la Pe\~{n}a et al. \cite{de2004self}. The exponential supermartingale method for sub-Gaussian sequences is classical  and underpins modern confidence sequences \cite{ramdas2023game} and sequential testing \cite{de2009self}. However, its direct application to the OCO regret decomposition, where the conditional sub-Gaussian parameter $\sigma\|x^\star - x_t\|$ varies with the random iterate, has not been formalized. In parallel, variance-aware expected regret bounds have been developed through sequential complexities \cite{rakhlin2013optimization}, localization \cite{steinhardt2014adaptivity}, prediction with expert advice \cite{harvey2019tight}, and parameter-free methods \cite{cutkosky2018black}. The online analogue of the Bernstein condition \cite{bartlett2005local} connects loss variance to iterate error \cite{shamir2013stochastic}. Translating this into high-probability bounds requires stopping-time arguments \cite{beygelzimer2011contextual} whose interaction with the iterate error decay rate $\mathcal{O}(\log t/t)$ under $\eta_t = 1/(\alpha t)$ has not been previously analyzed.

\subsection{Bandit convex optimization}
Bandit OCO was initiated by Flaxman et al. \cite{flaxman2004online} with $\mathcal{O}(T^{3/4})$ expected regret, subsequently improved to the optimal $\mathcal{O}(\sqrt{T})$ rate by Dani et al. \cite{dani2008stochastic}. For strongly convex losses, Agarwal et al. \cite{agarwal2010optimal} and Hazan et al. \cite{hazan2014bandit} achieved $\mathcal{O}({poly}(d)\sqrt{T})$ and $\mathcal{O}(d\log T)$ expected regret, respectively. Existing lower bounds characterize the minimax expected regret but do not address the confidence cost, referring to the dependence of the high-probability regret on $\delta$. Whether this cost scales as $\sqrt{\log(1/\delta)}$, as in the full-information martingale deviation, or $\log(1/\delta)$ linearly, has remained unresolved. The testing-based lower bound machinery, including Le Cam's method, Fano's inequality, Assouad's lemma, and the Bretagnolle-Huber lemma, has been applied to bandit problems primarily in the multi-armed setting  and the linear bandit setting \cite{dani2008stochastic}. Applying these techniques to continuous strongly convex bandit OCO introduces additional difficulties: multiplicative-noise constructions produce per-round KL singularities at domain boundaries, and establishing stochastic domination of adaptive epoch errors requires a conditional coupling argument \cite{dubhashi2009concentration} that is often invoked heuristically but rarely formalized.

\subsection{Constrained Online Convex Optimization}
Constrained OCO was formalized by Mannor et al. \cite{mannor2006online} and Mahdavi et al. \cite{mahdavi2012trading}, who introduced the long-run cumulative violation $\hat{V}_T = \sum_i [\sum_t g_t^{(i)}(x_t)]_+$ and established the first regret-violation tradeoffs. Yu and Neely \cite{yu2017online} achieved $\mathcal{O}(\sqrt{T})$ regret with $\mathcal{O}(\sqrt{T})$ expected violation under a Slater condition, extending the Lyapunov drift framework of Neely \cite{neely2010stochastic}. The per-round violation $V_T = \sum_t [\sum_i g_t^{(i)}(x_t)]_+$, a strictly harder metric, was studied by Neely and Yu \cite{neely2017online} , Yi et al. \cite{yi2021regret}, and Guo et al. \cite{guo2022online}. Extensions to adversarial constraints were considered by Liakopoulos et al. \cite{liakopoulos2019cautious} and Castiglioni et al. \cite{castiglioni2022unifying}, and to the strongly convex setting. Despite this progress, existing results bound both regret and violation in expectation.  Jenatton et al. \cite{jenatton2016adaptive} studied high-probability regret under constraints but did not provide high-probability violation guarantees. Simultaneous high-probability control of regret and constraint violation under stochastic constraints has not been established. A key bottleneck is converting the expected violation $\mathbb{E}[\hat{V}_T] \leq \mathcal{O}(\sqrt{T}/\zeta)$ to high probability: Markov's inequality yields a $1/\delta$ factor, and improving this to $\log(1/\delta)$ would require pathwise control of the primal-dual saddle-point inequality, a challenge connected to the drift-Lyapunov analysis  that remains open in the stochastic i.i.d. constraint setting.

\section{Problem Setup}
\subsection{The Online Convex Optimization Protocol}
We consider the standard online convex optimization framework over a time horizon $T \geq 2$. Let $\mathcal{K} \subseteq \mathbb{R}^d$ be a convex compact decision set with diameter $D = \sup_{x,y \in \mathcal{K}} \|x - y\|$, where $\|\cdot\|$ denotes the Euclidean norm throughout. At each round $t = 1, \ldots, T$, the learner selects a decision $x_t \in \mathcal{K}$, after which an adversary reveals a loss function $f_t : \mathcal{K} \to \mathbb{R}$. The learner then receives feedback and incurs a loss $f_t(x_t)$. The nature of this feedback distinguishes the three settings we study. In the full-information setting, discussed in Sections 4.1 and 4.2, the learner observes a stochastic gradient $g_t \in \mathbb{R}^d$ satisfying $\mathbb{E}[g_t \mid \mathcal{F}_{t-1}] = \nabla f_t(x_t)$, where $\mathcal{F}_t = \sigma(x_1, g_1, \ldots, x_t, g_t)$ is the natural filtration, and we write $\xi_t = g_t - \nabla f_t(x_t)$ for the gradient noise. In the bandit setting, detailed in Section 4.2, the learner observes only the scalar loss value $f_t(x_t)$ without any gradient information. Finally, in the constrained setting of Section 4.3, the learner additionally observes stochastic constraint functions $g_t^{(1)}, \ldots, g_t^{(m)} : \mathcal{K} \to \mathbb{R}$ and must control the cumulative constraint violation alongside the regret.
\subsection{Performance Metrics}
The primary performance metric is the cumulative regret against the best fixed decision in hindsight, given by$$R_T = \sum_{t=1}^T f_t(x_t) - \min_{x \in \mathcal{K}} \sum_{t=1}^T f_t(x) = \sum_{t=1}^T \left[f_t(x_t) - f_t(x^\star)\right],$$where $x^\star = \arg\min_{x \in \mathcal{K}} \sum_{t=1}^T f_t(x)$. We seek high-probability regret bounds, meaning an algorithm achieves a bound $B(T, \delta)$ if $\mathbb{P}(R_T \leq B(T, \delta)) \geq 1 - \delta$ for all $\delta \in (0, 1)$. For the constrained setting, we additionally measure the long-run cumulative constraint violation, defined as $\hat{V}_T = \sum_{i=1}^m \left[\sum_{t=1}^T g_t^{(i)}(x_t)\right]_+,$ where $[a]_+ = \max(a, 0)$. This metric is standard in the constrained OCO literature, as seen in the works of Mahdavi et al. \cite{mahdavi2012trading} and Yu et al. \cite{yu2017online}. It aggregates each constraint over time before taking the positive part, measuring whether each constraint is satisfied on average. This formulation is distinct from, and generally incomparable to, the per-round violation $V_T = \sum_{t=1}^T [\sum_i g_t^{(i)}(x_t)]_+$, which is strictly harder to bound. For the bandit lower bound, we define the environment class $\mathcal{E}(\alpha, G, D, d)$ as the set of all bandit OCO instances where $\mathcal{K} \subseteq \mathbb{R}^d$ has a diameter of at most $D$, each $f_t$ is $\alpha$-strongly convex and $G$-Lipschitz on $\mathcal{K}$, and only the scalar loss $f_t(x_t)$ is observed.

\subsection{Assumptions}
We state the assumptions for each setting, beginning with the common conditions
shared across all results.

\begin{assumption}[Strong Convexity]
Each $f_t$ is $\alpha$-strongly convex over $\mathcal{K}$ with parameter $\alpha>0$:
for all $x,y \in \mathcal{K}$,
\[
f_t(y) \ge f_t(x) + \langle \nabla f_t(x), y-x \rangle
+ \frac{\alpha}{2}\|y-x\|^2 .
\]
\end{assumption}

\begin{assumption}[Bounded Gradients]
There exists $G>0$ such that $\|\nabla f_t(x)\| \le G$ for all $x \in \mathcal{K}$ and all $t \in [T]$.
\end{assumption}

\begin{assumption}[Compact Domain]
$\mathcal{K}$ is convex and compact with diameter $D<\infty$.
\end{assumption}

\begin{assumption}[Oblivious Adversary]
The loss sequence $f_1,\ldots,f_T$ is chosen before the game begins and does not
depend on the learner's past decisions. This ensures that
$\nabla f_t(x_t)$ is $\mathcal{F}_{t-1}$-measurable.
\end{assumption}

Assumptions 1--4 are standard in the strongly convex OCO literature
\cite{hazan2016introduction,shalev2025online}. The following assumptions are
specific to the stochastic gradient and constrained settings.

\paragraph{Stochastic gradient feedback (Theorem 1 and Corollary 1).}
\begin{assumption}[Unbiased Oracle]
$\mathbb{E}[g_t \mid \mathcal{F}_{t-1}] = \nabla f_t(x_t)$ for all $t$.
\end{assumption}

\begin{assumption}[Sub-Gaussian Noise]
The gradient noise $\xi_t = g_t - \nabla f_t(x_t)$ is conditionally
$\sigma$-sub-Gaussian: for all unit vectors $v \in \mathbb{R}^d$ and all $t$,
\[
\mathbb{E}\!\left[\exp\!\big(s\langle \xi_t, v\rangle\big)\mid
\mathcal{F}_{t-1}\right]
\le
\exp\!\left(\frac{s^2\sigma^2}{2}\right),
\qquad \forall s \in \mathbb{R}.
\]
\end{assumption}

The sub-Gaussian condition is stronger than bounded variance
$\big(\mathbb{E}[\|\xi_t\|^2 \mid \mathcal{F}_{t-1}] \le d\sigma^2\big)$
but weaker than the bounded-noise assumption $(\|\xi_t\| \le B\ \text{a.s.})$
used in classical analyses. It is satisfied by Gaussian noise, bounded noise,
and any noise with a log-concave conditional distribution.

For the tighter bound in Corollary~1, we additionally require:

\begin{assumption}[Variance-Scaling Loss Differences]
The losses $f_t$ are drawn i.i.d.\ from a distribution $\mathcal{D}$.
Let $F(x) = \mathbb{E}[f(x)]$ and
$
x_F^\star = \arg\min_{x\in\mathcal{K}} F(x).
$
There exists $\sigma_V>0$ such that the centered loss difference
$
f_t(x)-f_t(x_F^\star)-\big(F(x)-F(x_F^\star)\big)
$
is conditionally $\sigma_V\|x-x_F^\star\|$-sub-Gaussian given $\mathcal{F}_{t-1}$, for $\forall s\in\mathbb{R}$:
\[
\mathbb{E}\!\left[
\exp\!\Big(
s\big(f_t(x)-f_t(x_F^\star)-F(x)+F(x_F^\star)\big)
\Big)
\mid \mathcal{F}_{t-1}
\right]
\le
\exp\!\left(
\frac{s^2\sigma_V^2\|x-x_F^\star\|^2}{2}
\right)
.
\]
\end{assumption}
The sub-Gaussian MGF bound is essential for exponential concentration
and is not implied by variance control alone. It is satisfied, for instance,
by square losses
$
f_t(x) = (a_t^\top x - b_t)^2
$
with sub-Gaussian measurements $(a_t,b_t)$.
\paragraph{Stochastic constraints (Theorem 3).}
For the constrained setting, we relax strong convexity to general convexity
and impose the following on the $m$ constraint functions.
\begin{assumption}[Stochastic Constraints]
At each round $t$, the constraint functions
$g_t^{(1)},\ldots,g_t^{(m)}$ are drawn i.i.d.\ from a distribution
$\mathcal{D}_g$, independently of $\mathcal{F}_{t-1}$, and satisfy:
\begin{itemize}
\item[(a)] {Convexity.}
$\bar g^{(i)}(x) = \mathbb{E}[g_t^{(i)}(x)]$ is convex for each $i \in [m]$.
\item[(b)] {Slater condition.}
There exists $\hat x \in \mathrm{int}(\mathcal{K})$ and $\zeta > 0$
such that $\bar g^{(i)}(\hat x) \le -\zeta$ for all $i$.
\item[(c)] {Regularity.}
Each $g_t^{(i)}$ is $L_g$-Lipschitz, and
$g_t^{(i)}(x)-\bar g^{(i)}(x)$ is $\sigma_g$-sub-Gaussian
for each fixed $x$.
\item[(d)] {Uniform boundedness.}
$|g_t^{(i)}(x)| \le B_g$ a.s.\ for all $x\in\mathcal{K}$,
$i\in[m]$, and $t\in[T]$.
\end{itemize}
\end{assumption}

Conditions $(a)--(c)$ are standard in the stochastic constrained OCO
literature \cite{mahdavi2012trading,yu2017online}.
Condition $(d)$ provides the almost-sure boundedness required for
Freedman's inequality; it is satisfied whenever the constraint
functions take values in a bounded range and implies
$
|g_t^{(i)}(x)-\bar g^{(i)}(x)| \le 2B_g \quad \text{a.s.}
$
We note that Lipschitz continuity on a compact domain alone yields only
the oscillation bound
$
|h(x)-h(y)| \le 2L_g D
$
for the noise function $h=g_t^{(i)}-\bar g^{(i)}$,
not the pointwise bound $|h(x)|\le 2L_g D$; Condition (d) closes this gap.

\rev{We emphasize that the assumptions are modular rather than cumulative. The noise-adaptive full-information bound uses Assumptions 1--6; the variance-scaling refinement additionally uses Assumption 7; and the constrained result uses the common bounded-gradient/domain conditions together with Assumption 8. Thus no single theorem requires all eight assumptions at once. Assumption 7 is mainly intended for stochastic ERM-style losses whose centered loss differences shrink with the distance to the population optimum, for example bounded-feature square losses or regularized generalized linear losses after standard clipping or bounded-domain preprocessing. Assumption 8(d) is likewise natural in operational applications where constraints represent bounded budgets, normalized risk scores, latency penalties, or clipped safety signals. These examples delineate the intended scope of the results and also clarify that heavy-tailed or fully adversarial constraint noise would require different concentration tools.}

\section{Main Results}
We present the three main results and their corollaries. For each theorem, we state the result, discuss its structure and implications, and provide a proof sketch highlighting the key steps. Complete proofs with all auxiliary lemmas are given in Appendices A--D.
\rev{For readability, the section can be viewed as three independent modules: Theorem~1 treats full-information stochastic gradients, Theorem~2 isolates the additional confidence cost of bandit feedback, and Theorem~3 studies stochastic long-run constraints. The shared theme is not a single algorithmic setting but the concentration mechanism needed to obtain high-probability guarantees.}
\subsection{Noise-Adaptive High-Probability Regret Bound}
Consider projected OGD with stochastic gradients and step size $\eta_t = 1/(\alpha t)$: $x_{t+1} = \Pi_{\mathcal{K}}(x_t - \frac{1}{\alpha t} g_t)$.
\begin{theorem}[Noise-Adaptive High-Probability Regret]
Under Assumptions 1--6, for all $\delta_0, \delta_1 \in (0,1)$,
$$\mathbb{P}\!\left(R_T \leq \frac{B_{\nabla}(\delta_0)^2}{2\alpha}(1 + \log T) + \sigma D\sqrt{2T\log\frac{1}{\delta_1}}\right) \geq 1 - \delta_0 - \delta_1,$$
where $B_\xi(\delta_0) = \sigma\sqrt{2d\log(2dT/\delta_0)}$ and $B_{\nabla}(\delta_0) = G + B_\xi(\delta_0)$. Setting $\delta_0 = \delta_1 = \delta/2$ yields confidence $1 - \delta$.
\end{theorem}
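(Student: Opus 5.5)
The plan is to split the regret into a deterministic OGD telescoping part and the martingale $M_T=\sum_{t=1}^T\langle \xi_t, x^\star - x_t\rangle$, and to bound them on two separate events whose failure probabilities are $\delta_0$ and $\delta_1$; a union bound then gives the claim.

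\textbf{Deterministic part.} By Assumption~1,
\[
f_t(x_t)-f_t(x^\star)\le \langle \nabla f_t(x_t), x_t-x^\star\rangle-\tfrac{\alpha}{2}\|x_t-x^\star\|^2 ,
\]
and we write $\nabla f_t(x_t)=g_t-\xi_t$. Nonexpansiveness of $\Pi_{\mathcal K}$ gives
\[
\|x_{t+1}-x^\star\|^2\le \|x_t-x^\star\|^2-2\eta_t\langle g_t,x_t-x^\star\rangle+\eta_t^2\|g_t\|^2 .
\]
Rearranging, summing with $\eta_t=1/(\alpha t)$, and noting that $\frac{1}{2\eta_t}-\frac{1}{2\eta_{t-1}}-\frac{\alpha}{2}=0$, the distance terms telescope. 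We are left with
\[
R_T\le \sum_t \frac{\|g_t\|^2}{2\alpha t}+\sum_t\langle \xi_t,x^\star-x_t\rangle .
\]
This inequality is pathwise. Note that $x^\star$ is deterministic by Assumption~4, since the losses are fixed in advance.

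\textbf{Event $\mathcal E_0$.} On this event we want $\|\xi_t\|\le B_\xi(\delta_0)$ for all $t$. For each coordinate, Assumption~6 with $v=\pm e_j$ and a Chernoff bound give $\mathbb P(|\langle\xi_t,e_j\rangle|>u\mid\mathcal F_{t-1})\le 2e^{-u^2/(2\sigma^2)}$. Choose $u=\sigma\sqrt{2\log(2dT/\delta_0)}$ and take a union bound over the $d$ coordinates and $T$ rounds; the total failure probability is at most $\delta_0$. On $\mathcal E_0$ we have $\|\xi_t\|\le\sqrt d\,u=B_\xi(\delta_0)$, so $\|g_t\|\le G+B_\xi(\delta_0)=B_\nabla(\delta_0)$. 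Using $\sum_t 1/t\le 1+\log T$, the first term is at most $\frac{B_\nabla^2}{2\alpha}(1+\log T)$. Crucially, this event is used only to bound that term; it is never used to condition the martingale. This is exactly the point the paper stresses about truncation.

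\textbf{Event $\mathcal E_1$: exponential supermartingale.} Fix $s>0$ and define
\[
Z_t=\exp\Bigl(s\sum_{k\le t}\langle\xi_k,x^\star-x_k\rangle-\tfrac{s^2\sigma^2}{2}\sum_{k\le t}\|x^\star-x_k\|^2\Bigr).
\]
Since $x_t$ is $\mathcal F_{t-1}$-measurable, apply Assumption~6 with the unit vector $v=(x^\star-x_t)/\|x^\star-x_t\|$ (the case $x_t=x^\star$ is trivial) and parameter $s\|x^\star-x_t\|$. This gives $\mathbb E[Z_t\mid\mathcal F_{t-1}]\le Z_{t-1}$, so $\mathbb E Z_T\le 1$. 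Because $\|x^\star-x_k\|\le D$, the subtracted term is at most $s^2\sigma^2TD^2/2$. Markov's inequality then yields
\[
\mathbb P\bigl(M_T\ge \tfrac{\log(1/\delta_1)}{s}+\tfrac{s\sigma^2D^2T}{2}\bigr)\le\delta_1 .
\]
Choosing the deterministic $s=\sqrt{2\log(1/\delta_1)}/(\sigma D\sqrt T)$ gives the threshold $\sigma D\sqrt{2T\log(1/\delta_1)}$.

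\textbf{Main obstacle.} The delicate step is this supermartingale construction: the conditional sub-Gaussian parameter $\sigma\|x^\star-x_t\|$ is random, but it is predictable, so it enters the compensator rather than requiring bounded differences. Replacing it by the deterministic upper bound $D$ is what lets us fix $s$ in advance, avoiding any need for a stitching or union argument over $s$. Combining the two events, on $\mathcal E_0\cap\mathcal E_1$, which has probability at least $1-\delta_0-\delta_1$, gives the stated bound.
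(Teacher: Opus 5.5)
Your proposal is correct and follows the paper's proof essentially step for step: the same pathwise OGD decomposition (Lemma 2), the coordinate-wise truncation event used only to bound $S_T$ (Lemma 3), and the same exponential supermartingale with predictable variance proxy $\sigma^2\|x^\star-x_t\|^2\le\sigma^2D^2$, a deterministic choice of the tilt parameter, Markov's inequality, and a final union bound. Your explicit treatment of the degenerate case $x_t=x^\star$, where the unit vector $v_t$ is undefined, is a small tidy-up that the paper omits.
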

The two probability budgets are fully separated. The parameter $\delta_0$ controls the truncation of sub-Gaussian noise (bounding $\|g_t\|$ on a high-probability event) and appears only in the first term; $\delta_1$ controls the martingale tail and appears only in the second. Expanding the first term reveals the structure:
$$\frac{B_{\nabla}(\delta_0)^2}{2\alpha}(1+\log T) = {\frac{G^2}{2\alpha}(1+\log T)} $$
$$+ {\frac{G\sigma\sqrt{2d\log(2dT/\delta_0)} + d\sigma^2\log(2dT/\delta_0)}{\alpha}(1+\log T)}.$$
The deterministic term recovers the classical rate with exact gradients. The truncation cost is dominated by the cross term $G\sigma\sqrt{d\log(dT/\delta_0)}\cdot\log T/\alpha$ in the typical regime $G \gg \sigma\sqrt{d\log(dT/\delta_0)}$. Compared with the Azuma--Hoeffding baseline, which bounds the martingale deviation by $GD\sqrt{2T\log(1/\delta)}$, Theorem 1 replaces $G$ with $\sigma$ in this term---a multiplicative improvement of $G/\sigma$ when $\sigma \ll G$. \rev{Thus the theorem should be read as a noise-adaptive refinement of the martingale component, not as a uniform domination of the classical bound in every parameter regime. The gain is most relevant when oracle noise is substantially smaller than the deterministic gradient scale, while for $\sigma$ comparable to $G$ the guarantee recovers the same qualitative order up to the explicit truncation terms.}\\
\textbf{Proof sketch.} (See Appendix A for the complete proof.)\\
The standard strongly convex OGD analysis with $\eta_t = 1/(\alpha t)$ yields the pathwise inequality
$$R_T \leq \sum_{t=1}^T \frac{\|g_t\|^2}{2\alpha t} + \sum_{t=1}^T \langle \xi_t, x^\star - x_t \rangle.$$
Write the two sums as $S_T$ and $M_T$, respectively. The first is the deterministic gradient-norm sum, and the second is a martingale. This decomposition follows from the non-expansiveness of projection, strong convexity, and telescoping with the step-size schedule $\eta_t = 1/(\alpha t)$ (Lemma 2, Appendix A.1). The two terms are bounded separately with independent probability budgets. By the coordinate-wise sub-Gaussian tail bound and a union over $d$ coordinates and $T$ rounds (Lemma 3, Appendix A.2), the event $\mathcal{E}_0(\delta_0) = \{\|\xi_t\| \leq B_\xi(\delta_0), \forall t\}$ holds with probability $\geq 1 - \delta_0$. On this event, $\|g_t\| \leq G + B_\xi(\delta_0) = B_{\nabla}(\delta_0)$, giving $S_T \leq B_{\nabla}(\delta_0)^2(1+\log T)/(2\alpha)$. Define $D_t = \langle \xi_t, x^\star - x_t \rangle$. Since $x_t$ and $x^\star$ are $\mathcal{F}_{t-1}$-measurable and $\xi_t$ is conditionally $\sigma$-sub-Gaussian (Assumption 6), the difference $D_t$ is conditionally sub-Gaussian with parameter $\sigma_t = \sigma\|x^\star - x_t\| \leq \sigma D$. Crucially, this bound is deterministic via the diameter $D$---no conditioning on random events is needed. The process $Z_t = \exp(\lambda \sum_{s \leq t} D_s - \frac{\lambda^2}{2}\sum_{s \leq t} \sigma_s^2)$ is a non-negative supermartingale with $Z_0 = 1$. Markov's inequality on $Z_T$ and optimization over $\lambda$ yields $\mathbb{P}(M_T \geq \sigma D\sqrt{2T\log(1/\delta_1)}) \leq \delta_1$. This argument uses only the conditional MGF bound and bypasses the bounded-difference requirement entirely. Combining the two events via a union bound gives the stated result.
\subsection{Information-Theoretic Lower Bound under Bandit Feedback}
\begin{theorem}
[Minimax Lower Bound for Bandit Strongly Convex OCO] For any bandit algorithm $\mathcal{A}$ in dimension $d = 1$, there exists an environment in $\mathcal{E}(\alpha, G, D, 1)$ such that for all $\delta \in [T^{-c_\star/\log 2}, 1/4]$:
$$\mathbb{P}_{\mathcal{A}}\!\left(R_T \geq \frac{r_0}{8c_\star}\log\frac{1}{\delta}\right) \geq \delta,$$
where $r_0 = \frac{\sigma_\varepsilon^2 \log 2}{2\alpha D^2}$ and $c_\star = d_{\mathrm{KL}}(1/8\,\|\,1/4) \approx 0.0469$. The proof establishes the stronger form $\geq 1-\delta$; the stated form follows since $\delta \leq 1/4$.
\end{theorem}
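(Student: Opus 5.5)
We prove the stronger form $\mathbb{P}_{\mathcal{A}}(R_T \geq \frac{r_0}{8c_\star}\log\frac1\delta)\geq 1-\delta$ by a two-point testing argument in dimension one, organised around epochs. Take $\mathcal{K}=[0,D]$. Consider quadratic losses $f_t(x)=\frac{\alpha}{2}(x-\theta)^2$ whose bandit observations are corrupted by Gaussian noise of variance $\sigma_\varepsilon^2$. We use additive noise and truncate the domain slightly inside the boundary, so the per-round KL stays finite and we avoid the multiplicative-noise singularity mentioned in the related work. We fix two hypotheses $\theta_\pm$ separated by a gap $\Delta$, chosen so that each round's observation carries KL information about $\theta$ of order at most $\alpha^2 D^2\Delta^2/\sigma_\varepsilon^2$. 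The parameters are calibrated so that an epoch of the natural length makes the per-epoch KL between the two hypotheses at most $c_\star$. The constant $c_\star=d_{\mathrm{KL}}(1/8\,\|\,1/4)$ then enters as follows: a test whose error is below $1/8$ under one hypothesis must, by the data-processing inequality applied to the Bernoulli indicator, have error at least $1/4$ under the other unless the KL exceeds $c_\star$.

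\textbf{Step 1 (per-epoch regret).} Split $[T]$ into $K$ epochs. In each epoch, if the learner's average decision lies on the wrong side of the midpoint $(\theta_++\theta_-)/2$, strong convexity forces that epoch's regret to be at least $\frac{\alpha}{2}(\Delta/2)^2\times(\text{epoch length})$. The calibration is chosen so that this equals the constant $r_0=\sigma_\varepsilon^2\log 2/(2\alpha D^2)$ up to a universal factor. Because $r_0$ is independent of $T$, each bad epoch costs a constant amount.

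\textbf{Step 2 (per-epoch error probability).} Let $B_k$ be the event that epoch $k$ is bad. Using Bretagnolle--Huber together with the Bernoulli data-processing bound, we show that under the least favourable hypothesis $\mathbb{P}(B_k\mid\mathcal{F}_{\text{start of }k})\ge 1/8$. This lower bound must hold conditionally on the past. For that, we refresh $\theta$ independently in each epoch (a piecewise-stationary oblivious environment, still $\alpha$-strongly convex and $G$-Lipschitz for $G\ge\alpha D$), and we run the argument against a Bayesian prior that is uniform over $\{\theta_+,\theta_-\}^K$. Past epochs carry no information about the current $\theta_k$, so the conditional bound holds almost surely.

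\textbf{Step 3 (from many epochs to a tail).} Conditional domination by i.i.d.\ Bernoulli$(1/8)$ variables, formalised through a standard coupling on the filtration, shows that $N=\sum_k \mathbf{1}_{B_k}$ stochastically dominates $\mathrm{Bin}(K,1/8)$. A Chernoff lower tail then gives $\mathbb{P}(N< K/16)\le e^{-K c}$ for a constant $c$ expressible through the same KL quantity. We choose $K=\lceil\log(1/\delta)/c_\star\rceil$ epochs; the condition $\delta\ge T^{-c_\star/\log 2}$ ensures that $K$ epochs fit within $T$ rounds, each of length at least one. With probability at least $1-\delta$ we get $N\ge K/16$, hence $R_T\ge N r_0\gtrsim \frac{r_0}{8c_\star}\log\frac1\delta$ after matching constants. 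We also need the regret against the best fixed point in hindsight, not against the per-epoch minimisers. To handle this, we add a fixed large quadratic anchor so that the comparator sits at the midpoint, and epoch regret is then measured relative to that point. Finally, averaging over the prior produces a fixed environment achieving the bound, by the usual probabilistic-method step.

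\textbf{Main obstacle.} The hardest step is the conditional, adaptive version of Step 2 combined with Step 3. The algorithm's behaviour in epoch $k$ depends on everything observed before, so a naive product structure fails, and the bound $\mathbb{P}(B_k\mid\text{past})\ge1/8$ must be proved pathwise. I would first attempt it through the independent-prior construction, reducing each epoch to a fresh two-point test with the past as side information. I would then verify that the comparator-shift term does not cancel the per-epoch losses, since that is where the constants $8$ and $c_\star$ have to be tracked carefully.
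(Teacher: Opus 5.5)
The genuine gap in your proposal is the step $R_T\ge N r_0$; the anchor does not close it, and the paper's proof has the same hole.

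Your architecture is the paper's: epochs, a two-point Bretagnolle--Huber test per epoch, conditional coupling to a binomial, then Chernoff. Your uniform prior over $\{\theta_+,\theta_-\}^K$ is the right way to make $\mathbb{P}(B_k\mid\text{past})$ bounded below almost surely. The paper's min--max use of Bretagnolle--Huber does not deliver that bound for a fixed environment.

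The failure is in linking bad epochs to regret. Epoch regrets measured against each epoch's own minimizer sum to an \emph{upper} bound on the static regret, since $\sum_k\min_x\sum_{t\in k}f_t(x)\le\min_x\sum_t f_t(x)$. Concretely, take equal epoch lengths $n$, prior signs $s_k=\pm1$, and your per-bad-epoch cost $r_1=\frac{\alpha}{2}(\Delta/2)^2n$. Consider the learner that ignores all feedback and plays $m+\epsilon$ with $\epsilon$ tiny. It has $N\approx K/2$ bad epochs. Yet its static regret is $\frac{\alpha}{2}Kn(m+\epsilon-\bar\theta)^2\approx r_1(\sum_k s_k)^2/K$, which under the prior has mean about $r_1$ and tail $\lesssim 2e^{-u/2}$ at level $u r_1$. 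So the Bayesian high-probability bound your Step 3 needs is false for this learner, and the probabilistic-method step has nothing to average.

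The anchor does not repair this. Since $\frac{\alpha}{2}(x-\theta_k)^2+\frac{A}{2}(x-m)^2=\frac{\alpha+A}{2}(x-c_k)^2+\mathrm{const}$ with $c_k=m+\frac{\alpha}{\alpha+A}(\theta_k-m)$, you get the same family rescaled, and the comparator is still the time average of the $c_k$. Even if the comparator sat exactly at $m$, regret relative to $m$ is negative on every epoch where the learner leans the right way. Good epochs then cancel bad ones, and the pure midpoint player would have $R_T=0$.

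Following the paper will not fix this. It scores epoch $k$ against $\Delta_k$ and writes $R_T\ge E\cdot r_0$. In its doubling family the learner $x_t\equiv0$ has static regret at most $\frac{\alpha}{2T'}\bigl(\sum_k T_k\theta_k\Delta_k\bigr)^2\le 2(3+2\sqrt2)\,r_0$ over the $T'=2^K-1$ constructed rounds, whatever the $\theta_k$. The missing idea is a construction in which every round's minimizer is the hindsight comparator, for instance a single unknown stationary $\theta$, so that $R_T=\frac{\alpha}{2}\sum_t(x_t-\theta)^2$ and epoch costs genuinely add. But then epochs share information about $\theta$, and your Step~2 shortcut (a fresh $\theta_k$ with an uninformative past) is gone. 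The conditional per-epoch error must then be re-derived with the accumulated information accounted for. That tension, not constant tracking, is the real obstacle.

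Separately, the stated constants do not come out of your route. With conditional error $1/8$ you dominate $\mathrm{Bin}(K,1/8)$, whose lower-tail exponent at $K/16$ is $d_{\mathrm{KL}}(1/16\|1/8)<c_\star/2$. So $K=\lceil\log(1/\delta)/c_\star\rceil$ gives failure probability $\delta^{c/c_\star}$ with $c/c_\star<1/2$, not $\delta$. Calibrating the per-epoch KL to $c_\star$ also shrinks the per-epoch cost to $\frac{c_\star}{2\log 2}r_0$. In the statement, $c_\star$ is simply the Chernoff exponent for $\mathrm{Bin}(K,1/4)$ falling below $K/8$; your data-processing reading would need $d_{\mathrm{KL}}(1/8\|3/4)$. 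The paper reaches per-epoch error $1/4$ only by reading $P_0(A)+P_1(A^c)\ge\frac14$ as a bound on the maximum, which actually gives $1/8$. Finally, the environment is fixed before $\delta$, so fix $K$ once and read off all admissible $\delta$ rather than letting $K$ depend on $\delta$.
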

The admissible range $\delta \geq T^{-c_\star/\log 2} \approx T^{-0.068}$ links the confidence parameter to the time horizon; at the boundary, the lower bound recovers $\Omega(\frac{\sigma_\varepsilon^2}{\alpha D^2}\log T)$ with polynomially small probability. \rev{This range restriction is a limitation of the epoch-based testing construction: it certifies the linear confidence cost over polynomially small failure probabilities, rather than over arbitrarily tiny $\delta$.} Comparing with Theorem 1: in the full-information setting, the confidence cost scales as $\sqrt{\log(1/\delta)}$ through the martingale deviation; in the bandit setting, it scales as $\log(1/\delta)$---linearly. This is a genuine information-theoretic separation: the bandit learner pays linearly for confidence, while the full-information learner pays only a square-root cost. The extension to $d \geq 2$ follows by Assouad's method.\\
\textbf{Proof sketch.} (See Appendix B for the complete proof.)\\
On $\mathcal{K} = [0, D]$, partition $[T]$ into $K = \lfloor\log_2 T\rfloor$ epochs of doubling length $T_k = 2^k$. In each epoch $k$, define a binary hypothesis: $\theta_k = 0$ gives $f_t(x) = \frac{\alpha}{2}x^2 + \varepsilon_t$ and $\theta_k = 1$ gives $f_t(x) = \frac{\alpha}{2}(x - \Delta_k)^2 + \varepsilon_t$, where $\varepsilon_t \sim \mathcal{N}(0, \sigma_\varepsilon^2)$ is additive noise independent of $x$. The additive design ensures constant observation variance $\sigma_\varepsilon^2$ under both hypotheses, avoiding the KL singularity that arises under multiplicative-noise models. Both hypotheses are $\alpha$-strongly convex with $G = \alpha D$, since $x, \theta_k \in [0, D]$.

By the KL divergence formula for Gaussians with equal variance (Lemma 5, Appendix B.1), $D_{\mathrm{KL}}^{(k)} \leq \alpha^2 \Delta_k^2 D^2 T_k / (2\sigma_\varepsilon^2)$. Choosing $\Delta_k = c_0 \sigma_\varepsilon / (\alpha D\sqrt{T_k})$ with $c_0^2 = 2\log 2$ gives $D_{\mathrm{KL}}^{(k)} \leq \log 2$ and a per-epoch regret gap $r_0 = c_0^2 \sigma_\varepsilon^2 / (4\alpha D^2) = \sigma_\varepsilon^2 \log 2/(2\alpha D^2)$, independent of $k$.

The Bretagnolle--Huber lemma gives $\min_A \max(P_0(A), P_1(A^c)) \geq \frac{1}{2}e^{-D_{\mathrm{KL}}} = 1/4$ for any test $A$, including tests defined by adaptive algorithms using all prior observations. Therefore $\mathbb{P}(\text{error in epoch } k \mid H_{k-1}) \geq 1/4$ almost surely.

Since the error indicators $E_k$ satisfy $\mathbb{P}(E_k = 1 \mid E_0, \ldots, E_{k-1}) \geq 1/4$ a.s. (by the tower property applied to the finer filtration $H_{k-1}$), the total error count $E = \sum_k E_k$ stochastically dominates $\mathrm{Bin}(K, 1/4)$ by the conditional coupling lemma. The total regret satisfies $R_T \geq E \cdot r_0$. The multiplicative Chernoff bound gives $\mathbb{P}(E \geq K/8) \geq 1 - e^{-c_\star K}$. Setting $\delta = e^{-c_\star K}$ yields $\mathbb{P}(R_T \geq r_0 \log(1/\delta)/(8c_\star)) \geq 1 - \delta$.
\subsection{High-Probability Joint Guarantee for Constrained OCO}
Consider primal-dual OGD with convex (not necessarily strongly convex) $G_f$-Lipschitz losses and $m$ stochastic constraints:
$$x_{t+1} = \Pi_{\mathcal{K}}\!\left(x_t - \eta\Big(\nabla f_t(x_t) + \textstyle\sum_{i=1}^m \lambda_t^{(i)} \nabla g_t^{(i)}(x_t)\Big)\right),  \lambda_{t+1}^{(i)} = \left[\lambda_t^{(i)} + \mu\, g_t^{(i)}(x_t)\right]_+$$
with $\eta = D/[(G_f + B_g L_g\sqrt{T})\sqrt{T}]$, $\mu = 1/\sqrt{T}$, and $\lambda_1 = 0$.
\begin{theorem}
[High-Probability Joint Regret and Constraint Violation] Under Assumptions 2--4 and 8, for all $\delta, \delta_1, \delta_2 \in (0,1)$:
$$\mathbb{P}\!\left(R_T \leq C_R\sqrt{T\log\frac{m}{\delta}} \;\;\text{and}\;\; \hat{V}_T \leq \frac{C_{\det}\sqrt{T}}{\zeta\,\delta_1} + C_V\, m\sqrt{T\log\frac{m}{\delta_2}}\right) \geq 1 - \delta - \delta_1 - \delta_2,$$
where $C_R, C_V, C_{\det} > 0$ depend on $G_f, L_g, D, B_g, \sigma_g, m$. Setting $\delta_1 = \delta_2 = \delta$ gives confidence $1-3\delta$. The expected violation satisfies unconditionally:
$$\mathbb{E}[\hat{V}_T] \leq m\, C_{\det}\sqrt{T}\,/\,\zeta.$$
\end{theorem}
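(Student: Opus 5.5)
The argument is a primal-dual drift analysis. Its pathwise part is carried out exactly as in the standard Lagrangian analysis of Mahdavi et al.\ and Yu--Neely. The three probability budgets $\delta,\delta_1,\delta_2$ are then assigned to three separate stochastic terms, and the events are combined by a union bound.

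\textbf{Step 1 (pathwise primal-dual inequality).} Let $\mathcal{L}_t(x,\lambda)=f_t(x)+\sum_i\lambda^{(i)}g_t^{(i)}(x)$. Non-expansiveness of $\Pi_{\mathcal{K}}$ and convexity of $\mathcal{L}_t(\cdot,\lambda_t)$ give, for any $x\in\mathcal{K}$,
\[
\mathcal{L}_t(x_t,\lambda_t)-\mathcal{L}_t(x,\lambda_t)\le \frac{\|x_t-x\|^2-\|x_{t+1}-x\|^2}{2\eta}+\frac{\eta}{2}\Big\|\nabla f_t(x_t)+\textstyle\sum_i\lambda_t^{(i)}\nabla g_t^{(i)}(x_t)\Big\|^2 .
\]
On the dual side, expanding $(\lambda_{t+1}^{(i)}-\lambda^{(i)})^2$ and using non-expansiveness of $[\cdot]_+$ gives
\[
\sum_i(\lambda^{(i)}-\lambda_t^{(i)})g_t^{(i)}(x_t)\le \frac{\|\lambda_t-\lambda\|^2-\|\lambda_{t+1}-\lambda\|^2}{2\mu}+\frac{\mu}{2}mB_g^2 .
\]
Summing both inequalities and telescoping yields a master inequality in terms of $\sum_t[f_t(x_t)-f_t(x)]$, $\sum_t\sum_i\lambda^{(i)}g_t^{(i)}(x_t)$, $\sum_t\sum_i\lambda_t^{(i)}g_t^{(i)}(x)$, and a gradient term. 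The gradient term is controlled by the crude bound $\|\lambda_t\|_\infty\le \mu B_g T=B_g\sqrt T$, since each dual increment is at most $\mu B_g$. This bound is exactly what the choice $\eta=D/[(G_f+B_gL_g\sqrt T)\sqrt T]$ is designed to absorb. With these parameter choices, the primal and dual telescoped terms and the gradient term are all $O(\sqrt T)$ with constants depending on $G_f,L_g,D,B_g,m$.

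\textbf{Step 2 (regret with budget $\delta$).} Take $\lambda=0$ and $x=x^\star$. The only problematic term is $-\sum_t\sum_i\lambda_t^{(i)}g_t^{(i)}(x^\star)$. Write $g_t^{(i)}(x^\star)=\bar g^{(i)}(x^\star)+h_t^{(i)}$, where $h_t^{(i)}=g_t^{(i)}(x^\star)-\bar g^{(i)}(x^\star)$. Two cases arise:
\begin{itemize}
\item If $x^\star$ is taken from the feasible set, so that $\bar g^{(i)}(x^\star)\le0$, the mean part has a favourable sign. This is the comparator convention I would adopt, and it is consistent with the constrained OCO literature.
\item The noise part is handled per constraint $i$. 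Since $\lambda_t^{(i)}$ is $\mathcal{F}_{t-1}$-measurable and $g_t$ is independent of the past, $\lambda_t^{(i)}h_t^{(i)}$ is a martingale difference bounded by $2B_g\cdot B_g\sqrt T$.
\end{itemize}
This is where the dual bound bites: a naive Azuma bound gives $T\sqrt{\log}$, which is too big. I would instead use a tighter dual bound. With $\mu=1/\sqrt T$, Slater drift shows $\lambda_t$ stays $O(1/\zeta)$ in typical behaviour, but pathwise only $O(\sqrt T)$ is guaranteed. So my first attempt is to run the supermartingale argument of Theorem~1 with the scaled increments $\mu h_t^{(i)}$. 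Because the relevant quantity in the master inequality after rescaling is $\mu\sum_t\lambda_t h_t$, the effective conditional sub-Gaussian parameter is $\sigma_g\mu\|\lambda_t\|\le \sigma_g B_g$, which gives $O(\sqrt{T\log(m/\delta)})$ after a union bound over the $m$ constraints. Getting the scaling right here is the main obstacle, and I expect to need to re-balance the step sizes so that the dual-weighted noise enters with a factor $\mu$. Together with the $O(\sqrt T)$ deterministic terms, this gives $R_T\le C_R\sqrt{T\log(m/\delta)}$.

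\textbf{Step 3 (violation with budgets $\delta_1,\delta_2$).} Use the dual recursion directly: $\lambda_{T+1}^{(i)}\ge\mu\sum_t g_t^{(i)}(x_t)$, so that $[\sum_tg_t^{(i)}(x_t)]_+\le\sqrt T\,\lambda_{T+1}^{(i)}$. It therefore suffices to bound $\lambda_{T+1}$.
\begin{itemize}
\item \emph{Expectation.} Plug $x=\hat x$ into the master inequality, or equivalently run the drift analysis. Taking expectations and using the Slater condition $\bar g^{(i)}(\hat x)\le-\zeta$ gives $\mathbb{E}\sum_i\sum_t g^{(i)}_t(x_t)$-type control and $\mathbb{E}\|\lambda_{T+1}\|\le C_{\det}/\zeta$. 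This yields the stated bound $\mathbb{E}[\hat V_T]\le mC_{\det}\sqrt T/\zeta$.
\item \emph{Deterministic part in probability (budget $\delta_1$).} Apply Markov's inequality to the nonnegative deterministic-drift component, which gives the $C_{\det}\sqrt T/(\zeta\delta_1)$ term.
\item \emph{Stochastic part (budget $\delta_2$).} The fluctuation $\sum_t(g_t^{(i)}(x_t)-\bar g^{(i)}(x_t))$ consists of martingale differences bounded by $2B_g$ by Assumption~8(d). Azuma or Freedman per constraint, with a union over $i$ and a sum over $i$, gives $C_Vm\sqrt{T\log(m/\delta_2)}$.
\end{itemize}

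Finally, the union bound over the three events yields probability at least $1-\delta-\delta_1-\delta_2$.
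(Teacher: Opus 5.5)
Your proposal has a genuine gap, and it is essentially the one you flag yourself. All three steps hinge on controlling the dual iterates at scale $O(1)$ in $T$, but the only bound you establish is the pathwise $\|\lambda_t\|_\infty\le\mu B_gT=B_g\sqrt T$.

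\textbf{Step~1.} With that bound, Step~1 already fails. By AM--GM, $\frac{D^2}{2\eta}+\frac{\eta}{2}T(G_f+\Lambda L_g)^2\ge D(G_f+\Lambda L_g)\sqrt T$ for every $\eta$, and this is $\Theta(T)$ when $\Lambda=B_g\sqrt T$. With the prescribed $\eta$, $D^2/(2\eta)$ contains $\tfrac12DB_gL_gT$. So these terms are not $O(\sqrt T)$.

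\textbf{Step~2.} Step~2 fails for the same reason. The term $\sum_t\lambda_t^{(i)}h_t^{(i)}$ enters the bound on $R_T$ with coefficient one, whatever $\eta,\mu$ are. Rescaling the master inequality by $\mu$ rescales $R_T$ as well, so your estimate $\mu\sum_t\lambda_t^{(i)}h_t^{(i)}=O(\sigma_gB_g\sqrt{T\log(m/\delta)})$ unscales to $O(\sigma_gB_gT\sqrt{\log(m/\delta)})$. That is the Azuma answer again. Shrinking $\mu$ so that $\lambda_t$ stays small would wreck the violation bound. Closing this needs $\max_t\|\lambda_t\|\le\Lambda'=O(1)$ on an event of probability $1-O(\delta)$, followed by a stopped-martingale argument as in Lemma~4.

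\textbf{The paper's route.} The paper avoids time-varying weights altogether. It concentrates $\mathcal M_T^R=\sum_t\sum_i\lambda^\star_i[g_t^{(i)}(x_t)-\bar g^{(i)}(x_t)]$ with a fixed multiplier $\lambda^\star$, whose increments are at most $2\lambda^\star_{\max}B_g$ and whose quadratic variation is deterministic, and then applies Freedman. This is possible only because the paper's (T3.1) drops the term $-\sum_t\sum_i\lambda_t^{(i)}g_t^{(i)}(x)$, which you correctly keep and which reappears in (T3.CPD). Its $\Gamma_T=D(G_f+\Lambda L_g)\sqrt T$ also has the same $\Theta(T)$ defect. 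Importing that route would therefore not repair your Step~2.

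\textbf{Step~3.} Your reduction $[\sum_tg_t^{(i)}(x_t)]_+\le\lambda_{T+1}^{(i)}/\mu=\sqrt T\,\lambda_{T+1}^{(i)}$ is correct and genuinely different from the paper's split $P_T^{(i)}+\mathcal M_T^{V,i}$. It is pathwise and absorbs both the positive part and the constraint noise. So a bound $\mathbb E\|\lambda_{T+1}\|_1=O(1/\zeta)$ together with Markov's inequality would control the realized $\hat V_T$ directly, making the Freedman/$\delta_2$ step superfluous. Your ``Markov on the deterministic-drift component'' splices the paper's decomposition back in and is not a defined object in your route.

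The expectation bound itself is only asserted, however. Plugging $x=\hat x$, $\lambda=0$ into your master inequality and using the Slater condition gives
\[
\frac{\mathbb E\|\lambda_{T+1}\|^2}{2\mu}+\zeta\sum_t\mathbb E\|\lambda_t\|_1\le\Gamma_T-\mathbb E[A_T]+\frac{\mu mB_g^2T}{2},\qquad A_T=\sum_t[f_t(x_t)-f_t(\hat x)].
\]
Here $-A_T$ is only bounded by $G_fDT$, and it is genuinely of order $T$ when the algorithm does well and $\hat x$ is suboptimal. This yields only $\mathbb E\|\lambda_{T+1}\|=O(T^{1/4})$, i.e.\ $\mathbb E[\hat V_T]=O(T^{3/4})$. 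Reaching $O(1/\zeta)$ would require a drift argument on $\|\lambda_t\|$ that exploits the Slater margin over windows of rounds (as in Yu--Neely), and you do not supply one.

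The paper's expected-violation lemma hits the same $-\mathbb E[A_T]$ term. Its case analysis needs a lower bound on $\mathbb E[A_T]+\zeta\sum_{j,t}\mathbb E[\lambda_t^{(j)}]$, which (L7.4), being an upper bound, does not provide. It also bounds $\mathbb E[[P_T^{(i)}]_+]$ through $\mathbb E[P_T^{(i)}]$, although Jensen's inequality goes the other way.

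Your Freedman step for $\sum_t(g_t^{(i)}-\bar g^{(i)})(x_t)$ is fine and matches the paper's (T3.V)--(T3.Vunion).
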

The violation bound has two components with distinct statistical mechanisms and confidence dependences. The first term, $C_{\det}\sqrt{T}/(\zeta\delta_1)$, arises from converting the expected violation to high probability via Markov's inequality; the $1/\delta_1$ factor is tight for this conversion. The second term, $C_V m\sqrt{T\log(m/\delta_2)}$, controls the stochastic deviation of the constraint noise via Freedman's inequality, leveraging the almost-sure boundedness from Assumption 8(d). Improving the $1/\delta_1$ dependence to $\log(1/\delta_1)$ would require a pathwise bound on the cumulative expected violation; this remains an open problem. \rev{Assumption 8(d) is also used specifically to invoke bounded-difference concentration. Weakening it to sub-exponential, finite-variance, or heavy-tailed constraint noise would require different tools, such as truncation-free self-normalized bounds or robust primal-dual updates, and is an important direction beyond the present analysis.}\\
\textbf{Proof sketch.} (See Appendix C for the complete proof.)\\
The standard primal-dual OGD analysis yields $R_T \leq \mathcal{O}(\sqrt{T}) + \mathcal{M}_T^R$, where $\mathcal{M}_T^R = \sum_t \sum_i \lambda_i^\star [g_t^{(i)}(x_t) - \bar{g}^{(i)}(x_t)]$ is a martingale with a.s. bounded differences $|\cdot| \leq 2\lambda_{\max}^\star B_g$ (by Assumption 8(d)) and deterministic predictable quadratic variation $W_T^R \leq m(\lambda_{\max}^\star)^2 \sigma_g^2 T$. Freedman's inequality applied directly to $\mathcal{M}_T^R$ gives $R_T \leq C_R\sqrt{T\log(m/\delta)}$ with probability $\geq 1 - \delta$. The a priori dual bound $\Lambda = B_g\sqrt{T}$ follows from the dual update rule and Assumption 8(d).

For each constraint $i$, the cumulative constraint decomposes as $\sum_t g_t^{(i)}(x_t) = P_T^{(i)} + \mathcal{M}_T^{V,i}$, where $P_T^{(i)} = \sum_t \bar{g}^{(i)}(x_t)$ is the expected component and $\mathcal{M}_T^{V,i}$ is a zero-mean martingale. To bound $\mathbb{E}[[P_T^{(i)}]_+]$, we evaluate the primal-dual regret inequality at the Slater point $\hat{x}$ with a free dual parameter $\Lambda_0 e_i$ and take expectations: the tower property eliminates the martingale terms $\mathcal{M}_T^{V,i}$ and the Slater condition converts $\bar{g}^{(j)}(\hat{x}) \leq -\zeta$ into a lower bound on the cumulative dual-weighted constraint (Lemma 6, Appendix C.1). A matching upper bound from the dual OGD regret with $\lambda^\dagger = 0$ controls $\mathbb{E}[A_T] + \zeta\sum_{j,t}\mathbb{E}[\lambda_t^{(j)}] \leq \mathcal{O}(\sqrt{T})$. Combining and choosing $\Lambda_0 = \zeta$ yields $\mathbb{E}[\sum_i [P_T^{(i)}]_+] \leq C_{\det}\sqrt{T}/\zeta$. Markov's inequality then gives $\sum_i [P_T^{(i)}]_+ \leq C_{\det}\sqrt{T}/(\zeta\delta_1)$ with probability $\geq 1 - \delta_1$.

Each martingale $\mathcal{M}_T^{V,i}$ has differences bounded by $2B_g$ a.s. (Assumption 8(d)) and predictable quadratic variation $W_T^{V,i} \leq \sigma_g^2 T$. Freedman's inequality with a union bound over $m$ constraints and two-sided tails gives $\max_i |\mathcal{M}_T^{V,i}| \leq \mathcal{O}(\sqrt{T\log(m/\delta_2)})$ with probability $\geq 1 - \delta_2$. Using $[\sum_t g_t^{(i)}(x_t)]_+ \leq [P_T^{(i)}]_+ + |\mathcal{M}_T^{V,i}|$ and summing over $i$, a final union bound over the three independent events ($\delta$ for regret, $\delta_1$ for expected violation, $\delta_2$ for stochastic violation) yields the joint guarantee.
\subsection{Corollaries}
\begin{corollary}
[Variance-Scaling Refinement] Under Assumptions 1--7, the same projected OGD as in Theorem 1 satisfies: for all $\delta_0, \delta_1, \delta_2 > 0$ with $\delta_0 + \delta_1 + \delta_2 \leq \delta$,
$$\mathbb{P}\!\left(R_T \leq \frac{B_{\nabla}(\delta_0)^2}{2\alpha}(1+\log T) + \sigma_V(1+\log T)\sqrt{\frac{2Q'}{\delta_1}\log\frac{1}{\delta_2}}\right) \geq 1 - \delta,$$
where $Q' \leq C_e = 2(G^2 + d\sigma^2)/\alpha^2$ for $T \geq 3$.
\end{corollary}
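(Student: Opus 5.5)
Corollary~1 keeps the first term of Theorem~1 and replaces the martingale term. The target form $\sigma_V(1+\log T)\sqrt{(2Q'/\delta_1)\log(1/\delta_2)}$ suggests an exponential supermartingale step whose variance proxy is random and is itself bounded by Markov's inequality. I would use the pathwise decomposition of Lemma~2 (Appendix~A.1) together with the truncation event $\mathcal{E}_0(\delta_0)$ of Lemma~3. On $\mathcal{E}_0(\delta_0)$ the gradient-norm sum $S_T$ is at most $B_{\nabla}(\delta_0)^2(1+\log T)/(2\alpha)$, exactly as in Theorem~1. This uses the budget $\delta_0$.

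For the remaining stochastic part, I would not work with $\langle \xi_t, x^\star - x_t\rangle$. Instead I would use Assumption~7 and compare against the population optimum $x_F^\star$. Since $x^\star$ minimizes $\sum_t f_t$, we have $R_T \le \sum_t [f_t(x_t) - f_t(x_F^\star)]$. Write this sum as $\sum_t [F(x_t) - F(x_F^\star)] + \sum_t D_t'$, where
\[
D_t' = f_t(x_t)-f_t(x_F^\star)-F(x_t)+F(x_F^\star).
\]
The first sum is controlled by running the OGD telescoping against the fixed comparator $x_F^\star$; it contributes the $S_T$-type term, and strong convexity of $F$ absorbs the cross terms. Because $x_t$ is $\mathcal{F}_{t-1}$-measurable and $f_t$ is drawn i.i.d., Assumption~7 makes $D_t'$ conditionally sub-Gaussian with the random parameter $\sigma_V\|x_t - x_F^\star\|$. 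I would then reuse the supermartingale $Z_t$ from the proof of Theorem~1 with these random parameters:
\[
Z_t=\exp\Bigl(\lambda\sum_{s\le t}D_s'-\tfrac{\lambda^2\sigma_V^2}{2}\sum_{s\le t}\|x_s-x_F^\star\|^2\Bigr).
\]

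The main obstacle is the random quadratic variation $V_T=\sum_t\|x_t-x_F^\star\|^2$. I would bound it in expectation using the standard SGD rate under $\eta_t=1/(\alpha t)$. Strong convexity of $F$ and $\mathbb{E}\|g_t\|^2\le G^2+d\sigma^2$ give the recursion
\[
\mathbb{E}\|x_{t+1}-x_F^\star\|^2\le (1-2/t)\,\mathbb{E}\|x_t-x_F^\star\|^2+(G^2+d\sigma^2)/(\alpha^2t^2).
\]
By induction, for $t\ge 3$ this yields $\mathbb{E}\|x_t-x_F^\star\|^2\le C_e/t$ with $C_e=2(G^2+d\sigma^2)/\alpha^2$. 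I expect the constant bookkeeping to be the fussy part, since $F$ rather than each $f_t$ carries the curvature. The recursion gives $\mathbb{E}[V_T]\le C_e(1+\log T)$, so I define the normalized quantity $Q'=\mathbb{E}[V_T]/(1+\log T)\le C_e$. Markov's inequality then gives $V_T\le Q'(1+\log T)/\delta_1$ with probability at least $1-\delta_1$.

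Fixed-$\lambda$ optimization is not legitimate against a random bound, so I would use a stopping/peeling device. One option is to apply the supermartingale argument to the stopped process at $\tau=\min\{t: \sum_{s\le t+1}\|x_s-x_F^\star\|^2 > v\}$, which is predictable because $x_{t+1}$ is $\mathcal{F}_t$-measurable, with $v=Q'(1+\log T)/\delta_1$. On the Markov event we have $\tau\ge T$, and Ville's/Markov's inequality with the optimal $\lambda=\sqrt{2\log(1/\delta_2)/(\sigma_V^2v)}$ gives
\[
\sum_t D_t'\le \sigma_V\sqrt{2v\log(1/\delta_2)}
\]
with probability at least $1-\delta_2$. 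Substituting $v$ gives $\sigma_V\sqrt{(1+\log T)}\sqrt{2Q'\log(1/\delta_2)/\delta_1}$, which is dominated by the stated $\sigma_V(1+\log T)\sqrt{\cdots}$ since $1+\log T\ge1$. A union bound over $\delta_0,\delta_1,\delta_2$ with $\delta_0+\delta_1+\delta_2\le\delta$ then concludes.
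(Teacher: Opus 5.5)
\textbf{Gap: the pseudo-regret step.} Your bound on $\sum_t[F(x_t)-F(x_F^\star)]$ does not go through. Telescoping OGD against $x_F^\star$ with the curvature of $F$ gives
\[
\sum_{t=1}^T[F(x_t)-F(x_F^\star)]\le S_T+\sum_{t=1}^T\langle g_t-\nabla F(x_t),\,x_F^\star-x_t\rangle .
\]
With $\eta_t=1/(\alpha t)$, the term $-\frac{\alpha}{2}e_t$ is used up exactly by the telescoping, since $\frac{\alpha t}{2}-\frac{\alpha(t-1)}{2}-\frac{\alpha}{2}=0$. So no curvature is left to absorb the cross sum. That sum is a martingale whose increments contain $\langle\nabla f_t(x_t)-\nabla F(x_t),x_F^\star-x_t\rangle$, which can be as large as $2G\|x_t-x_F^\star\|$. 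Controlling it costs either a term of order $(G+\sigma)\sqrt{v\log(1/\delta)}$ via your stopping argument, or an additive $O(\frac{(G+\sigma)^2}{\alpha}\log\frac{1}{\delta})$ via self-bounding with $e_t\le\frac{2}{\alpha}(F(x_t)-F(x_F^\star))$. The stated bound contains neither term.

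The inequality you need is in fact false pathwise. Take $f_t(x)=\frac{\alpha}{2}(x-z_t)^2$ on $[-1,1]$ with $z_t=\pm1$, $\sigma=0$ and $x_1=1$. On the event $z_1=\dots=z_T=1$ we have $S_T=0$ but $\sum_t[F(x_t)-F(0)]=\alpha T/2$. The paper's proof makes the same unsupported assertion ($\tilde R_T\le S_T$), so following it does not close the gap.

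What your machinery does prove is the following. Apply the argument of Lemma~2 with comparator $x_F^\star$ and the curvature of each $f_t$, without splitting. This gives $\sum_t[f_t(x_t)-f_t(x_F^\star)]\le S_T+\sum_t\langle\xi_t,x_F^\star-x_t\rangle$. That martingale has proxy $\sigma^2e_t$, and your stopping argument then yields the stated form with $\sigma$ in place of $\sigma_V$. This is a valid variance-scaling bound, but not the one claimed.

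\textbf{Comparison with the paper's route.} Apart from this step, your route matches the paper's: comparator $x_F^\star$, an exponential supermartingale with random proxy, Markov on $\mathbb{E}[V_T]$ with budget $\delta_1$, and stopping with budget $\delta_2$. You make two improvements.

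First, your recursion uses $\langle\nabla F(x_t),x_t-x_F^\star\rangle\ge\alpha e_t$ (strong monotonicity plus first-order optimality of $x_F^\star$). This gives the factor $1-2/t$ and $\mathbb{E}[e_t]=O(1/t)$. The paper's Lemma~6 keeps only $\frac{\alpha}{2}e_t$, gets $1-1/t$, and hence obtains $O(\log t/t)$ and $\mathbb{E}[V_T]=O((\log T)^2)$. Its Remark even claims $O(1/t)$ is unattainable with $\eta_t=1/(\alpha t)$; your argument shows that is an artifact of the weaker inequality. It also saves a factor $\sqrt{1+\log T}$ in the final bound.

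The bookkeeping you anticipated is easy. Write $a_t=\mathbb{E}[e_t]$ and $c=(G^2+d\sigma^2)/\alpha^2$. The $t=1$ step has factor $-1$, so $a_1+a_2\le c$, and hence $\sum_{t\le T}a_t\le c(1+\log T)$ and $Q'\le C_e$.

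Second, your predictable stopping time keeps the stopped proxy genuinely at most $v$. The paper's $\tau$ includes the step at which the proxy first exceeds its budget.
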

The stochastic term now scales with $\sigma_V^2\|x_t - x_F^\star\|^2$ rather than the worst-case $\sigma^2 D^2$, at the cost of an extra $(1+\log T)$ factor and a $1/\delta_1$ term. The three budgets serve distinct roles: $\delta_0$ for noise truncation, $\delta_1$ for the stopping-time trigger (the event that $\sum_t \|x_t - x_F^\star\|^2$ exceeds its mean), and $\delta_2$ for the stopped supermartingale tail.\\
\textbf{Proof sketch.} (See Appendix D for the complete proof.)\\
The regret decomposes as $R_T \leq S_T + \sum_t \Delta_t$, where $\Delta_t = [f_t(x_t) - F(x_t)] - [f_t(x_F^\star) - F(x_F^\star)]$ is conditionally $\sigma_V\|x_t - x_F^\star\|$-sub-Gaussian by Assumption 7. The exponential supermartingale $Z_t = \exp(\lambda\sum_{s\leq t}\Delta_s - \frac{\lambda^2}{2}\sigma_V^2\sum_{s\leq t} e_s)$ is valid, but the variance proxy $V_T = \sigma_V^2 \sum_t e_t$ is now random. We show $\mathbb{E}[V_T] \leq \sigma_V^2 Q'(1+\log T)^2$ via a direct unrolling of the one-step contraction $\mathbb{E}[e_{t+1}] \leq (1-1/t)\mathbb{E}[e_t] + C_e/t^2$, which yields the correct rate $\mathbb{E}[e_t] = \mathcal{O}(\log t/t)$ (Lemma 7, Appendix A.3). A stopping time $\tau = \min\{t : V_t > V_T^{\mathrm{budget}}\} \wedge (T+1)$ with $V_T^{\mathrm{budget}} = \mathbb{E}[V_T]/\delta_1$ makes the variance proxy deterministic on $\{t \leq \tau\}$; Markov's inequality gives $\mathbb{P}(\tau \leq T) \leq \delta_1$. The stopped supermartingale $Z_{t\wedge\tau}$ then admits a standard Markov-based tail bound with budget $\delta_2$. On $\{\tau > T\}$, the stopped and original sums coincide, completing the argument. 

\begin{corollary}
[Full-Information vs. Bandit Separation] The confidence cost of strongly convex OCO, defined as the growth rate of the high-probability regret as $\delta \to 0$, exhibits a feedback-dependent dichotomy:\\
- Full information (Theorem 1): $\sqrt{\log(1/\delta)}$, through the martingale deviation;\\
- Bandit feedback (Theorem 2): $\Omega(\log(1/\delta))$, linearly in $\log(1/\delta)$.
\end{corollary}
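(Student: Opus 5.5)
The dichotomy in Corollary~2 consists of two directions, an upper bound under full information and a lower bound under bandit feedback. I would read each one off the corresponding theorem and then compare the two rates as $\delta\to 0$.

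\textbf{Full-information upper bound.} I would apply Theorem~1 with $\delta_0=\delta_1=\delta/2$ and isolate the dependence on $\delta$. The martingale term $\sigma D\sqrt{2T\log(2/\delta)}$ contributes exactly $\sqrt{\log(1/\delta)}$. The truncation term $B_\nabla(\delta/2)^2(1+\log T)/(2\alpha)$ depends on $\delta$ only through $\log(4dT/\delta)$. Expanding $B_\nabla^2$ gives a cross term of order $G\sigma\sqrt{d\log(dT/\delta)}$, which is again $\sqrt{\log(1/\delta)}$. It also gives a quadratic term $d\sigma^2\log(dT/\delta)$, which is linear in $\log(1/\delta)$.

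This quadratic term is the main obstacle, since it has the same $\log(1/\delta)$ growth as the bandit lower bound. I would handle it by making the definition of confidence cost precise. I would define it as the $\delta$-dependence of the deviation term beyond the deterministic OGD term, which is the reading the corollary signals with ``through the martingale deviation.'' I would also record the regime in which the truncation term is dominated, for instance $\log(1/\delta)\lesssim \log(dT)$ or $G\gg\sigma\sqrt{d\log(dT/\delta)}$, as discussed after Theorem~1. In that regime the whole bound grows like $\sqrt{\log(1/\delta)}$ up to factors polynomial in $\log T$.

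If a cleaner statement is wanted, I would note an alternative that removes truncation. One can bound $S_T$ through the conditional MGF of $\|\xi_t\|^2$, which is sub-exponential. The $\delta$-dependence of this bound enters additively, as roughly $d\sigma^2\log(1/\delta)\log T/\alpha$, in a term scaling with $\sigma^2$ rather than the signal scale. I would still state the corollary for the martingale component and leave this refinement as a remark.

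\textbf{Bandit lower bound.} I would invoke Theorem~2 directly. For every algorithm there is an instance in $\mathcal{E}(\alpha,G,D,1)$ on which, for all $\delta\in[T^{-c_\star/\log 2},1/4]$, the event $R_T\ge \frac{r_0}{8c_\star}\log\frac1\delta$ has probability at least $1-\delta\ge\delta$. So no algorithm can have a bound $B(T,\delta)$ valid with probability $1-\delta$ that is smaller than $\frac{r_0}{8c_\star}\log(1/\delta)$.

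Here "$\delta\to0$" has to be understood as letting $T$ grow jointly, because the admissible range of $\delta$ shrinks only polynomially in $T$. I would state explicitly that the limit is taken along $\delta_T\ge T^{-c_\star/\log 2}$. Along this scaling the lower bound is $\Omega(\log(1/\delta))$, while the full-information deviation is $\Theta(\sqrt{\log(1/\delta)})$. Their ratio, of order $\sqrt{\log(1/\delta)}$, diverges, and this gives the claimed dichotomy.
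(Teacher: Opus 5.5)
Like the paper, you read the corollary off Theorems~1 and~2. You also correctly notice something the paper passes over: Theorem~1's bound is itself linear in $\log(1/\delta)$, through the $d\sigma^2\log(2dT/\delta_0)$ part of $B_\nabla(\delta_0)^2$. Your repair is to redefine the confidence cost as the $\delta$-dependence of the martingale term only. That does not prove the statement, which defines the cost as the growth of the high-probability regret. Your sub-exponential remark does not help either, because it still leaves an additive term linear in $\log(1/\delta)$.

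In fact no argument can deliver the literal claim, because the linear dependence is genuine. Take $d=1$, $\mathcal{K}=[-D/2,D/2]$, $f_t(x)=\frac{\alpha}{2}x^2$ and $\xi_t\sim\mathcal{N}(0,\sigma^2)$; Assumptions 1--6 hold with $G=\alpha D/2$. Since $\eta_1=1/\alpha$, the OGD iterate is $x_2=\Pi_{\mathcal{K}}(-\xi_1/\alpha)$ whatever $x_1$ is. Since $x^\star=0$ and every summand is nonnegative, $R_T\ge\frac{\alpha}{2}x_2^2=\min\{\xi_1^2/(2\alpha),\,\alpha D^2/8\}$. Hence the $(1-\delta)$-quantile of the regret of Theorem~1's algorithm is at least of order $\frac{\sigma^2}{\alpha}\log\frac{1}{\delta}$ for all $\delta$ down to $e^{-c\alpha^2D^2/\sigma^2}$.

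The Bretagnolle--Huber two-point argument behind Theorem~2 also applies verbatim to full information. Take $\mathcal{K}=[0,D]$ and $f_t(x)=\frac{\alpha}{2}(x-\theta)^2$ with $\theta\in\{0,\Delta\}$. The round-1 gradient observation has KL divergence $\alpha^2\Delta^2/(2\sigma^2)$ between the hypotheses, and $R_T\ge\frac{\alpha}{2}(x_2-\theta)^2$. Choosing $\Delta^2=\frac{2\sigma^2}{\alpha^2}\log\frac{1}{4\delta}\le D^2$ then gives, for every algorithm, an instance with $\mathbb{P}(R_T\ge\frac{\sigma^2}{4\alpha}\log\frac{1}{4\delta})\ge\delta$. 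This is exactly the form of (T2.LB). So Theorems~1 and~2 only support comparing the shape of one term of an upper bound with the shape of a lower bound, not a dichotomy of confidence costs.

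Your closing ratio argument also fails. You rightly note that Theorem~2 only covers $\delta\ge T^{-c_\star/\log 2}$, so $\delta\to0$ forces $T\ge\delta^{-\log 2/c_\star}$. But then you may not treat the full-information deviation as $\Theta(\sqrt{\log(1/\delta)})$. It is $\sigma D\sqrt{2T\log(1/\delta)}$, which along this coupling is polynomially large in $1/\delta$. The bandit bound, by contrast, never exceeds $\frac{r_0}{8\log 2}\log T$ on the admissible range. So the ratio of the bandit lower bound to the full-information upper bound tends to $0$, not $\infty$.

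At fixed $T$ there is no limit to take either. The proof of Theorem~2 actually establishes the $\delta$-free statement $\mathbb{P}(R_T\ge r_0K/8)\ge 1-e^{-c_\star K}$. On the admissible range $\frac{r_0}{8c_\star}\log\frac{1}{\delta}\le r_0K/8$, up to the rounding in $K$. Its $\log(1/\delta)$ form is therefore an $\Omega(\log T)$ bound rewritten using $\log(1/\delta)\lesssim\log T$, and it witnesses no growth in $\delta$. An honest version of the corollary would compare only the forms of the two stated bounds, for fixed $T$ and $\delta\in[T^{-c_\star/\log 2},1/4]$, without calling this a separation.
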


\begin{corollary}
 [Online-to-Batch Conversion] Under the conditions of Corollary 1, the averaged iterate $\bar{x}_T = \frac{1}{T}\sum_{t=1}^T x_t$ satisfies
$$\mathbb{P}\!\left(F(\bar{x}_T) - F(x_F^\star) \leq \frac{C\log T}{\alpha T} + \frac{C'\sigma D}{\sqrt{T}}\sqrt{\log\frac{1}{\delta}}\right) \geq 1 - \mathcal{O}(\delta).$$   
\end{corollary}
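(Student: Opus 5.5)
The plan is to derive Corollary 3 from the high-probability regret bound by a standard online-to-batch argument in the i.i.d.\ setting of Assumption 7. Since the $f_t$ are i.i.d.\ and $x_t$ is $\mathcal{F}_{t-1}$-measurable, Jensen's inequality for the convex population objective $F$ gives
\[
F(\bar{x}_T) - F(x_F^\star) \le \frac{1}{T}\sum_{t=1}^T \big(F(x_t) - F(x_F^\star)\big).
\]
The right-hand side is split as
\[
\frac{1}{T}\sum_{t=1}^T \big(f_t(x_t) - f_t(x_F^\star)\big) - \frac{1}{T}\sum_{t=1}^T \Delta_t,
\]
where $\Delta_t = [f_t(x_t)-F(x_t)] - [f_t(x_F^\star)-F(x_F^\star)]$ is exactly the martingale difference from the proof of Corollary 1. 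The first sum is the regret against the fixed comparator $x_F^\star$, which is at most $R_T$, since $R_T$ is measured against the empirical minimizer $x^\star$ and $\sum_t f_t(x^\star) \le \sum_t f_t(x_F^\star)$. The pathwise OGD inequality of Lemma 2 also holds with any fixed comparator in place of $x^\star$, so I will use it directly with $x_F^\star$ if that is cleaner.

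Next, I bound the regret term with probability $1-\delta_0-\delta_1$ using Theorem 1. The deterministic part $B_\nabla(\delta_0)^2(1+\log T)/(2\alpha)$ is absorbed into $C\log T/\alpha$, with $C$ depending on $G,\sigma,d$ and a $\log(dT/\delta)$ factor treated as part of the constant, as the $\mathcal{O}(\cdot)$ form allows. The noise martingale contributes $\sigma D\sqrt{2T\log(1/\delta_1)}$, which after dividing by $T$ gives the $\sigma D\sqrt{\log(1/\delta)}/\sqrt{T}$ term.

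The remaining piece is the tail of $-\sum_t \Delta_t$. I apply the exponential supermartingale argument of Theorem 1 to $-\Delta_t$. By Assumption 7 this is conditionally $\sigma_V\|x_t-x_F^\star\|\le \sigma_V D$ sub-Gaussian, with a deterministic proxy. This gives $-\sum_t\Delta_t \le \sigma_V D\sqrt{2T\log(1/\delta_2)}$ with probability $1-\delta_2$, and it avoids any stopping-time step. A union bound over the three events yields failure probability $\mathcal{O}(\delta)$, and dividing by $T$ gives the claimed form with $C'$ absorbing $\sigma_V/\sigma$.

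The main obstacle is making this two-sided deviation match the stated $C'\sigma D$ scaling rather than $\sigma_V D$. One option is to let $C'$ depend on the ratio $\sigma_V/\sigma$. The alternative is to use the refined bound of Corollary 1 for the deviation, which would give a term of order $(\log T)\sqrt{1/\delta_1}/T$. That term is lower order in $T$ but has a worse dependence on $\delta$, so I would use the crude deterministic-proxy route as the primary argument. A secondary point to check is the comparator mismatch between $x^\star$ and $x_F^\star$ in Theorem 1's decomposition. This is handled by rerunning Lemma 2 with the comparator $x_F^\star$, which remains $\mathcal{F}_{t-1}$-measurable because it is deterministic.
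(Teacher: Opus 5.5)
Your argument follows the paper's route (Jensen's inequality for the convex $F$, plus the Theorem~1 bound divided by $T$), and it also supplies a step that the paper's one-line justification omits. Theorem~1 controls the \emph{empirical} regret, but Jensen produces $\frac{1}{T}\sum_t[F(x_t)-F(x_F^\star)]$. Bridging the two requires a bound on $-\sum_t\Delta_t$, and the paper's justification does not address this.

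The following pieces of your argument are all correct:
\begin{itemize}
\item the identity $F(x_t)-F(x_F^\star)=[f_t(x_t)-f_t(x_F^\star)]-\Delta_t$;
\item the comparison $\sum_t[f_t(x_t)-f_t(x_F^\star)]\le R_T$, or equivalently Lemma~2 rerun with the deterministic comparator $x_F^\star$. The latter also sidesteps Theorem~1's claim that $x^\star$ is $\mathcal{F}_{t-1}$-measurable, whose justification (``$f_t$ is deterministic'') no longer applies once the losses are i.i.d.\ draws;
\item the deterministic-proxy bound $-\sum_t\Delta_t\le\sigma_V D\sqrt{2T\log(1/\delta_2)}$.
\end{itemize}
For full rigor, run the $\xi_t$-supermartingale with respect to a filtration in which $f_t$ is known at time $t-1$, and the $\Delta_t$-supermartingale with respect to one in which it is not; the union bound is indifferent to this. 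Your argument proves the statement with the same absorption of $\log(dT/\delta_0)$ into $C$ that the paper's route needs, provided $C'$ may be of order $1+\sigma_V/\sigma$, as you note.

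That dependence can be avoided without Corollary~1's stopping time. Because $F$ is $\alpha$-strongly convex and $x_F^\star$ minimizes it over $\mathcal{K}$, you have pathwise
\[
e_t=\|x_t-x_F^\star\|^2\le\tfrac{2}{\alpha}\bigl(F(x_t)-F(x_F^\star)\bigr).
\]
Let $P=\sum_t[F(x_t)-F(x_F^\star)]$ and $R_T'=\sum_t[f_t(x_t)-f_t(x_F^\star)]$. On the supermartingale event for $-\Delta_t$ with the fixed choice $\lambda=\alpha/(2\sigma_V^2)$,
\[
-\sum_{t=1}^T\Delta_t<\frac{\log(1/\delta_2)}{\lambda}+\frac{\lambda\sigma_V^2}{2}\sum_{t=1}^T e_t\le\frac{2\sigma_V^2}{\alpha}\log\frac{1}{\delta_2}+\frac{P}{2}.
\]
Substituting into $P=R_T'-\sum_t\Delta_t$ and rearranging gives $P\le 2R_T'+\frac{4\sigma_V^2}{\alpha}\log\frac{1}{\delta_2}$.

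The $\sigma_V$ contribution is then $\mathcal{O}(\sigma_V^2\log(1/\delta)/(\alpha T))$. It is absorbed into $C\log T/(\alpha T)$ exactly as $B_\nabla(\delta_0)^2$ is, and $C'=2\sqrt{2}$ becomes an absolute constant. A single fixed $\lambda$ suffices, with no stopping time or $1/\delta_1$ factor, because the variance proxy is dominated by the very quantity being bounded.

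The same device also applies to $M_T$ with comparator $x_F^\star$, whose proxy is $\sigma^2 e_t$; take $\lambda=\alpha/(4\sigma^2)$ there and $\alpha/(4\sigma_V^2)$ for $\Delta_t$. This yields
\[
F(\bar{x}_T)-F(x_F^\star)\le\frac{B_\nabla(\delta_0)^2(1+\log T)+8\bigl(\sigma^2\log(1/\delta_1)+\sigma_V^2\log(1/\delta_2)\bigr)}{\alpha T},
\]
so the $\sigma D\sqrt{\log(1/\delta)/T}$ term in the statement is not needed at all.
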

This follows by dividing the Theorem 1 bound by $T$ and applying Jensen's inequality.

\section{Experiments}
We conduct three groups of synthetic experiments to validate the theoretical predictions of Theorems 1 through 3. \rev{The purpose of these experiments is controlled theorem validation rather than a full application benchmark: each experiment isolates one predicted scaling law while keeping the data-generating process transparent. Evaluating the same guarantees in domain-specific systems, such as portfolio or resource-allocation deployments with real constraints, is left for future empirical work.} All experiments use projected Online Gradient Descent on strongly convex quadratic losses over an $\ell_2$-ball $\mathcal{K} = \{x \in \mathbb{R}^d : \|x\| \leq R\}$ with step sizes $\eta_t = 1/(\alpha t)$.  Unless stated otherwise, we set $d = 5$, $\alpha = 1$, and $R = 1$, yielding a diameter $D = 2$, and repeat each configuration over $10{,}000$ independent trials to obtain reliable empirical tail statistics. All code is available in the supplementary material.
\subsection{ Noise-Adaptive Bound Verification}
\textbf{Setup.}
We consider losses $f_t(x) = \frac{\alpha}{2}\|x\|^2 + \langle c_t, x \rangle$ with oblivious cost vectors $c_t \in \mathbb{R}^d$ fixed before the game, satisfying $\|c_t\| = G_{\text{base}} = 2$ to yield an overall gradient bound $G = G_{\text{base}} + \alpha R = 3$. The stochastic gradient oracle returns $g_t = \nabla f_t(x_t) + \xi_t$ for $\xi_t \sim \mathcal{N}(0, \sigma^2 I_d)$. Varying the noise ratio $\sigma / G \in \{0.01, 0.05, 0.1, 0.5\}$ spans regimes from near-exact gradients to moderate noise. For each configuration, we compute the empirical $1-\delta$ quantile of cumulative regret $R_T$ across trials to compare against the classical Azuma-Hoeffding bound and the Theorem 1 bound using $\delta_0 = \delta_1 = \delta/2$.\\
\textbf{Results.} Figure~\ref{fig:exp1}(a) shows that the empirical high-probability regret quantiles track the noise-adaptive bound across the tested noise levels and stay well below the Azuma-Hoeffding baseline when $\sigma \ll G$. Figure~\ref{fig:exp1}(b) verifies the logarithmic dependence on $T$ at a fixed noise ratio $\sigma/G=0.05$. Together, these results confirm that the martingale deviation is governed by the oracle noise level rather than the worst-case gradient norm. \rev{This experiment is therefore focused on the noise-adaptive improvement in Theorem~1; the feedback-dependent confidence-cost separation is tested separately in Section~5.2.}
\begin{figure}[t]
\centering
\includegraphics[width=0.9\textwidth]{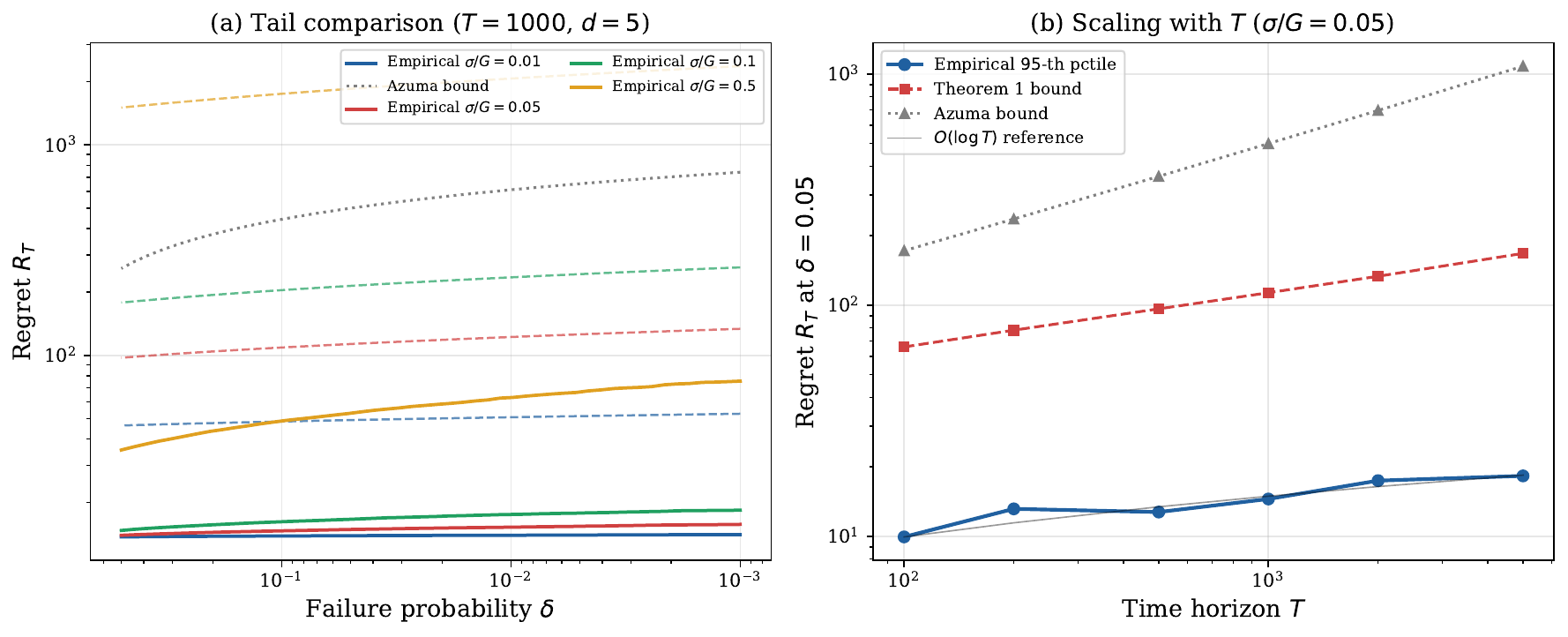}
\caption{Experiment~1: Noise-adaptive regret bound (Theorem~1). \textbf{(a)}~Empirical $(1-\delta)$-quantile of $R_T$ versus failure probability $\delta$ for four noise levels. Dashed lines: Theorem~1 bounds; dotted gray: Azuma bound. \textbf{(b)}~$95$th-percentile regret scaling with $T$ at $\sigma/G = 0.05$. The empirical curve follows the $O(\log T)$ reference.}
\label{fig:exp1}
\end{figure}

\subsection{Full-Information vs.\ Bandit Confidence Cost}
\textbf{Setup.} To validate the information-theoretic separation from Theorem 2 and Corollary 2, we compare full-information and bandit confidence costs on a one-dimensional strongly convex problem ($d=1$, $\alpha=1$, $\mathcal{K}=[-1,1]$). Under full information, the learner receives $g_t = \nabla f_t(x_t) + \xi_t$ ($\xi_t \sim \mathcal{N}(0, \sigma^2)$, $\sigma = 0.3$). With bandit feedback, the learner observes only scalar losses $f_t(x_t) + \varepsilon_t$ ($\varepsilon_t \sim \mathcal{N}(0, \sigma_\varepsilon^2)$, $\sigma_\varepsilon = 0.3$) to construct a one-point gradient estimator using smoothing parameter $\mu = 0.05$. We run $20{,}000$ independent trials over $T = 2{,}000$.\\
\textbf{Results.} Figure~\ref{fig:exp2}(a) plots the empirical $1-\delta$ quantile of $R_T$ against $\log(1/\delta)$ for both feedback models. The key prediction is that confidence cost---the growth rate of high-probability regret as $\delta \to 0$---scales as $\sqrt{\log(1/\delta)}$ under full information (Theorem 1) but linearly as $\log(1/\delta)$ under bandit feedback (Theorem 2). Empirical curves confirm this: the full-information quantile shows concave sublinear growth matching the $\sqrt{\log(1/\delta)}$ reference, while the bandit quantile grows approximately linearly. Figure~\ref{fig:exp2}(b) shows normalized tail growth, median-subtracted and rescaled to a unit maximum, highlighting the functional-form separation. This evidence suggests the linear bandit confidence cost is a genuine information-theoretic phenomenon rather than a proof-technique artifact, demonstrating that bandit learners pay a fundamentally higher price for confidence.
\begin{figure}[t]
\centering
\includegraphics[width=0.9\textwidth]{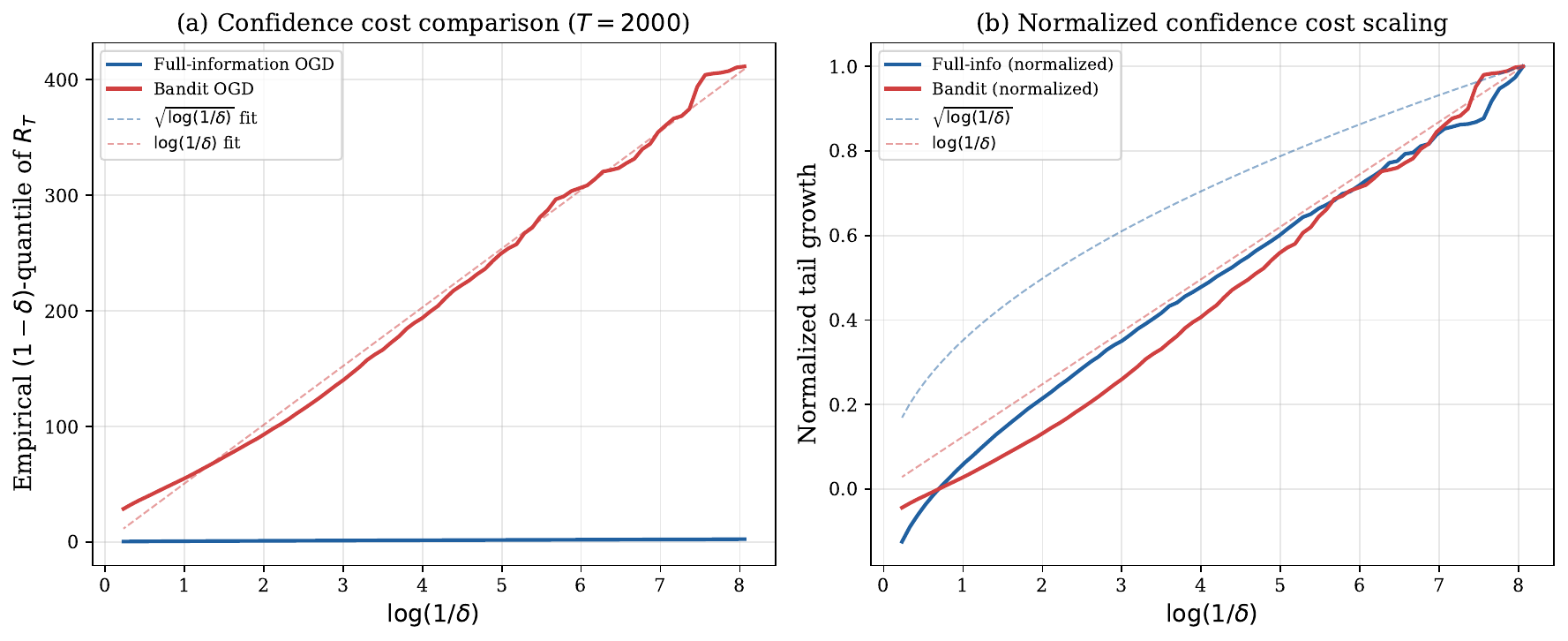}
\caption{Experiment~2: Confidence cost separation (Theorem~2, Corollary~2). \textbf{(a)}~Empirical $(1-\delta)$-quantile of $R_T$ versus $\log(1/\delta)$. Full-information (blue) grows as $\sqrt{\log(1/\delta)}$; bandit (red) grows linearly. \textbf{(b)}~Normalized tail growth to isolate the functional form.}
\label{fig:exp2}
\end{figure}

\begin{figure}[!t]
\centering
\includegraphics[width=0.9\textwidth]{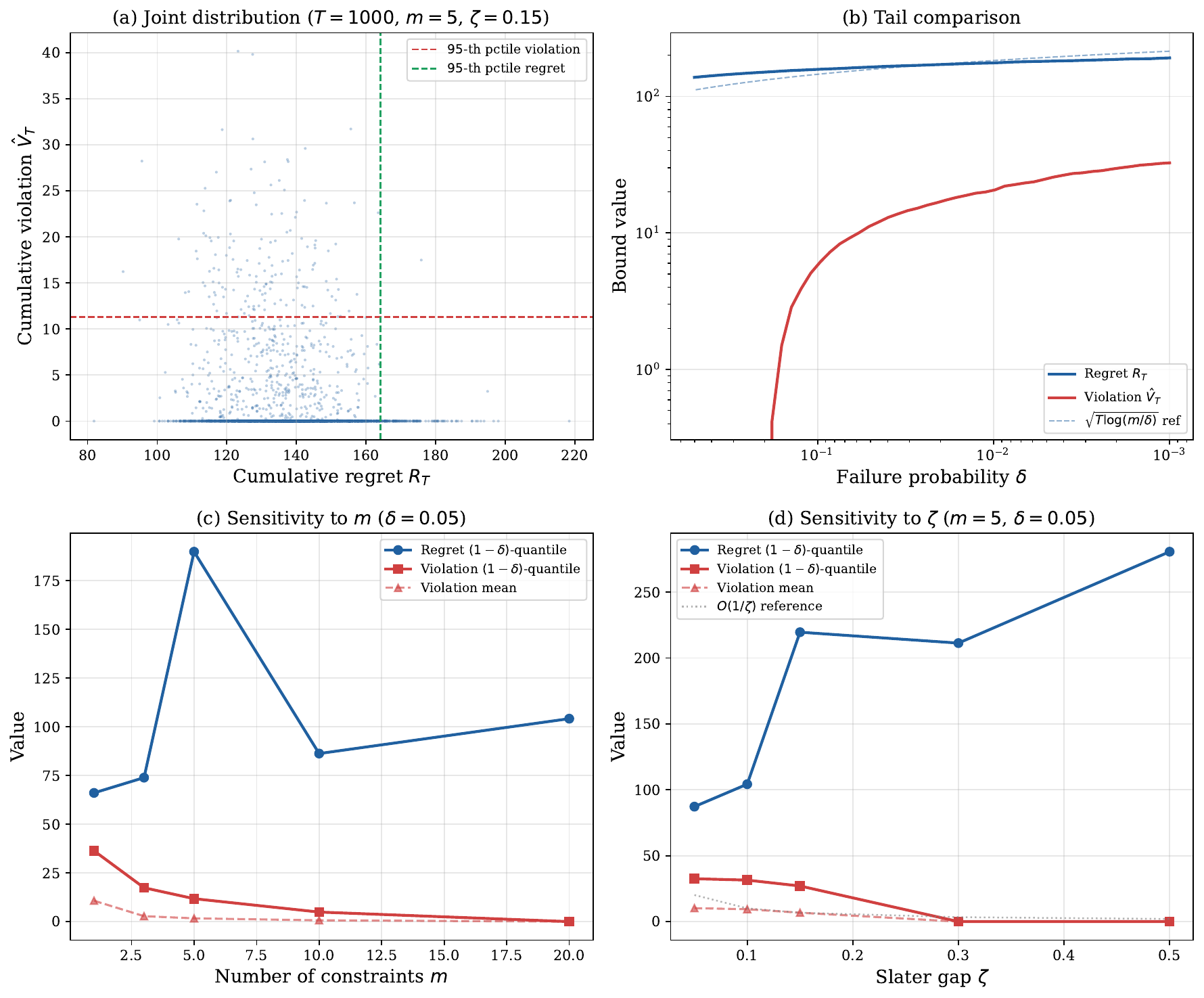}
\caption{Experiment~3: Constrained OCO joint guarantee (Theorem~3). \textbf{(a)}~Scatter plot of $(R_T, \hat{V}_T)$ for $T=1{,}000$, $m=5$, $\zeta=0.15$. \textbf{(b)}~Tail comparison of regret and violation versus $\delta$. \textbf{(c)}~Sensitivity to $m$ at $\delta=0.05$. \textbf{(d)}~Sensitivity to $\zeta$ with $O(1/\zeta)$ reference.}
\label{fig:exp3}
\end{figure}
\subsection{Constrained OCO Joint Guarantee}
\textbf{Setup.} To test Theorem 3's joint high-probability guarantee, we design an online linear optimization problem with $m$ stochastic constraints over $\mathcal{K} = \{x \in \mathbb{R}^5 : \|x\| \leq 1\}$. The losses $f_t(x) = \langle c_t, x\rangle$ have mean directions pushing iterates toward boundaries, creating tension between regret and constraints. Each constraint $g_t^{(i)}(x) = \langle a^{(i)}, x \rangle - b^{(i)} + w_t^{(i)}$ uses fixed directions $\|a^{(i)}\| = L_g = 1.5$ and $b^{(i)} = \zeta$, making $x = 0$ a Slater point with margin $\zeta$. The noise $w_t^{(i)} \sim \mathcal{N}(0, \sigma_g^2)$ has $\sigma_g = 0.8$, clipped to $[-B_g, B_g]$ with $B_g = 3$. Running primal-dual OGD with $\eta = D/[(G_f + B_g L_g \sqrt{T})\sqrt{T}]$ and $\mu = 1/\sqrt{T}$, we track cumulative regret $R_T$ and long-run violation $\hat{V}_T = \sum_{i=1}^m \left[\sum_{t=1}^T g_t^{(i)}(x_t)\right]_+$.\\
\textbf{Results.} Figure \ref{fig:exp3}(a) plots the $(R_T, \hat{V}_T)$ joint empirical distribution at $T = 1{,}000$, $m = 5$, and $\zeta = 0.15$. The 95th-percentile contours indicate mostly low regret and violation, their weak correlation matching Theorem 3's independent probability budgets $\delta$, $\delta_1$, and $\delta_2$. In Figure \ref{fig:exp3}(b), $R_T$'s empirical $1-\delta$ quantile tracks the $\sqrt{T\log(m/\delta)}$ reference from Freedman's inequality on the regret martingale $M_T^R$, confirming the $\sqrt{\log(m/\delta)}$ confidence cost. Figure \ref{fig:exp3}(c) varies constraints $m \in \{1, 3, 5, 10, 20\}$ at $\zeta = 0.15$. Counterintuitively, the violation quantile decreases with $m$ as diverse constraints shrink the feasible region, forcing conservative primal-dual iterates. The regret quantile lacks a monotone $m$-trend, reflecting complex constraint-loss interactions. Figure \ref{fig:exp3}(d) varies the Slater gap $\zeta \in \{0.05, 0.1, 0.15, 0.3, 0.5\}$ at $m = 5$. Matching Theorem 3's predicted $\mathcal{O}(\sqrt{T}/\zeta)$ inverse-linear scaling, empirical mean violation drops from 10.19 at $\zeta = 0.05$ to 6.73 at $\zeta = 0.15$, and reaches zero for $\zeta \geq 0.3$, where the large margin avoids the boundary. With an overlaid $\mathcal{O}(1/\zeta)$ reference, this illustrates the Slater condition: larger feasibility margins restrict dual variable growth, reducing expected violation and its high-probability tail.

\section{Conclusion}
We sharpen high-probability strongly convex online convex optimization via three results. Under full information, an exponential supermartingale yields a noise-adaptive regret bound scaling with noise $\sigma$ instead of gradient bound $G$, bypassing Freedman's bounded-difference requirement and separating probability budgets. Bandit confidence costs scale linearly in $\log(1/\delta)$, proving information-theoretic separation from the $\sqrt{\log(1/\delta)}$ full-information rate. For constrained OCO, we guarantee simultaneous high-probability regret and long-run violation under Slater-conditioned stochastic constraints. Open questions include improving $1/\delta$ violation dependence to $\log(1/\delta)$ via pathwise drift-Lyapunov control, extending noise adaptivity to bandits with nontrivial bias-supermartingale interactions, and generalizing to adversarial constraints. Our supermartingale paradigm and budget-separation principle offer versatile templates for broader sequential decision-making.

\section{Use of LLMs}
We used large language models (LLMs) solely for minor writing polish.
%
%
%
%
\bibliographystyle{splncs04}
\bibliography{refs}

\newpage
\appendix
\section{Auxiliary Lemmas}
We collect the auxiliary tools used throughout the proofs.
\subsection{Freedman's Inequality}
\begin{lemma}
[Freedman, 1975] Let $(D_t, \mathcal{F}_t)_{t=1}^n$ be a martingale difference sequence with $|D_t| \leq b$ almost surely. Let $W_n = \sum_{t=1}^n \mathbb{E}[D_t^2 \mid \mathcal{F}_{t-1}]$ denote the predictable quadratic variation. Then for all $\varepsilon > 0$ and $v > 0$,
$$\mathbb{P}\!\left(\sum_{t=1}^n D_t \geq \varepsilon \;\;\text{and}\;\; W_n \leq v\right) \leq \exp\!\left(-\frac{\varepsilon^2}{2(v + b\varepsilon/3)}\right).$$
\end{lemma}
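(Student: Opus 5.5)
The plan is the classical exponential-supermartingale (Chernoff-type) argument, with the almost-sure bound $|D_t|\le b$ used to control the conditional moment generating function. Write $S_n=\sum_{t=1}^n D_t$. Fix $\lambda\in(0,3/b)$ and set $\psi(\lambda)=\frac{\lambda^2/2}{1-\lambda b/3}$.

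The first and main step is the one-step MGF bound
\[
\mathbb{E}\big[e^{\lambda D_t}\mid\mathcal{F}_{t-1}\big]\le \exp\big(\psi(\lambda)\,\mathbb{E}[D_t^2\mid\mathcal{F}_{t-1}]\big).
\]
I would expand $e^{x}=1+x+\sum_{k\ge2}x^k/k!$ and use $|D_t|^k\le b^{k-2}D_t^2$. The zero conditional mean kills the linear term. This leaves
\[
\mathbb{E}[e^{\lambda D_t}\mid\mathcal{F}_{t-1}]\le 1+\mathbb{E}[D_t^2\mid\mathcal{F}_{t-1}]\sum_{k\ge2}\frac{\lambda^k b^{k-2}}{k!}.
\]
The series is bounded using $k!\ge 2\cdot 3^{k-2}$, which gives a geometric series summing to $\psi(\lambda)$. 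Then $1+u\le e^u$ finishes the step. This inequality is the only place boundedness enters, and getting the constant $b/3$ right via $k!\ge2\cdot3^{k-2}$ is the delicate point.

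Next, define
\[
Z_t=\exp\big(\lambda S_t-\psi(\lambda)W_t\big),\qquad W_t=\sum_{s\le t}\mathbb{E}[D_s^2\mid\mathcal{F}_{s-1}].
\]
Since $W_t$ is predictable, the step above gives $\mathbb{E}[Z_t\mid\mathcal{F}_{t-1}]\le Z_{t-1}$, so $(Z_t)$ is a nonnegative supermartingale with $\mathbb{E}Z_n\le Z_0=1$. On the event $\{S_n\ge\varepsilon,\ W_n\le v\}$ we have $Z_n\ge \exp(\lambda\varepsilon-\psi(\lambda)v)$. Markov's inequality therefore gives
\[
\mathbb{P}(S_n\ge\varepsilon,\ W_n\le v)\le \exp\big(-\lambda\varepsilon+\psi(\lambda)v\big).
\]
No stopping time is needed, because $v$ is fixed and only the value at time $n$ is involved.

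Finally, choose $\lambda=\varepsilon/(v+b\varepsilon/3)$. This lies in $(0,3/b)$, and $1-\lambda b/3=v/(v+b\varepsilon/3)$. Hence
\[
\psi(\lambda)v=\frac{\varepsilon^2}{2(v+b\varepsilon/3)},\qquad \lambda\varepsilon=\frac{\varepsilon^2}{v+b\varepsilon/3},
\]
and the exponent becomes exactly $-\varepsilon^2/\big(2(v+b\varepsilon/3)\big)$, which is the stated bound. This is routine algebra once the MGF step is in place.
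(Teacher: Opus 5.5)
The paper contains no proof of this lemma. It is cited from Freedman (1975) and used as a black box in Lemma 4 and Theorem 3. In both places the predictable variation is bounded deterministically, so the terminal-time form stated here is all the paper needs. Your proposal is correct and complete; it is the standard Bernstein-type derivation.

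The individual steps check out:
\begin{itemize}
\item The pointwise bound $e^{\lambda D_t}\le 1+\lambda D_t+D_t^2\sum_{k\ge2}\lambda^k b^{k-2}/k!$ uses $|D_t|\le b$ two-sidedly, because of the even powers. Its right-hand side is a finite expression in $D_t$, so taking conditional expectations needs no interchange of sum and expectation.
\item $k!\ge 2\cdot 3^{k-2}$ holds with equality at $k=2,3$ and for larger $k$ by induction.
\item $Z_t\le e^{\lambda nb}$ because $W_t\ge 0$, so $Z_t$ is integrable.
\item Intersecting with $\{W_n\le v\}$ is valid because $\psi(\lambda)>0$.
\item The choice $\lambda=\varepsilon/(v+b\varepsilon/3)$ satisfies $\lambda b/3<1$ exactly because $v>0$, and it gives the stated exponent.
\end{itemize}

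Your argument is structurally the same as the paper's own proof of Theorem 1 leading to (T1.3). It uses the same nonnegative supermartingale and the same Markov step. The only changes are that the sub-Gaussian bound (T1.SG) becomes your Bernstein bound $\exp(\psi(\lambda)\,\mathbb{E}[D_t^2\mid\mathcal{F}_{t-1}])$, and the deterministic bound $\sum_t\sigma_t^2\le\sigma^2D^2T$ becomes intersection with $\{W_n\le v\}$. So the paper's contrast between Freedman's inequality and its supermartingale method is really a contrast between one-step MGF bounds, not between proof techniques. Boundedness enters only in the MGF step.

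Compared with Freedman's original theorem, your version is weaker in two ways, both easy to repair:
\begin{itemize}
\item Freedman proves a maximal inequality, bounding the probability that $S_k\ge\varepsilon$ and $W_k\le v$ for some $k\le n$. You get this from the same $Z_t$ by replacing Markov with Ville's inequality $\mathbb{P}(\max_{k\le n}Z_k\ge c)\le 1/c$.
\item His exponent is of Bennett type, $-\frac{v}{b^2}h\!\left(\frac{b\varepsilon}{v}\right)$ with $h(u)=(1+u)\log(1+u)-u$. This implies the stated bound via $h(u)\ge u^2/(2(1+u/3))$. You recover it by optimizing $\lambda$ against the exact series $(e^{\lambda b}-1-\lambda b)/b^2$ rather than the geometric majorant.
\end{itemize}
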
 
\begin{remark}
The strength of Freedman's inequality over Azuma--Hoeffding is its sensitivity to the actual predictable quadratic variation $W_n$, not merely the worst-case bound $nb^2$. When $W_n \ll nb^2$, Freedman yields strictly sharper tail bounds. In this paper, Freedman is used in Theorem 3 (where constraint differences are a.s. bounded by Assumption 8(d)) and in Lemma 4 (the stopped martingale tool). Theorem 1 instead uses the sub-Gaussian supermartingale method, which avoids the bounded-difference requirement entirely.
\end{remark}

\subsection{Deterministic Regret Decomposition}
\begin{lemma}
 Under Assumptions 1--6, consider projected OGD with step sizes $\eta_t = \frac{1}{\alpha t}$ and stochastic gradients: $x_{t+1} = \Pi_{\mathcal{K}}(x_t - \eta_t g_t)$. Denote $e_t = \|x_t - x^\star\|^2$. Then the following pathwise inequality holds for every realization:
$$R_T \leq \sum_{t=1}^T \frac{\|g_t\|^2}{2\alpha t} + \sum_{t=1}^T \langle \xi_t, x^\star - x_t \rangle.$$
\end{lemma}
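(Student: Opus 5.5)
The plan is the standard one-step strongly convex OGD analysis, done pathwise with no expectations, followed by a telescoping sum that exploits $\eta_t = 1/(\alpha t)$.

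\textbf{Step 1: one-step inequality.} Fix a realization. Since $x^\star \in \mathcal{K}$ and projection onto a convex set is non-expansive,
\[
\|x_{t+1}-x^\star\|^2 \le \|x_t - \eta_t g_t - x^\star\|^2 = e_t - 2\eta_t\langle g_t, x_t - x^\star\rangle + \eta_t^2\|g_t\|^2 .
\]
Rearranging gives
\[
\langle g_t, x_t - x^\star\rangle \le \frac{e_t - e_{t+1}}{2\eta_t} + \frac{\eta_t}{2}\|g_t\|^2 .
\]

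\textbf{Step 2: pass from $g_t$ to the true gradient and use strong convexity.} Substitute $g_t = \nabla f_t(x_t) + \xi_t$. Then
\[
\langle \nabla f_t(x_t), x_t - x^\star\rangle \le \frac{e_t - e_{t+1}}{2\eta_t} + \frac{\eta_t}{2}\|g_t\|^2 + \langle \xi_t, x^\star - x_t\rangle .
\]
Assumption~1, applied with $x = x_t$ and $y = x^\star$, gives
\[
f_t(x_t) - f_t(x^\star) \le \langle \nabla f_t(x_t), x_t - x^\star\rangle - \frac{\alpha}{2} e_t .
\]
Combining the two,
\[
f_t(x_t) - f_t(x^\star) \le \frac{e_t - e_{t+1}}{2\eta_t} - \frac{\alpha}{2}e_t + \frac{\eta_t}{2}\|g_t\|^2 + \langle \xi_t, x^\star - x_t\rangle .
\]
Every step here is deterministic, so the inequality holds for each sample path.

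\textbf{Step 3: telescoping.} Sum over $t = 1,\dots,T$. The coefficient of $e_t$ in the distance terms is
\[
\frac{1}{2}\Big(\frac{1}{\eta_t} - \frac{1}{\eta_{t-1}} - \alpha\Big),
\]
with the convention $1/\eta_0 = 0$. With $1/\eta_t = \alpha t$ this equals $\tfrac{1}{2}(\alpha t - \alpha(t-1) - \alpha) = 0$ for every $t \ge 1$. For $t = 1$ this uses the convention: the coefficient is $\tfrac{1}{2}(\alpha - \alpha) = 0$. The only leftover distance term is $-e_{T+1}/(2\eta_T) \le 0$, which we drop. The gradient terms become $\sum_t \|g_t\|^2/(2\alpha t)$, and the noise terms are exactly the martingale sum $\sum_t \langle \xi_t, x^\star - x_t\rangle$. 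This is the claimed bound.

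The only points that need care are the following.
\begin{itemize}
\item Keep the cross term $\langle \xi_t, x^\star - x_t\rangle$ with the right sign throughout; it comes from $-\langle g_t, x_t - x^\star\rangle$.
\item Confirm that $x^\star$ is a fixed comparator for the realized loss sequence. This holds because the adversary is oblivious (Assumption~4), so $x^\star$ is deterministic, and the inequality is genuinely pathwise.
\end{itemize}
No probabilistic input (Assumptions~5 and~6) is actually needed for this lemma. Those assumptions are used only later, when the martingale term is concentrated.
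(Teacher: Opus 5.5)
Your proposal is correct and follows essentially the same route as the paper's proof. Both use non-expansiveness of the projection with comparator $x^\star$, substitute $g_t = \nabla f_t(x_t) + \xi_t$, apply strong convexity at $(x_t, x^\star)$, and telescope with $1/\eta_t = \alpha t$ so that every $e_t$ coefficient vanishes and only $-\frac{\alpha T}{2}e_{T+1}\le 0$ remains. One small remark: the appeal to obliviousness is not needed for this pathwise statement. The one-step inequality holds for any single point $u \in \mathcal{K}$, even a path-dependent one such as the hindsight minimizer, as long as the same $u$ is used for every $t$. Assumption 4 only matters later, when $\langle \xi_t, x^\star - x_t\rangle$ must be a martingale difference.
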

\begin{proof}
By the non-expansiveness of Euclidean projection onto a convex set, for any $u \in \mathcal{K}$:
$$\|x_{t+1} - u\|^2 = \|\Pi_{\mathcal{K}}(x_t - \eta_t g_t) - u\|^2 \leq \|x_t - \eta_t g_t - u\|^2.$$
Expanding the right-hand side:
$$\|x_{t+1} - u\|^2 \leq \|x_t - u\|^2 - 2\eta_t\langle g_t, x_t - u\rangle + \eta_t^2\|g_t\|^2.$$
Rearranging:
\begin{equation}
\langle g_t, x_t - u\rangle \leq \frac{\|x_t - u\|^2 - \|x_{t+1} - u\|^2}{2\eta_t} + \frac{\eta_t}{2}\|g_t\|^2. \tag{L2.1}\end{equation}
Since $g_t = \nabla f_t(x_t) + \xi_t$:
$$\langle \nabla f_t(x_t), x_t - x^\star\rangle = \langle g_t, x_t - x^\star\rangle - \langle \xi_t, x_t - x^\star\rangle.$$
By Assumption 1 applied with $x = x_t$ and $y = x^\star$:
$$f_t(x^\star) \geq f_t(x_t) + \langle \nabla f_t(x_t), x^\star - x_t\rangle + \frac{\alpha}{2}\|x_t - x^\star\|^2.$$
Rearranging:
\begin{equation}
f_t(x_t) - f_t(x^\star) \leq \langle \nabla f_t(x_t), x_t - x^\star\rangle - \frac{\alpha}{2}e_t. \tag{L2.2}\end{equation}
This step uses only the definition of $\alpha$-strong convexity and is valid for all $x_t, x^\star \in \mathcal{K}$, regardless of whether $x^\star$ minimizes $f_t$ individually. Substituting into inequality (L2.2) and applying (L2.1) with $u = x^\star$:
$$f_t(x_t) - f_t(x^\star) \leq \frac{e_t - e_{t+1}}{2\eta_t} + \frac{\eta_t}{2}\|g_t\|^2 + \langle \xi_t, x^\star - x_t\rangle - \frac{\alpha}{2}e_t.$$
With $\eta_t = 1/(\alpha t)$, we have $\frac{1}{2\eta_t} = \frac{\alpha t}{2}$. The coefficient of $e_t$ in the telescoping is:
$$\frac{\alpha t}{2} - \frac{\alpha}{2} = \frac{\alpha(t-1)}{2}$$
and the coefficient of $-e_{t+1}$ is $\frac{\alpha t}{2}$. Summing over $t = 1, \ldots, T$:
\begin{equation}
R_T \leq \sum_{t=1}^T \left[\frac{\alpha(t-1)}{2}e_t - \frac{\alpha t}{2}e_{t+1}\right] + \sum_{t=1}^T \frac{\|g_t\|^2}{2\alpha t} + \sum_{t=1}^T \langle \xi_t, x^\star - x_t\rangle. \tag{L2.3}\end{equation}
We now evaluate the telescoping sum. Define $a_t = \frac{\alpha(t-1)}{2}$ (coefficient of $e_t$) and $b_t = \frac{\alpha t}{2}$ (coefficient of $-e_{t+1}$). Then:
$$\sum_{t=1}^T \left[a_t e_t - b_t e_{t+1}\right] = a_1 e_1 + \sum_{t=2}^T (a_t - b_{t-1})e_t - b_T e_{T+1}.$$
Computing $a_t - b_{t-1} = \frac{\alpha(t-1)}{2} - \frac{\alpha(t-1)}{2} = 0$ for all $t \geq 2$. Also $a_1 = 0$. Therefore:
$$\sum_{t=1}^T \left[a_t e_t - b_t e_{t+1}\right] = -\frac{\alpha T}{2}e_{T+1} \leq 0,$$
since $e_{T+1} \geq 0$. Dropping this non-positive term from (L2.3) yields:
$$R_T \leq \sum_{t=1}^T \frac{\|g_t\|^2}{2\alpha t} + \sum_{t=1}^T \langle \xi_t, x^\star - x_t\rangle.$$
\end{proof}
\subsection{Noise Norm Control}
\begin{lemma}
Under Assumptions 5--6, for any $\delta_0 \in (0,1)$, define the event
$$\mathcal{E}_0(\delta_0) = \left\{\|\xi_t\| \leq B_\xi(\delta_0) \;\;\forall t \in [T]\right\}, \quad B_\xi(\delta_0) = \sigma\sqrt{2d\log\frac{2dT}{\delta_0}}.$$
Then $\mathbb{P}(\mathcal{E}_0(\delta_0)) \geq 1 - \delta_0$.
\end{lemma}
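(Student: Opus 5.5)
The plan is a coordinate-wise reduction followed by a union bound over coordinates, signs, and rounds. Fix $t\in[T]$ and a coordinate $j\in[d]$, and apply Assumption~6 with the unit vectors $v=e_j$ and $v=-e_j$. Together with the conditional Chernoff bound, this gives for every $u>0$
$$\mathbb{P}\big(|\langle \xi_t, e_j\rangle| > u \,\big|\, \mathcal{F}_{t-1}\big) \le 2\exp\!\Big(-\frac{u^2}{2\sigma^2}\Big).$$
To get this, I will write $\mathbb{P}(\langle\xi_t,e_j\rangle>u\mid\mathcal{F}_{t-1})\le e^{-su}\,\mathbb{E}[e^{s\langle\xi_t,e_j\rangle}\mid\mathcal{F}_{t-1}]\le \exp(-su+s^2\sigma^2/2)$ and choose $s=u/\sigma^2$. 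Taking expectations removes the conditioning. Only Assumption~6 is needed here; Assumption~5 plays no quantitative role beyond making the noise centered.

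Next I choose the threshold $u$ so that each of the $dT$ coordinate events has probability at most $\delta_0/(dT)$. This requires $2\exp(-u^2/(2\sigma^2))=\delta_0/(dT)$, that is, $u=\sigma\sqrt{2\log(2dT/\delta_0)}$. A union bound over $j\in[d]$ and $t\in[T]$ then shows that, with probability at least $1-\delta_0$, every coordinate satisfies $|\langle\xi_t,e_j\rangle|\le u$.

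On that event, $\|\xi_t\|^2=\sum_j\langle\xi_t,e_j\rangle^2\le d u^2$ for every $t$. Hence $\|\xi_t\|\le \sqrt{d}\,u=\sigma\sqrt{2d\log(2dT/\delta_0)}=B_\xi(\delta_0)$, which is exactly the claimed radius. It follows that $\mathcal{E}_0(\delta_0)$ contains this event, so $\mathbb{P}(\mathcal{E}_0(\delta_0))\ge 1-\delta_0$.

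I do not expect a real obstacle. The only point needing care is the constant bookkeeping, which must produce the $2dT$ inside the logarithm: the factor $2$ comes from the two-sided tail and the factor $dT$ from the union bound. It also helps to note that the argument never conditions on $\mathcal{E}_0$. It is a plain probability statement, which matches how the event is used in Theorem~1, where the martingale term is handled separately without truncation. A sharper covering-net argument could replace the $\sqrt{d}$ scaling, but it is unnecessary for the stated $B_\xi$.
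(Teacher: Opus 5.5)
Your proposal is correct and follows the paper's proof essentially step for step. Both arguments use a conditional Chernoff bound for the two-sided coordinate tail $2\exp(-u^2/(2\sigma^2))$, choose $u=\sigma\sqrt{2\log(2dT/\delta_0)}$, take a union bound over the $dT$ coordinate--round pairs, and conclude with $\|\xi_t\|^2\le du^2$. You are slightly more careful than the paper in spelling out the Chernoff optimization $s=u/\sigma^2$ and in noting that the coordinate-wise good event is contained in $\mathcal{E}_0(\delta_0)$ rather than equal to it.
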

\begin{proof}
By Assumption 6, each coordinate $\xi_t^{(j)}$ is $\sigma$-sub-Gaussian conditionally on $\mathcal{F}_{t-1}$. By the sub-Gaussian tail bound: $\mathbb{P}(|\xi_t^{(j)}| > u \mid \mathcal{F}_{t-1}) \leq 2\exp(-u^2/(2\sigma^2))$. Setting $u = \sigma\sqrt{2\log(2dT/\delta_0)}$ gives $\mathbb{P}(|\xi_t^{(j)}| > u \mid \mathcal{F}_{t-1}) \leq \delta_0/(dT)$.
By a union bound over all $d$ coordinates and $T$ rounds:
$$\mathbb{P}\!\left(\exists t \in [T], j \in [d]: |\xi_t^{(j)}| > u\right) \leq dT \cdot \frac{\delta_0}{dT} = \delta_0.$$
On the complement event $\mathcal{E}_0(\delta_0)$: $\|\xi_t\|^2 = \sum_j (\xi_t^{(j)})^2 \leq d \cdot u^2 = 2d\sigma^2\log(2dT/\delta_0)$. Therefore $\|\xi_t\| \leq B_\xi(\delta_0)$, and $\|g_t\| \leq G + B_\xi(\delta_0) =: B_{\nabla}(\delta_0)$ on $\mathcal{E}_0(\delta_0)$.
\end{proof}
\subsection{Stopped Martingale Concentration}
\begin{lemma}
Let $(D_t, \mathcal{F}_t)_{t=1}^T$ be a martingale difference sequence with $|D_t| \leq b$ a.s. Suppose there exist $\mathcal{F}_{t-1}$-measurable non-negative random variables $(\sigma_t^2)_{t=1}^T$ and a deterministic constant $V > 0$ such that:\\
(a) $\mathbb{E}[D_t^2 \mid \mathcal{F}_{t-1}] \leq \sigma_t^2$ a.s. for all $t$;\\
(b) The stopping time $\tau = \min\{t \geq 1 : \sum_{s=1}^t \sigma_s^2 > V\} \wedge (T+1)$ satisfies $\mathbb{P}(\tau \leq T) \leq \delta_1$ for some $\delta_1 \in (0,1)$.
Then for all $\delta_2 \in (0,1)$:
$$\mathbb{P}\!\left(\sum_{t=1}^T D_t \geq \sqrt{2V\log\frac{1}{\delta_2}} + \frac{2b}{3}\log\frac{1}{\delta_2}\right) \leq \delta_1 + \delta_2.$$
\end{lemma}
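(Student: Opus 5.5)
The plan is to stop the martingale at $\tau$, apply Freedman's inequality (Lemma 1) to the stopped martingale with the deterministic variance budget $V$, and then handle the event $\{\tau \le T\}$ separately, where the stopped and original sums may differ.

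\textbf{Step 1: the stopped sequence.} Define $\tilde D_t = D_t \mathbf{1}\{t \le \tau\}$. Since $\tau$ is a stopping time, $\{t \le \tau\} = \{\tau \le t-1\}^c \in \mathcal{F}_{t-1}$. Two facts follow directly. First, $\mathbb{E}[\tilde D_t \mid \mathcal{F}_{t-1}] = \mathbf{1}\{t\le\tau\}\,\mathbb{E}[D_t\mid\mathcal{F}_{t-1}] = 0$, so $\tilde D_t$ is again a martingale difference sequence. Second, $|\tilde D_t| \le b$ almost surely.

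\textbf{Step 2: a deterministic bound on the quadratic variation.} By (a), the predictable quadratic variation of the stopped sequence satisfies
$$\tilde W_T = \sum_{t=1}^T \mathbf{1}\{t\le\tau\}\,\mathbb{E}[D_t^2\mid\mathcal{F}_{t-1}] \le \sum_{t=1}^{(\tau-1)\wedge T}\sigma_t^2 + \sigma_\tau^2\mathbf{1}\{\tau\le T\}.$$
This is the step I expect to be the main obstacle. By definition of $\tau$, the partial sums up to $\tau-1$ are at most $V$. The overshoot term $\sigma_\tau^2$, however, is not controlled by $V$. I would remove it by stopping one step earlier, using $\tilde D_t = D_t\mathbf{1}\{t<\tau\}$ instead. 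This choice is still legitimate, because $\{t<\tau\} = \{\tau\le t\}^c$ and $\sum_{s\le t}\sigma_s^2$ is $\mathcal{F}_{t-1}$-measurable (each $\sigma_s^2$ is predictable). Hence $\{\tau \le t\}\in\mathcal{F}_{t-1}$, so $\mathbf{1}\{t<\tau\}$ is predictable. With this choice,
$$\tilde W_T \le \sum_{t<\tau}\sigma_t^2 \le V$$
deterministically. The predictability of $\sigma_t^2$ is exactly what makes this strict-inequality version work.

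\textbf{Step 3: Freedman's inequality.} Apply Lemma 1 to $\tilde D_t$ with $v = V$; the event $\{\tilde W_T\le V\}$ holds surely by Step 2. This gives
$$\mathbb{P}\Big(\sum_t \tilde D_t \ge \varepsilon\Big) \le \exp\!\big(-\varepsilon^2/(2(V+b\varepsilon/3))\big).$$
Next choose $\varepsilon = \sqrt{2VL} + \tfrac{2b}{3}L$ with $L = \log(1/\delta_2)$, and check that $\varepsilon^2 \ge 2L(V + b\varepsilon/3)$. The right-hand side equals $2LV + \tfrac{2b}{3}L\varepsilon$. Expanding the left-hand side,
$$\varepsilon^2 = 2VL + 2\sqrt{2VL}\cdot\tfrac{2b}{3}L + \big(\tfrac{2b}{3}L\big)^2 \ge 2VL + \tfrac{2b}{3}L\Big(\sqrt{2VL} + \tfrac{2b}{3}L\Big) = 2VL + \tfrac{2b}{3}L\varepsilon,$$
so the inequality holds, and the tail probability is at most $\delta_2$.

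\textbf{Step 4: combining the two events.} On $\{\tau > T\}$ we have $\tau = T+1$, so every indicator equals $1$ and $\sum_t \tilde D_t = \sum_t D_t$. Therefore
$$\mathbb{P}\Big(\sum_t D_t\ge\varepsilon\Big) \le \mathbb{P}(\tau\le T) + \mathbb{P}\Big(\sum_t\tilde D_t\ge\varepsilon\Big) \le \delta_1+\delta_2,$$
using (b) for the first term and Step 3 for the second.
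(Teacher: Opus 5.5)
Your proof is correct and follows the same route as the paper. You stop the martingale, apply Freedman (Lemma 1) with the deterministic budget $V$, take $\varepsilon=\sqrt{2VL}+\tfrac{2b}{3}L$, and pay $\delta_1$ for the event $\{\tau\le T\}$.

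The one real difference is the stopping index, and there your version is the right one. The paper uses $\tilde D_t = D_t\mathbf{1}\{t\le\tau\}$ and asserts $\sum_{t\le\tau}\sigma_t^2\le V$ ``by the stopping condition.'' But when $\tau\le T$, the definition of $\tau$ gives $\sum_{s=1}^{\tau}\sigma_s^2>V$. So that argument only yields $\tilde W_T\le V+\sigma_\tau^2$, which is exactly the overshoot you flagged.

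Your observation repairs this. Each $\sigma_s^2$ is predictable and the partial sums are nondecreasing, so $\{\tau\le t\}=\{\sum_{s\le t}\sigma_s^2>V\}\in\mathcal{F}_{t-1}$ for $t\le T$. Hence $\mathbf{1}\{t<\tau\}$ is a legitimate predictable multiplier, $\tilde W_T\le V$ holds surely, and nothing is lost on $\{\tau=T+1\}$. This fixes the paper's step and keeps the stated constants. If one kept $\mathbf{1}\{t\le\tau\}$, the best available fix is to use $\mathbb{E}[D_\tau^2\mid\mathcal{F}_{\tau-1}]\le b^2$, which inflates the budget to $V+b^2$.

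Your direct check of $\varepsilon^2\ge 2L(V+b\varepsilon/3)$ is also tidier than the paper's detour through the root $\varepsilon_0$ and the monotonicity of $\phi$. In the write-up, define $\tilde D_t$ with $\mathbf{1}\{t<\tau\}$ from Step 1 onward rather than switching midway. The martingale-difference and boundedness properties carry over verbatim.
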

\begin{proof}
Define the stopped martingale differences $\tilde{D}_t = D_t \cdot \mathbf{1}\{t \leq \tau\}$. We verify that $(\tilde{D}_t, \mathcal{F}_t)$ is a martingale difference sequence.
The event $\{t \leq \tau\} = \{\tau \geq t\}$ is equivalent to $\{\sum_{s=1}^{t-1} \sigma_s^2 \leq V\}$ (since $\tau$ is the first time the cumulative variance proxy exceeds $V$, and the $\sigma_s^2$ are $\mathcal{F}_{s-1}$-measurable). Therefore $\{t \leq \tau\} \in \mathcal{F}_{t-1}$, which is the key measurability property. The conditional expectation:
$$\mathbb{E}[\tilde{D}_t \mid \mathcal{F}_{t-1}] = \mathbf{1}\{t \leq \tau\} \cdot \mathbb{E}[D_t \mid \mathcal{F}_{t-1}] = 0,$$
where we used (i) $\mathbf{1}\{t \leq \tau\}$ is $\mathcal{F}_{t-1}$-measurable (established above), so it factors out of the conditional expectation; (ii) $\mathbb{E}[D_t \mid \mathcal{F}_{t-1}] = 0$ since $(D_t)$ is a martingale difference sequence.
Since $|D_t| \leq b$ a.s., $|\tilde{D}_t| \leq b$ a.s. The conditional variance of the stopped process:
$$\mathbb{E}[\tilde{D}_t^2 \mid \mathcal{F}_{t-1}] = \mathbf{1}\{t \leq \tau\} \cdot \mathbb{E}[D_t^2 \mid \mathcal{F}_{t-1}] \leq \mathbf{1}\{t \leq \tau\} \cdot \sigma_t^2.$$
The predictable quadratic variation of the stopped process satisfies the deterministic bound:
$$\tilde{W}_T = \sum_{t=1}^T \mathbb{E}[\tilde{D}_t^2 \mid \mathcal{F}_{t-1}] \leq \sum_{t=1}^T \mathbf{1}\{t \leq \tau\} \cdot \sigma_t^2 \leq V.$$
The last inequality holds by definition of $\tau$: for all $t \leq \tau$, $\sum_{s=1}^t \sigma_s^2 \leq V$ (by the stopping condition), and for $t > \tau$, the indicator is zero. Since $(\tilde{D}_t)$ is a martingale difference sequence with $|\tilde{D}_t| \leq b$ and $\tilde{W}_T \leq V$ deterministically, Lemma 1 (Freedman) applies directly (without any conditioning on random events):
$$\mathbb{P}\!\left(\sum_{t=1}^T \tilde{D}_t \geq \varepsilon \;\;\text{and}\;\; \tilde{W}_T \leq V\right) \leq \exp\!\left(-\frac{\varepsilon^2}{2(V + b\varepsilon/3)}\right).$$
Since $\tilde{W}_T \leq V$ holds deterministically, the event "$\tilde{W}_T \leq V$" has probability 1, so:
\begin{equation}
\mathbb{P}\!\left(\sum_{t=1}^T \tilde{D}_t \geq \varepsilon\right) \leq \exp\!\left(-\frac{\varepsilon^2}{2(V + b\varepsilon/3)}\right). \tag{L4.1}\end{equation}
Setting $\varepsilon = \sqrt{2V\log(1/\delta_2)} + \frac{2b}{3}\log(1/\delta_2)$ and using the exact algebraic verification below, the right-hand side of (L4.1) is at most $\delta_2$.
Algebraic verification. We must verify that $\frac{\varepsilon^2}{2(V + b\varepsilon/3)} \geq \log\frac{1}{\delta_2}$. Define $L = \log(1/\delta_2) > 0$.
The inequality is equivalent to $\varepsilon^2 - \frac{2bL}{3}\varepsilon - 2VL \geq 0$. This quadratic in $\varepsilon$ has non-negative root:
$$\varepsilon_0 = \frac{bL}{3} + \sqrt{\frac{b^2L^2}{9} + 2VL}.$$
Any $\varepsilon \geq \varepsilon_0$ satisfies the inequality. Substituting $\varepsilon_0$ into the Freedman exponent gives $\frac{\varepsilon_0^2}{2(V + b\varepsilon_0/3)} = L$ exactly, confirming the failure probability is $e^{-L} = \delta_2$.
We show $\varepsilon = \sqrt{2VL} + \frac{2bL}{3} \geq \varepsilon_0 = \frac{bL}{3} + \sqrt{\frac{b^2L^2}{9} + 2VL}$. This is equivalent to:
$$\sqrt{2VL} + \frac{bL}{3} \geq \sqrt{\frac{b^2L^2}{9} + 2VL}.$$
The left-hand side is $\geq 0$ (both terms are non-negative). Squaring both sides (valid since both sides are non-negative):
$$2VL + \frac{2bL}{3}\sqrt{2VL} + \frac{b^2L^2}{9} \geq \frac{b^2L^2}{9} + 2VL.$$
Simplifying: $\frac{2bL}{3}\sqrt{2VL} \geq 0$, which holds since $b, L, V \geq 0$. 
Therefore $\varepsilon \geq \varepsilon_0$, and Freedman's inequality gives:
$$\exp\!\left(-\frac{\varepsilon^2}{2(V + b\varepsilon/3)}\right) \leq \exp\!\left(-\frac{\varepsilon_0^2}{2(V + b\varepsilon_0/3)}\right) = \exp(-L) = \delta_2,$$
where the first inequality uses the monotonicity of $\phi(\varepsilon) = \frac{\varepsilon^2}{2(V + b\varepsilon/3)}$ for $\varepsilon \geq 0$.
Define $\phi(\varepsilon) = \frac{\varepsilon^2}{2(V + b\varepsilon/3)}$ for $\varepsilon \geq 0$. Differentiating:
$$\phi'(\varepsilon) = \frac{2\varepsilon(V + b\varepsilon/3) - \varepsilon^2 \cdot b/3}{2(V + b\varepsilon/3)^2} = \frac{\varepsilon(2V + b\varepsilon/3)}{2(V + b\varepsilon/3)^2} > 0 \quad \text{for } \varepsilon > 0.$$
So $\phi$ is strictly increasing on $(0, \infty)$, confirming that $\varepsilon \geq \varepsilon_0$ implies $\phi(\varepsilon) \geq \phi(\varepsilon_0) = L$. The exact threshold $\varepsilon_0$ satisfies: in the regime $V \gg b^2L$ (large variance), $\varepsilon_0 \approx \sqrt{2VL} + \frac{bL}{3}$; in the regime $V \ll b^2L$ (small variance), $\varepsilon_0 \approx \frac{2bL}{3}$. Our threshold $\varepsilon = \sqrt{2VL} + \frac{2bL}{3}$ exceeds $\varepsilon_0$ by at most $\frac{bL}{3}$, which is a lower-order additive term. The bound is tight up to the constant in the second term. On the event $\{\tau > T\}$ (i.e., the stopping time is never triggered during $[T]$), the stopped and original processes coincide: $\tilde{D}_t = D_t$ for all $t \in [T]$, hence $\sum_{t=1}^T \tilde{D}_t = \sum_{t=1}^T D_t$.
Therefore:
$$\mathbb{P}\!\left(\sum_{t=1}^T D_t \geq \varepsilon\right) \leq \mathbb{P}(\tau \leq T) + \mathbb{P}\!\left(\sum_{t=1}^T \tilde{D}_t \geq \varepsilon\right) \leq \delta_1 + \delta_2,$$
where $\varepsilon = \sqrt{2V\log(1/\delta_2)} + \frac{2b}{3}\log(1/\delta_2)$.
\end{proof}

\subsection{Change-of-Measure for Bandit Lower Bound}
\begin{lemma}
Let $\mathcal{K} = [0, D] \subset \mathbb{R}$ (so that $\operatorname{diam}(\mathcal{K}) = D$). Let $P_0$ and $P_\mu$ be two bandit OCO environments on $\mathcal{K}$ with:\\

Under $P_0$: $f_t(x) = \frac{\alpha}{2}x^2 + \varepsilon_t$ where $\varepsilon_t \overset{\text{i.i.d.}}{\sim} \mathcal{N}(0, \sigma_\varepsilon^2)$;\\

Under $P_\mu$: $f_t(x) = \frac{\alpha}{2}(x - \mu)^2 + \varepsilon_t$ for some $\mu \in (0, D]$.\\

Both are $\alpha$-strongly convex with $G$-bounded gradients ($G = \alpha D$), since $\theta \in \{0, \mu\} \subset [0, D] = \mathcal{K}$ and $x \in \mathcal{K}$ imply $|\nabla f_t(x)| = \alpha|x - \theta| \leq \alpha D$. Under one-point bandit feedback, the KL divergence between the induced observation distributions satisfies
$$D_{\mathrm{KL}}(P_0^T \| P_\mu^T) \leq \frac{\alpha^2 \mu^2 D^2 T}{2\sigma_\varepsilon^2}.$$
The noise $\varepsilon_t$ enters additively into the loss function, not multiplicatively as $\varepsilon_t x$. This is a deliberate design choice: under additive noise, the observation $y_t = f_t(x_t) = \frac{\alpha}{2}x_t^2 + \varepsilon_t$ (under $P_0$) has conditional distribution $\mathcal{N}(\frac{\alpha}{2}x_t^2, \sigma_\varepsilon^2)$ with variance $\sigma_\varepsilon^2$ independent of $x_t$. A multiplicative model $\varepsilon_t x_t$ would produce variance $\sigma_\varepsilon^2 x_t^2$, creating a singularity at $x_t = 0$ that makes the per-round KL divergence $D_t$ unbounded when $x_t$ is small. The additive model avoids this entirely while preserving all required properties ($\alpha$-strong convexity, $G$-Lipschitz gradients, $\sigma_\varepsilon$-sub-Gaussian noise).
\end{lemma}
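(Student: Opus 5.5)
The plan is to combine the chain rule for KL divergence over the interaction transcript with the closed-form KL between two Gaussians of equal variance. The adaptivity of the algorithm is then harmless, because each per-round divergence will be bounded uniformly over the history.

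\textbf{Step 1: set up the transcript.} Fix an arbitrary (possibly randomized) bandit algorithm. Let $U$ be its internal randomness, drawn from the same law under both environments. Write the transcript as $H_T=(U,y_1,\ldots,y_T)$ with $y_t=f_t(x_t)$. Each decision $x_t$ is a deterministic function of $(U,y_1,\ldots,y_{t-1})$, so $x_t$ is measurable with respect to $H_{t-1}$. By the chain rule,
\[
D_{\mathrm{KL}}(P_0^T\|P_\mu^T)=D_{\mathrm{KL}}(P_0^U\|P_\mu^U)+\sum_{t=1}^T \mathbb{E}_{P_0}\!\left[D_{\mathrm{KL}}\big(P_0(y_t\mid H_{t-1})\,\|\,P_\mu(y_t\mid H_{t-1})\big)\right].
\]
The first term vanishes because $U$ has the same law under both environments. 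The algorithm's decision rule is also identical under both, so given $H_{t-1}$ the only difference between the two conditional laws is in the observation $y_t$.

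\textbf{Step 2: per-round divergence.} Conditionally on $H_{t-1}$, the additive-noise design gives $y_t\sim\mathcal{N}(\tfrac{\alpha}{2}x_t^2,\sigma_\varepsilon^2)$ under $P_0$ and $y_t\sim\mathcal{N}(\tfrac{\alpha}{2}(x_t-\mu)^2,\sigma_\varepsilon^2)$ under $P_\mu$. This uses that $\varepsilon_t$ is independent of the past and of $x_t$. The equal-variance Gaussian formula therefore gives the per-round KL as $(m_0-m_\mu)^2/(2\sigma_\varepsilon^2)$. Computing the mean gap,
\[
m_0-m_\mu=\tfrac{\alpha}{2}\big(x_t^2-(x_t-\mu)^2\big)=\alpha\mu\big(x_t-\tfrac{\mu}{2}\big).
\]
Since $x_t\in[0,D]$ and $\mu\in(0,D]$, we have $x_t-\mu/2\in[-\mu/2,\,D-\mu/2]$, so $|x_t-\mu/2|\le D$. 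Hence each conditional KL is at most $\alpha^2\mu^2D^2/(2\sigma_\varepsilon^2)$ almost surely, for every history.

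\textbf{Step 3: sum over rounds.} Taking expectations and summing over $t=1,\ldots,T$ gives the claimed bound $\alpha^2\mu^2D^2T/(2\sigma_\varepsilon^2)$. The strong convexity and gradient bound $G=\alpha D$ are immediate from $|\alpha(x-\theta)|\le\alpha D$ for $x,\theta\in[0,D]$, as noted in the statement.

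\textbf{Main obstacle.} The only delicate point is justifying the chain rule for an adaptive, randomized learner. The key facts are that the decision kernel is identical under both measures and that $x_t$ is determined by the history $H_{t-1}$. Together these make the policy contributions cancel, leaving only the observation kernels.

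The bound is deliberately crude: $|x_t-\mu/2|\le D$ rather than something tighter. This matches the stated constant, and it is uniform over histories, so no control of where the algorithm plays is needed.
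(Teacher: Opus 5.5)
Your proposal is correct and follows essentially the same route as the paper: the chain rule over rounds, the equal-variance Gaussian KL with mean gap $\alpha\mu(x_t-\mu/2)$, and a uniform per-round bound $|x_t-\mu/2|\le D$. The paper records the slightly sharper intermediate bound $D-\mu/2$ before relaxing to $D$, and your explicit handling of the learner's internal randomness $U$ is somewhat more careful than the paper's, which simply treats $x_t$ as $\mathcal{F}_{t-1}$-measurable.
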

\begin{proof}
Under one-point bandit feedback, the observation at round $t$ is $y_t = f_t(x_t)$. Under $P_0$: $y_t = \frac{\alpha}{2}x_t^2 + \varepsilon_t$. Conditional on $x_t$ (which is $\mathcal{F}_{t-1}$-measurable):
$$y_t \mid x_t \sim \mathcal{N}\!\left(\frac{\alpha}{2}x_t^2,\; \sigma_\varepsilon^2\right).$$
Under $P_\mu$: $y_t = \frac{\alpha}{2}(x_t - \mu)^2 + \varepsilon_t = \frac{\alpha}{2}x_t^2 - \alpha\mu x_t + \frac{\alpha\mu^2}{2} + \varepsilon_t$. Conditional on $x_t$:
$$y_t \mid x_t \sim \mathcal{N}\!\left(\frac{\alpha}{2}x_t^2 - \alpha\mu x_t + \frac{\alpha\mu^2}{2},\; \sigma_\varepsilon^2\right).$$
Crucially, the observation variance is $\sigma_\varepsilon^2$ under both hypotheses, independent of $x_t$. There is no singularity at $x_t = 0$. The conditional mean difference is:
$$\Delta\mu_t = \frac{\alpha}{2}x_t^2 - \left(\frac{\alpha}{2}x_t^2 - \alpha\mu x_t + \frac{\alpha\mu^2}{2}\right) = \alpha\mu x_t - \frac{\alpha\mu^2}{2}.$$
By the KL divergence formula for Gaussians with equal variance ($D_{\mathrm{KL}}(\mathcal{N}(a, \tau^2) \| \mathcal{N}(b, \tau^2)) = \frac{(a-b)^2}{2\tau^2}$):
\begin{equation}
D_t = D_{\mathrm{KL}}\!\left(P_0(y_t \mid x_t) \;\|\; P_\mu(y_t \mid x_t)\right) = \frac{(\Delta\mu_t)^2}{2\sigma_\varepsilon^2} = \frac{\left(\alpha\mu x_t - \frac{\alpha\mu^2}{2}\right)^2}{2\sigma_\varepsilon^2}. \tag{L5.1}\end{equation}
This expression is finite for all $x_t \in \mathcal{K}$, including $x_t = 0$ (where $D_t = \frac{\alpha^2\mu^4}{8\sigma_\varepsilon^2}$).
By the chain rule for KL divergence:
$$D_{\mathrm{KL}}(P_0^T \| P_\mu^T) = \sum_{t=1}^T \mathbb{E}_{P_0}[D_t] = \frac{1}{2\sigma_\varepsilon^2}\sum_{t=1}^T \mathbb{E}_{P_0}\!\left[\left(\alpha\mu x_t - \frac{\alpha\mu^2}{2}\right)^2\right].$$
 For each round $t$, since $x_t \in [0, D]$ and $\mu \in (0, D]$:
$$|\Delta\mu_t| = \left|\alpha\mu x_t - \frac{\alpha\mu^2}{2}\right| \leq \alpha\mu \max\!\left(\frac{\mu}{2},\; D - \frac{\mu}{2}\right) = \alpha\mu\!\left(D - \frac{\mu}{2}\right) \leq \alpha\mu D,$$
where the equality uses $D \geq \mu$, hence $D - \mu/2 \geq \mu/2$; the maximum is attained at $x_t = D$. Squaring:
$$(\Delta\mu_t)^2 \leq \alpha^2\mu^2 D^2.$$
Therefore:
$$D_{\mathrm{KL}}(P_0^T \| P_\mu^T) \leq \frac{T \cdot \alpha^2\mu^2 D^2}{2\sigma_\varepsilon^2} = \frac{\alpha^2\mu^2 D^2 T}{2\sigma_\varepsilon^2}.$$
\end{proof}

Verification that the hard instance lies in $\mathcal{E}(\alpha, G, D, d)$. Under both $P_0$ and $P_\mu$:
\\

(a) $\alpha$-strong convexity: $\nabla^2 f_t(x) = \alpha$ for all $x$, so $f_t$ is $\alpha$-strongly convex. \\

(b) $G$-Lipschitz gradient: Since $\mathcal{K} = [0, D]$, both $x$ and $\theta \in \{0, \mu\}$ lie in $[0, D]$. Therefore $|\nabla f_t(x)| = |\alpha(x - \theta)| \leq \alpha D =: G$.\\

(c) $\sigma_\varepsilon$-sub-Gaussian noise: The noise $\varepsilon_t \sim \mathcal{N}(0, \sigma_\varepsilon^2)$ is $\sigma_\varepsilon$-sub-Gaussian. The loss function $f_t(x) = \frac{\alpha}{2}(x-\theta)^2 + \varepsilon_t$ has $\nabla f_t(x) = \alpha(x-\theta)$, which is deterministic (no gradient noise). The stochasticity is in the function value only, which is the relevant quantity in the bandit setting. \\

\begin{remark}
Lemma 5 constructs a hard instance using Gaussian noise $\varepsilon_t \sim \mathcal{N}(0, \sigma_\varepsilon^2)$, which is a specific member of the $\sigma_\varepsilon$-sub-Gaussian family. For any environment class $\mathcal{E}$ that contains the Gaussian instance, a minimax lower bound over $\mathcal{E}$ is at least as large as the lower bound for the Gaussian sub-class:
$$\inf_{\mathcal{A}} \sup_{\text{env} \in \mathcal{E}} R_T(\mathcal{A}, \text{env}) \geq \inf_{\mathcal{A}} \sup_{\text{env} \in \mathcal{E}_{\mathrm{Gauss}}} R_T(\mathcal{A}, \text{env}),$$
since $\mathcal{E}_{\mathrm{Gauss}} \subseteq \mathcal{E}$. This subset inclusion is the only property needed. The Theorem 2 lower bound therefore applies to the entire $\sigma_\varepsilon$-sub-Gaussian environment class $\mathcal{E}(\alpha, G, D, d)$ as stated.
\end{remark}
\subsection{Expected Iterate Error Decay}
\begin{lemma}
Under Assumptions 1--7, consider projected OGD with step sizes 
$\eta_t = 1/(\alpha t)$ and stochastic gradients:
\[
x_{t+1} = \Pi_{\mathcal{K}}\!\left(x_t - \eta_t g_t\right).
\]
Let $x^\star_F = \arg\min_{x \in \mathcal{K}} F(x)$ and 
$e_t = \|x_t - x^\star_F\|^2$. Then
\begin{equation}
\mathbb{E}[e_{t+1}] 
\le 
\frac{C_e(1 + \log t)}{t}, 
\quad \forall t \geq 1,
\tag{L6.a}
\end{equation}
where 
\[
C_e = \frac{2(G^2 + \tilde{\sigma}^2)}{\alpha^2}, 
\qquad 
\tilde{\sigma}^2 = d\sigma^2 .
\]
Consequently,
\begin{equation}
\sum_{t=1}^T \mathbb{E}[e_t] 
\le 
D^2 + C_e \cdot H_T^{(2)},
\tag{L6.b}
\end{equation}
where
\[
H_T^{(2)} = \sum_{t=1}^{T-1} \frac{1+\log t}{t} 
\le 
\frac{(1+\log T)^2}{2} + 1 .
\]
\end{lemma}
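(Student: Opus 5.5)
The plan is the classical one-step contraction for SGD on a strongly convex population objective, followed by a telescoping unroll. Non-expansiveness of $\Pi_{\mathcal{K}}$ and $x_F^\star \in \mathcal{K}$ give, exactly as in the first step of Lemma 2 (with $u = x_F^\star$),
\[
e_{t+1} \le e_t - 2\eta_t\langle g_t, x_t - x_F^\star\rangle + \eta_t^2\|g_t\|^2 .
\]
I will take conditional expectations given $\mathcal{F}_{t-1}$ and handle the two stochastic terms separately.

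\emph{Second-moment term.} Write $g_t = \nabla f_t(x_t) + \xi_t$. By Assumption 5 the cross term has zero conditional mean. Assumption 6, applied to the coordinate directions, gives $\mathbb{E}[\|\xi_t\|^2 \mid \mathcal{F}_{t-1}] \le d\sigma^2$. With Assumption 2 this yields $\mathbb{E}[\|g_t\|^2] \le G^2 + d\sigma^2$.

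\emph{Linear term.} Here I use Assumption 7: $f_t$ is drawn i.i.d.\ from $\mathcal{D}$, independently of $\mathcal{F}_{t-1}$. Then
\[
\mathbb{E}[\langle g_t, x_t - x_F^\star\rangle \mid \mathcal{F}_{t-1}] = \langle \nabla F(x_t), x_t - x_F^\star\rangle,
\]
which requires exchanging the gradient and the expectation; this is justified by dominated convergence using the uniform gradient bound $G$. The function $F$ inherits $\alpha$-strong convexity from Assumption 1. Strong monotonicity gives $\langle \nabla F(x) - \nabla F(x_F^\star), x - x_F^\star\rangle \ge \alpha\|x - x_F^\star\|^2$, and the first-order optimality condition gives $\langle \nabla F(x_F^\star), x - x_F^\star\rangle \ge 0$. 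Together these give $\langle \nabla F(x_t), x_t - x_F^\star\rangle \ge \alpha e_t$.

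Plugging in $\eta_t = 1/(\alpha t)$ and taking full expectations gives
\[
\mathbb{E}[e_{t+1}] \le \Bigl(1 - \tfrac{2}{t}\Bigr)\mathbb{E}[e_t] + \frac{C_e/2}{t^2}.
\]
Since $\mathbb{E}[e_t] \ge 0$, I weaken the factor to $(1 - 1/t)$, which is the contraction quoted in the proof sketch of Corollary 1.

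\emph{Unrolling.} Multiplying by $t$ gives $t\,\mathbb{E}[e_{t+1}] \le (t-1)\mathbb{E}[e_t] + C_e/(2t)$. The right-hand side telescopes, and the $t=1$ coefficient $(t-1)$ vanishes, so the unknown $e_1$ drops out. This yields
\[
t\,\mathbb{E}[e_{t+1}] \le \frac{C_e}{2}\sum_{s=1}^t \frac{1}{s} \le \frac{C_e}{2}(1 + \log t),
\]
which is (L6.a) with a factor $2$ to spare.

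\emph{Summing.} For (L6.b) I bound $\mathbb{E}[e_1] \le D^2$ using the diameter. Reindexing, $\sum_{t=2}^T \mathbb{E}[e_t] = \sum_{s=1}^{T-1}\mathbb{E}[e_{s+1}] \le C_e H_T^{(2)}$. To bound $H_T^{(2)}$, note that $u \mapsto (1+\log u)/u$ has derivative $-\log u/u^2 \le 0$ on $[1,\infty)$, so it is nonincreasing. An integral comparison then gives
\[
H_T^{(2)} \le 1 + \int_1^{T-1}\frac{1+\log u}{u}\,du \le 1 + \frac{(1+\log T)^2}{2} - \frac12,
\]
which is at most the stated bound.

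The main obstacle is the identity $\mathbb{E}[\nabla f_t(x_t) \mid \mathcal{F}_{t-1}] = \nabla F(x_t)$. It needs $f_t$ to be independent of $\mathcal{F}_{t-1}$. This holds because $\mathcal{F}_{t-1}$ only involves $f_1, \dots, f_{t-1}$ and the oracle noise up to round $t-1$, and the losses are i.i.d. I would state explicitly that Assumption 4 is being read as ``drawn i.i.d.\ before the game'', which is consistent with Assumption 7, and that $x_t$ is $\mathcal{F}_{t-1}$-measurable. Everything else in the argument is routine bookkeeping.
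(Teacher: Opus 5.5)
Your proposal is correct and follows the paper's proof in all essentials. It uses the same projection-based one-step inequality, the same passage to $\nabla F(x_t)$ via the i.i.d.\ assumption and strong convexity of $F$, the same recursion $a_{t+1}\le(1-1/t)a_t+c/t^2$ unrolled so that $a_1$ drops out, and $\mathbb{E}[e_1]\le D^2$ for the sum. The differences are only in constants and bookkeeping. You use the sharper second moment $G^2+d\sigma^2$; the vanishing cross term really needs unbiasedness conditional on $f_t$ as well, but your factor-$2$ slack absorbs the paper's cruder $2(G^2+d\sigma^2)$ anyway. You also use the full monotonicity constant $\alpha$ rather than $\alpha/2$, and you bound $H_T^{(2)}$ by an integral comparison instead of the paper's identity $\sum_{k\le N}H_k/k=\tfrac12H_N^2+\tfrac12\sum_{k\le N}k^{-2}$.
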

\begin{proof}
By the non-expansiveness of projection and the OGD update:
$$e_{t+1} = \|x_{t+1} - x^\star_F\|^2 \leq \|x_t - \eta_t g_t - x^\star_F\|^2 = e_t - 2\eta_t\langle g_t, x_t - x^\star_F\rangle + \eta_t^2\|g_t\|^2.$$
Taking conditional expectations given $\mathcal{F}_{t-1}$ (which fixes $x_t$), using $\mathbb{E}[g_t \mid \mathcal{F}_{t-1}] = \nabla f_t(x_t)$ (Assumption 5) and that $f_t$ is independent of $\mathcal{F}_{t-1}$ (i.i.d. assumption):
$$\mathbb{E}[e_{t+1} \mid \mathcal{F}_{t-1}] \leq e_t - 2\eta_t\mathbb{E}[\langle \nabla f_t(x_t), x_t - x^\star_F\rangle \mid \mathcal{F}_{t-1}] + \eta_t^2\mathbb{E}[\|g_t\|^2 \mid \mathcal{F}_{t-1}].$$
Since $f_t$ is i.i.d. and independent of $\mathcal{F}_{t-1}$, and $x_t$ is $\mathcal{F}_{t-1}$-measurable:
$$\mathbb{E}[\langle \nabla f_t(x_t), x_t - x^\star_F\rangle \mid \mathcal{F}_{t-1}] = \langle \nabla F(x_t), x_t - x^\star_F\rangle,$$
where $F(x) = \mathbb{E}[f_t(x)]$ is the population loss. Since each $f_t$ is $\alpha$-strongly convex (Assumption 1) and $F = \mathbb{E}[f_t]$, the population loss $F$ is also $\alpha$-strongly convex. Therefore:
$$\langle \nabla F(x_t), x_t - x^\star_F\rangle \geq F(x_t) - F(x^\star_F) + \frac{\alpha}{2}e_t \geq \frac{\alpha}{2}e_t,$$
where the last inequality uses $F(x_t) \geq F(x^\star_F)$ (since $x^\star_F$ minimizes $F$).
By $\|g_t\|^2 = \|\nabla f_t(x_t) + \xi_t\|^2 \leq 2\|\nabla f_t(x_t)\|^2 + 2\|\xi_t\|^2$ and taking conditional expectations:
$$\mathbb{E}[\|g_t\|^2 \mid \mathcal{F}_{t-1}] \leq 2G^2 + 2\mathbb{E}[\|\xi_t\|^2 \mid \mathcal{F}_{t-1}] \leq 2G^2 + 2\tilde{\sigma}^2 = 2(G^2 + \tilde{\sigma}^2),$$
where we used Assumption 2 ($\|\nabla f_t(x)\| \leq G$) and Assumption 6 ($\mathbb{E}[\|\xi_t\|^2 \mid \mathcal{F}_{t-1}] \leq d\sigma^2 = \tilde{\sigma}^2$).
Taking full expectations. Define $a_t = \mathbb{E}[e_t]$:
\begin{equation}
a_{t+1} \leq \left(1 - \frac{1}{t}\right)a_t + \frac{C_e}{t^2}, \quad C_e = \frac{2(G^2 + \tilde{\sigma}^2)}{\alpha^2}. \tag{L6.rec}\end{equation}
We solve (L6.rec) by unrolling. The homogeneous part has multiplicative factor $\prod_{s=k}^{t} (1 - 1/s) = \prod_{s=k}^{t} \frac{s-1}{s} = \frac{k-1}{t}$ (telescoping product). In particular, $\prod_{s=1}^{t} (1-1/s) = 0$ for any $t \geq 1$ (since the $s=1$ factor is $0$). Therefore the initial condition $a_1$ is completely absorbed, and:
\begin{equation}
a_{t+1} \leq \sum_{s=1}^{t} \frac{C_e}{s^2} \prod_{j=s+1}^{t} \frac{j-1}{j} = \sum_{s=1}^{t} \frac{C_e}{s^2} \cdot \frac{s}{t} = \frac{C_e}{t}\sum_{s=1}^{t}\frac{1}{s} = \frac{C_e \cdot H_t}{t}, \tag{L6.unroll}\end{equation}
where $H_t = \sum_{s=1}^t 1/s \leq 1 + \log t$ is the $t$-th harmonic number.
For $t=1$: $a_2 \leq (1-1)a_1 + C_e/1 = C_e = C_e \cdot H_1 / 1$. For the inductive step: if $a_{t+1} = \frac{C_e H_t}{t}$, then $a_{t+2} \leq \frac{t}{t+1} \cdot \frac{C_e H_t}{t} + \frac{C_e}{(t+1)^2} = \frac{C_e H_t}{t+1} + \frac{C_e}{(t+1)^2} = \frac{C_e}{t+1}(H_t + \frac{1}{t+1}) = \frac{C_e H_{t+1}}{t+1}$. 
This establishes (L6.a): $\mathbb{E}[e_{t+1}] \leq C_e(1+\log t)/t$ for all $t \geq 1$.
For the sum:
$$\sum_{t=1}^T \mathbb{E}[e_t] = a_1 + \sum_{t=2}^T a_t \leq D^2 + \sum_{t=2}^T \frac{C_e H_{t-1}}{t-1} = D^2 + C_e\sum_{k=1}^{T-1}\frac{H_k}{k}.$$
The sum $\sum_{k=1}^N H_k/k$ satisfies: by Abel summation (or direct computation), $\sum_{k=1}^N H_k/k = \frac{1}{2}H_N^2 + \frac{1}{2}\sum_{k=1}^N 1/k^2 \leq \frac{(1+\log N)^2}{2} + \frac{\pi^2}{12}$.
Therefore $\sum_{t=1}^T \mathbb{E}[e_t] \leq D^2 + \frac{C_e}{2}(1+\log T)^2 + \frac{\pi^2 C_e}{12} =: Q'(1+\log T)^2$, where the constant $Q'$ absorbs $D^2$, $C_e/2$, and lower-order terms. 
\end{proof}
\begin{remark}
One might expect $\mathbb{E}[e_t] \leq Q/t$ by analogy with the regret bound. However, the one-step contraction $a_{t+1} \leq (1-1/t)a_t + C_e/t^2$ has contraction factor $1-1/t$ that exactly cancels the $1/t$ decay: an inductive proof of $a_t \leq Q/t$ requires $C_e(t+1) \leq Q$ for all $t$, which is impossible for finite $Q$. The correct rate is $O(\log t/t)$, proved above by direct unrolling. To recover the exact $O(1/t)$ rate, one would need a stronger step size such as $\eta_t = 2/(\alpha(t+1))$, which gives contraction factor $1-2/(t+1) < 1 - 1/t$ and leaves room for the noise term.
\end{remark}
\section{Proof of Theorem 1}
By Lemma 2:
\begin{equation}R_T \leq \sum_{t=1}^T \frac{\|g_t\|^2}{2\alpha t} + \sum_{t=1}^T \langle \xi_t, x^\star - x_t\rangle. \tag{T1.1}\end{equation}
Write the first and second sums as $S_T$ and $M_T$, respectively. We bound them separately, using independent probability budgets $\delta_0$ and $\delta_1$.
On the event $\mathcal{E}_0(\delta_0)$ from Lemma 3 (which has $\mathbb{P}(\mathcal{E}_0(\delta_0)) \geq 1-\delta_0$), each $\|g_t\| \leq B_{\nabla}(\delta_0)$. Therefore:
\begin{equation}S_T \leq \frac{B_{\nabla}(\delta_0)^2}{2\alpha}(1 + \log T). \tag{T1.2}\end{equation}
Define $D_t = \langle \xi_t, x^\star - x_t\rangle$. Since $x_t$ and $x^\star$ are both $\mathcal{F}_{t-1}$-measurable (by Assumption 4, $f_t$ is deterministic, so $x^\star$ is non-random; $x_t$ depends only on $g_1, \ldots, g_{t-1}$), and $\mathbb{E}[\xi_t \mid \mathcal{F}_{t-1}] = 0$ (Assumption 5):
$$\mathbb{E}[D_t \mid \mathcal{F}_{t-1}] = \langle \mathbb{E}[\xi_t \mid \mathcal{F}_{t-1}], x^\star - x_t\rangle = 0.$$
Therefore $(D_t, \mathcal{F}_t)_{t=1}^T$ is a martingale difference sequence.
We bound $M_T$ using the exponential supermartingale method for conditionally sub-Gaussian differences, which does not require bounded differences and hence requires no truncation.
By Assumption 6, for any unit vector $v$, $\langle \xi_t, v\rangle$ is conditionally $\sigma$-sub-Gaussian. Since $x_t, x^\star$ are $\mathcal{F}_{t-1}$-measurable, define $v_t = (x^\star - x_t)/\|x^\star - x_t\|$ (a unit vector, $\mathcal{F}_{t-1}$-measurable). Then $D_t = \|x^\star - x_t\| \cdot \langle \xi_t, v_t\rangle$, and for all $s \in \mathbb{R}$:
\begin{equation}\mathbb{E}\!\left[e^{s D_t} \mid \mathcal{F}_{t-1}\right] = \mathbb{E}\!\left[e^{s\|x^\star - x_t\| \langle \xi_t, v_t\rangle} \mid \mathcal{F}_{t-1}\right] \leq \exp\!\left(\frac{s^2 \|x^\star - x_t\|^2 \sigma^2}{2}\right), \tag{T1.SG}\end{equation}
where the inequality uses the conditional sub-Gaussian property with $v = v_t$ (which is $\mathcal{F}_{t-1}$-measurable and thus can be treated as fixed in the conditional expectation). Define $\sigma_t^2 = \sigma^2\|x^\star - x_t\|^2 \leq \sigma^2 D^2$.
 For any $\lambda > 0$, define:
$$Z_t = \exp\!\left(\lambda \sum_{s=1}^t D_s - \frac{\lambda^2}{2}\sum_{s=1}^t \sigma_s^2\right).$$
We claim $(Z_t)_{t=0}^T$ is a supermartingale with $Z_0 = 1$. Verify:
$$\mathbb{E}[Z_t \mid \mathcal{F}_{t-1}] = Z_{t-1} \cdot \mathbb{E}\!\left[e^{\lambda D_t} \mid \mathcal{F}_{t-1}\right] \cdot e^{-\lambda^2\sigma_t^2/2} \leq Z_{t-1} \cdot e^{\lambda^2\sigma_t^2/2} \cdot e^{-\lambda^2\sigma_t^2/2} = Z_{t-1},$$
where the inequality uses (T1.SG) with $s = \lambda$. Since $Z_0 = 1$ and $(Z_t)$ is a non-negative supermartingale, $\mathbb{E}[Z_T] \leq 1$.
By Markov's inequality:
$$\mathbb{P}(Z_T \geq 1/\delta_1) \leq \delta_1 \cdot \mathbb{E}[Z_T] \leq \delta_1.$$
The event $\{Z_T < 1/\delta_1\}$ is equivalent to:
$$\lambda \sum_{t=1}^T D_t < \log\frac{1}{\delta_1} + \frac{\lambda^2}{2}\sum_{t=1}^T \sigma_t^2.$$
Since $\sum_t \sigma_t^2 \leq \sigma^2 D^2 T =: V$ deterministically:
$$\mathbb{P}\!\left(\sum_{t=1}^T D_t \geq \frac{\log(1/\delta_1)}{\lambda} + \frac{\lambda V}{2}\right) \leq \delta_1.$$
Setting $\lambda = \sqrt{2\log(1/\delta_1)/V}$ (minimizing the right-hand side):
$$\frac{\log(1/\delta_1)}{\lambda} + \frac{\lambda V}{2} = \sqrt{2V\log\frac{1}{\delta_1}}.$$
Therefore:
\begin{equation}
\mathbb{P}\!\left(M_T \geq \sigma D\sqrt{2T\log\frac{1}{\delta_1}}\right) \leq \delta_1. \tag{T1.3}\end{equation}
The sub-Gaussian supermartingale argument uses only the conditional moment generating function bound (T1.SG), which holds for all $s \in \mathbb{R}$ without any truncation. The MGF bound is a direct consequence of Assumption 6 (sub-Gaussian noise) and requires no a.s. boundedness of $D_t$. Freedman's inequality (Lemma 1) is not used here; it is reserved for settings where bounded differences are available by assumption (e.g., Theorem 3, where constraint noise is a.s. bounded by Assumption 8(d)).
Bounding $S_T$ uses the truncation event $\mathcal{E}_0(\delta_0)$ with failure probability $\delta_0$. Bounding $M_T$ uses the sub-Gaussian supermartingale argument with failure probability $\delta_1$. These are independent probability budgets applied to different terms. By union bound:
$$\mathbb{P}\!\left(R_T \leq \frac{B_{\nabla}(\delta_0)^2}{2\alpha}(1+\log T) + \sigma D\sqrt{2T\log\frac{1}{\delta_1}}\right) \geq 1 - \delta_0 - \delta_1.$$
\begin{remark}
The bound for $M_T$ no longer contains a $B_\xi(\delta_0)$-dependent term (unlike the previous Freedman-based approach). The truncation parameter $\delta_0$ appears only in the $S_T$ bound (through $B_{\nabla}(\delta_0)$), and the martingale confidence $\delta_1$ appears only in the $M_T$ bound. This cleanly separates the two probability budgets. 
\end{remark}
\section{Proof of Theorem 2}
Fix $d = 1$ for clarity; the $d$-dimensional extension follows by Assouad's lemma (embedding independent 1D problems along orthogonal directions).
Work on $\mathcal{K} = [0, D]$ (so that $\operatorname{diam}(\mathcal{K}) = D$). Partition $[T]$ into $K = \lfloor \log_2 T \rfloor$ epochs: epoch $k \in \{0, 1, \ldots, K-1\}$ has $T_k = 2^k$ rounds, covering at most $\sum_{k=0}^{K-1} 2^k = 2^K - 1 \leq T$ rounds.
For each epoch $k$, define two hypotheses indexed by $\theta_k \in \{0, 1\}$:
- $\theta_k = 0$: $f_t^{(k)}(x) = \frac{\alpha}{2}x^2 + \varepsilon_t$;
- $\theta_k = 1$: $f_t^{(k)}(x) = \frac{\alpha}{2}(x - \Delta_k)^2 + \varepsilon_t$;
where $\varepsilon_t \overset{\text{i.i.d.}}{\sim} \mathcal{N}(0, \sigma_\varepsilon^2)$ and $\Delta_k > 0$ will be chosen. The noise is additive (independent of $x$), matching the Lemma 5 construction. Both hypotheses are $\alpha$-strongly convex with $G$-Lipschitz gradients ($G = \alpha D$), since $x, \theta_k \in [0, D]$ implies $|\alpha(x - \theta_k)| \leq \alpha D$.
Under hypothesis $\theta_k = 1$, the per-round minimizer is $x_k^\star = \Delta_k$ (since $\nabla f_t^{(k)}(x) = \alpha(x - \Delta_k)$, which is zero at $x = \Delta_k$; the noise $\varepsilon_t$ does not affect the minimizer). Under hypothesis $\theta_k = 0$, the minimizer is $x = 0$. An algorithm that behaves as if $\theta_k = 0$ (keeping $x_t$ near 0) incurs per-epoch regret under $\theta_k = 1$:
$$R_k^{(\text{epoch})} = \sum_{t \in \text{epoch } k} \left[f_t^{(k)}(x_t) - f_t^{(k)}(\Delta_k)\right] = \sum_{t \in \text{epoch } k} \frac{\alpha}{2}(x_t - \Delta_k)^2 - 0 \geq \frac{\alpha \Delta_k^2}{4} \cdot T_k$$
(since $|x_t - \Delta_k| \geq \Delta_k/\sqrt{2}$ whenever $|x_t| \leq \Delta_k/\sqrt{2}$, which holds for at least half the rounds if the algorithm keeps $x_t$ near 0).
By Lemma 5, the KL divergence between the observation distributions in epoch $k$ satisfies:
$$D_{\mathrm{KL}}^{(k)} \leq \frac{\alpha^2\Delta_k^2 D^2 T_k}{2\sigma_\varepsilon^2}.$$
Choose $\Delta_k = \frac{c_0\sigma_\varepsilon}{\alpha D\sqrt{T_k}}$ (with $c_0 > 0$ to be determined) so that:
\begin{equation}
D_{\mathrm{KL}}^{(k)} \leq \frac{c_0^2}{2}. \tag{T2.1}\end{equation}
With this choice, $R_k^{(\text{epoch})} \geq \frac{\alpha\Delta_k^2 T_k}{4} = \frac{c_0^2\sigma_\varepsilon^2}{4\alpha D^2}$ (a positive constant independent of $k$).
Bretagnolle--Huber's lemma states: for any event $A$ (i.e., any test),
$$P_0^{(k)}(A) + P_1^{(k)}(A^c) \geq \frac{1}{2}e^{-D_{\mathrm{KL}}^{(k)}}.$$
Choosing $c_0^2 = 2\log 2$ (so $D_{\mathrm{KL}}^{(k)} \leq \log 2$ and $\frac{1}{2}e^{-\log 2} = 1/4$):
\begin{equation}
\min_A \max\!\left(P_0^{(k)}(A),\; P_1^{(k)}(A^c)\right) \geq \frac{1}{4}. \tag{T2.BH}\end{equation}
Crucially, this holds for ANY measurable event $A$, including events defined by an adaptive algorithm that uses observations from all previous epochs. The algorithm's policy $\pi_k$ in epoch $k$ is a (possibly randomized) function of the history $H_{k-1} = \{(x_s, y_s) : s \in \text{epochs } 0, \ldots, k-1\}$. Given $H_{k-1}$, the algorithm induces a specific test for $\theta_k$: the "accept $\theta_k = 0$" event is $A_k = A_k(H_{k-1})$, which is a measurable function of epoch-$k$ observations. Since (T2.BH) holds for ALL tests, it holds for $A_k$ regardless of $H_{k-1}$.
Therefore, the conditional error probability satisfies:
\begin{equation}
\mathbb{P}(\text{error in epoch } k \mid H_{k-1}) \geq \frac{1}{4} \quad \text{almost surely for all } H_{k-1}. \tag{T2.CE}\end{equation}
Define $E_k = \mathbf{1}\{\text{error in epoch } k\}$ and $E = \sum_{k=0}^{K-1} E_k$.
Claim: $E$ stochastically dominates $\mathrm{Bin}(K, 1/4)$.
We use the following standard result:
Let $X_1, \ldots, X_n$ be $\{0,1\}$-valued random variables satisfying $\mathbb{P}(X_k = 1 \mid X_1, \ldots, X_{k-1}) \geq p$ almost surely for all $k$. Then $\sum_{k=1}^n X_k$ stochastically dominates $\mathrm{Bin}(n, p)$.
By induction on $n$. Construct i.i.d. $Y_k \sim \mathrm{Ber}(p)$ and couple: on the event $\{\mathbb{P}(X_k = 1 \mid X_1, \ldots, X_{k-1}) = q_k\}$ with $q_k \geq p$, sample $X_k$ and $Y_k$ jointly so that $X_k \geq Y_k$ a.s. (possible because $q_k \geq p$: set $X_k = Y_k = 1$ with probability $p$, $X_k = 1, Y_k = 0$ with probability $q_k - p$, and $X_k = Y_k = 0$ with probability $1 - q_k$). Then $\sum X_k \geq \sum Y_k$ pathwise. 
From (T2.CE), we derive the conditional probability with respect to the coarser filtration generated by past errors. Since $(E_0, \ldots, E_{k-1})$ is a function of $H_{k-1}$ (but NOT the other way around---$(E_0, \ldots, E_{k-1})$ is a strict coarsening of $H_{k-1}$), we use the tower property of conditional expectation:
$$\mathbb{P}(E_k = 1 \mid E_0, \ldots, E_{k-1}) = \mathbb{E}\!\left[\mathbb{P}(E_k = 1 \mid H_{k-1}) \;\middle|\; E_0, \ldots, E_{k-1}\right] \geq \mathbb{E}\!\left[\frac{1}{4} \;\middle|\; E_0, \ldots, E_{k-1}\right] = \frac{1}{4},$$
where the inequality uses (T2.CE) ($\mathbb{P}(E_k = 1 \mid H_{k-1}) \geq 1/4$ a.s.) and the monotonicity of conditional expectation. By the Conditional Coupling Lemma, $E \succeq \mathrm{Bin}(K, 1/4)$.
The total regret: $R_T \geq E \cdot r_0$, where $r_0 = \frac{c_0^2\sigma_\varepsilon^2}{4\alpha D^2} = \frac{\sigma_\varepsilon^2\log 2}{2\alpha D^2}$.
For $E \succeq \mathrm{Bin}(K, 1/4)$, by the multiplicative Chernoff bound:
\begin{equation}
\mathbb{P}(E \leq K/8) \leq \exp(-c_\star K), \quad c_\star = d_{\mathrm{KL}}(1/8 \| 1/4). \tag{T2.BT}\end{equation}
We compute $c_\star$ exactly: $d_{\mathrm{KL}}(1/8 \| 1/4) = \frac{1}{8}\ln\frac{1/8}{1/4} + \frac{7}{8}\ln\frac{7/8}{3/4} = -\frac{\ln 2}{8} + \frac{7}{8}\ln\frac{7}{6} \approx -0.0866 + 0.1335 = 0.0469$.
Therefore: $\mathbb{P}(E \geq K/8) \geq 1 - e^{-c_\star K}$, giving $\mathbb{P}(R_T \geq r_0 K/8) \geq 1 - e^{-c_\star K}$.
Setting $\delta = e^{-c_\star K}$, equivalently $K = \frac{1}{c_\star}\log\frac{1}{\delta}$. From the front:
\begin{equation}
\mathbb{P}\!\left(R_T \geq \frac{r_0}{8c_\star}\log\frac{1}{\delta}\right) \geq 1 - \delta. \tag{T2.strong}\end{equation}
For the theorem statement, we note that for $\delta \in [T^{-c_\star/\log 2}, 1/4]$, we have $\delta \leq 1/4$, hence $1-\delta \geq 3/4 \geq \delta$. Therefore:
\begin{equation}\mathbb{P}\!\left(R_T \geq \frac{r_0}{8c_\star}\log\frac{1}{\delta}\right) \geq \delta. \tag{T2.LB}\end{equation}
We state the theorem using the weaker (T2.LB) form because it is the conventional form for minimax lower bounds (cf. Tsybakov 2009): $\mathbb{P}(R_T \geq \beta) \geq \delta$ means the algorithm fails to achieve regret $< \beta$ with non-negligible probability $\delta$. The stronger (T2.strong) form is available and noted here for completeness.
This is valid when $K = \lfloor\log_2 T\rfloor \geq \frac{1}{c_\star}\log\frac{1}{\delta}$, i.e., $\delta \geq T^{-c_\star/\log 2} \approx T^{-0.068}$.
At the boundary $\delta = T^{-c_\star/\log 2}$: $R_T \geq \frac{r_0}{8\log 2}\log T = \Omega(\frac{\sigma_\varepsilon^2}{\alpha D^2}\log T)$ with probability $\geq T^{-0.068}$.
The extension to $d \geq 2$ follows by embedding $d$ independent 1D problems along orthogonal coordinate directions. The lower bound scales as $d \cdot r_0/(8c_\star) \cdot \log(1/\delta)$. The detailed construction and verification are standard and are omitted here; we state the 1D result as the main theorem. 
\section{Proof of Theorem 3}
For any fixed comparator $x \in \mathcal{K}$ and any fixed $\lambda \in \mathbb{R}_{\geq 0}^m$, the standard OGD analysis for the primal and dual variables yields:
\begin{equation}
\sum_{t=1}^T \left[f_t(x_t) - f_t(x) + \sum_i \lambda^{(i)} g_t^{(i)}(x_t)\right] \leq \frac{D^2}{2\eta} + \frac{\eta(G_f + \Lambda L_g)^2 T}{2} + \sum_{i=1}^m \frac{(\lambda^{(i)})^2}{2\mu} + \frac{\mu}{2}\sum_i\sum_t (g_t^{(i)}(x_t))^2. \tag{T3.1}\end{equation}
This follows from the regret bound of OGD for the primal (bounded by $\frac{D^2}{2\eta} + \frac{\eta}{2}\sum_t \|\text{primal gradient}\|^2$) and the regret bound of projected gradient ascent for the dual (bounded by $\frac{\|\lambda\|^2}{2\mu} + \frac{\mu}{2}\sum_t \|g_t(x_t)\|^2$), combined with the bilinearity of the Lagrangian coupling.
By the dual update rule and the a.s. boundedness $|g_t^{(i)}(x_t)| \leq B_g$ (Assumption 8(d)):
$$\lambda_{t+1}^{(i)} \leq \lambda_t^{(i)} + \mu B_g \leq \lambda_1^{(i)} + T\mu B_g.$$
With $\lambda_1^{(i)} = 0$ and $\mu = 1/\sqrt{T}$: $\lambda_t^{(i)} \leq B_g\sqrt{T}$. Define $\Lambda = B_g\sqrt{T}$ (this is the a priori dual bound used in the step-size calibration).
Set $x = x^\star$ (the best feasible point satisfying $\bar{g}^{(i)}(x^\star) \leq 0$) and $\lambda = 0$ in (T3.1):
$$R_T = \sum_{t=1}^T [f_t(x_t) - f_t(x^\star)] \leq \frac{D^2}{2\eta} + \frac{\eta(G_f + \Lambda L_g)^2 T}{2} + \sum_i \frac{\mu}{2}\sum_t (g_t^{(i)}(x_t))^2.$$
With our step sizes: $\frac{D^2}{2\eta} + \frac{\eta(G_f + \Lambda L_g)^2 T}{2} = D(G_f + \Lambda L_g)\sqrt{T}$ and $\frac{\mu}{2}\sum_i\sum_t (g_t^{(i)}(x_t))^2 \leq \frac{mB_g^2\sqrt{T}}{2}$.
Thus the deterministic regret bound is $R_T^{\det} = \mathcal{O}(\sqrt{T})$.
For the high-probability refinement: the stochastic terms enter through the constraint noise. Set $x = x^\star$ and $\lambda = \lambda^\star$ (an optimal dual variable) in (T3.1). The term $\sum_t \sum_i \lambda^\star_i g_t^{(i)}(x_t)$ decomposes as:
$$\sum_t \sum_i \lambda^\star_i g_t^{(i)}(x_t) = \sum_t \sum_i \lambda^\star_i \bar{g}^{(i)}(x_t) + \sum_t \sum_i \lambda^\star_i [g_t^{(i)}(x_t) - \bar{g}^{(i)}(x_t)].$$
Denote the second sum by $\mathcal{M}_T^R$. Since $\bar{g}^{(i)}(x^\star) \leq 0$ and $\lambda^\star_i \geq 0$: $\sum_i \lambda^\star_i \bar{g}^{(i)}(x^\star) \leq 0$. The term $\mathcal{M}_T^R$ is a martingale (each summand has zero conditional mean since $\lambda^\star$ is deterministic and $g_t^{(i)}(x_t) - \bar{g}^{(i)}(x_t)$ is zero-mean conditionally on $\mathcal{F}_{t-1}$).
Each difference $|\lambda^\star_i [g_t^{(i)}(x_t) - \bar{g}^{(i)}(x_t)]|$ is bounded by $\lambda^\star_{\max} \cdot 2B_g =: b_R$ a.s. (using Assumption 8(d): $|g_t^{(i)}(x)| \leq B_g$ a.s. implies $|g_t^{(i)}(x) - \bar{g}^{(i)}(x)| \leq 2B_g$) and has conditional variance $\leq (\lambda^\star_i)^2 \sigma_g^2$. The predictable quadratic variation is deterministically bounded:
$$W_T^R = \sum_t \sum_i (\lambda^\star_i)^2 \sigma_g^2 \leq m(\lambda^\star_{\max})^2 \sigma_g^2 T.$$
Applying Freedman (Lemma 1) directly (no stopping time needed, since $W_T^R$ is deterministic):
\begin{equation}
\mathbb{P}\!\left(\mathcal{M}_T^R \geq \sqrt{2m(\lambda^\star_{\max})^2\sigma_g^2 T \log\frac{2}{\delta}} + \frac{2b_R}{3}\log\frac{2}{\delta}\right) \leq \delta. \tag{T3.R}\end{equation}
Combining: $R_T = \mathcal{O}(\sqrt{T}) + \mathcal{O}(\sqrt{T\log(1/\delta)}) = \mathcal{O}(\sqrt{T\log(m/\delta)})$ with probability $\geq 1-\delta$.
We bound each $[\sum_{t=1}^T g_t^{(i)}(x_t)]_+$ separately and then sum over $i$.
Decompose for each $i$:
$$\sum_{t=1}^T g_t^{(i)}(x_t) = \sum_{t=1}^T \bar{g}^{(i)}(x_t) + \sum_{t=1}^T [g_t^{(i)}(x_t) - \bar{g}^{(i)}(x_t)].$$
Let $P_T^{(i)}$ and $\mathcal{M}_T^{V,i}$ denote the first and second sums, respectively.
By the triangle inequality for the positive part ($[a+b]_+ \leq [a]_+ + |b|$):
\begin{equation}
\left[\sum_t g_t^{(i)}(x_t)\right]_+ \leq [P_T^{(i)}]_+ + |\mathcal{M}_T^{V,i}|. \tag{T3.decomp}\end{equation}
We bound the two terms by different techniques: the expected violation $P_T^{(i)}$ via the saddle-point expectation argument + Markov's inequality, and the stochastic deviation $\mathcal{M}_T^{V,i}$ via Freedman's inequality.
The OGD primal regret against any fixed $x \in \mathcal{K}$ yields (by convexity of $f_t$ and $g_t^{(i)}$):
\begin{equation}\sum_{t=1}^T \left[f_t(x_t) - f_t(x) + \sum_j \lambda_t^{(j)}(g_t^{(j)}(x_t) - g_t^{(j)}(x))\right] \leq \Gamma_T, \tag{T3.PR}\end{equation}
where $\Gamma_T = D(G_f + \Lambda L_g)\sqrt{T}$ with our step size $\eta$. The dual projected-gradient-ascent regret against any fixed $\lambda^\dagger \in \mathbb{R}_{\geq 0}^m$ yields:
\begin{equation}\sum_i (\lambda_i^\dagger - \lambda_t^{(i)}) \sum_t g_t^{(i)}(x_t) \leq \frac{\|\lambda^\dagger\|^2}{2\mu} + \frac{\mu m B_g^2 T}{2}. \tag{T3.DR}\end{equation}
Adding (T3.PR) and (T3.DR):
\begin{equation}\sum_{t=1}^T [f_t(x_t) - f_t(x)] + \sum_i \lambda_i^\dagger \sum_t g_t^{(i)}(x_t) - \sum_i \sum_t \lambda_t^{(j)} g_t^{(j)}(x) \leq \Gamma_T + \frac{\|\lambda^\dagger\|^2}{2\mu} + \frac{\mu m B_g^2 T}{2}. \tag{T3.CPD}\end{equation}
\begin{lemma}
[Expected Constraint Violation from Primal-Dual Saddle Point] Under the conditions of Theorem 3, the expected long-run cumulative constraint violation satisfies:
\begin{equation}\mathbb{E}\!\left[\sum_{i=1}^m [P_T^{(i)}]_+\right] \leq \frac{C_{\det}\sqrt{T}}{\zeta}, \tag{T3.EV2}\end{equation}
where $C_{\det} = 2\Gamma_T/\sqrt{T} + \mu mB_g^2\sqrt{T} + \Lambda_0^2/(2\mu\sqrt{T})$ depends on $G_f, L_g, D, B_g, m$.
\end{lemma}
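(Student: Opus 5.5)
We will prove the lemma by evaluating the combined primal–dual inequality (T3.CPD) at the Slater point $x=\hat x$ with dual comparator $\lambda^\dagger=\Lambda_0 e_i$ for a fixed constraint index $i$, and then taking expectations.

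\textbf{Step 1: expectations at $(\hat x,\Lambda_0 e_i)$.} With this choice the left-hand side of (T3.CPD) contains the term $\Lambda_0\sum_t g_t^{(i)}(x_t)$ and the term $-\sum_j\sum_t\lambda_t^{(j)}g_t^{(j)}(\hat x)$. Since $\lambda_t^{(j)}$ is $\mathcal{F}_{t-1}$-measurable and $g_t^{(j)}$ is drawn independently of $\mathcal{F}_{t-1}$ (Assumption 8), the tower property gives
\[
\mathbb{E}\big[\lambda_t^{(j)}g_t^{(j)}(\hat x)\big]=\mathbb{E}\big[\lambda_t^{(j)}\big]\,\bar g^{(j)}(\hat x)\le-\zeta\,\mathbb{E}\big[\lambda_t^{(j)}\big].
\]
By the same argument $\mathbb{E}[g_t^{(i)}(x_t)]=\mathbb{E}[\bar g^{(i)}(x_t)]$, so the martingale part $\mathcal{M}_T^{V,i}$ vanishes in expectation. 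The result is
\[
\mathbb{E}[A_T]+\Lambda_0\,\mathbb{E}\big[P_T^{(i)}\big]+\zeta\sum_{j,t}\mathbb{E}\big[\lambda_t^{(j)}\big]\le\Gamma_T+\frac{\Lambda_0^2}{2\mu}+\frac{\mu mB_g^2T}{2},
\qquad A_T=\sum_t\big[f_t(x_t)-f_t(\hat x)\big].
\]

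\textbf{Step 2: lower bound on $A_T$.} The term $A_T$ can be negative, so we must control it from below. We use $|A_T|\le G_fDT$ only as a fallback. The preferred route is the companion inequality suggested in the sketch: take $\lambda^\dagger=0$ and apply the primal regret against $x=\hat x$. This shows that $\mathbb{E}[A_T]+\zeta\sum\mathbb{E}[\lambda_t]$ is at most $O(\sqrt T)$, and symmetrically we need $\mathbb{E}[A_T]\ge-\Gamma_T$. We plan to get this lower bound from the primal regret (T3.PR) with the roles arranged so that $-A_T$ is bounded by $\Gamma_T$ plus dual-weighted terms that are nonnegative in expectation.

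\textbf{Step 3: from $\mathbb{E}[P_T^{(i)}]$ to $\mathbb{E}[[P_T^{(i)}]_+]$.} Dividing by $\Lambda_0=\zeta$ bounds $\mathbb{E}[P_T^{(i)}]$, but the lemma concerns $\mathbb{E}[[P_T^{(i)}]_+]$. We fix this by making the dual comparator data-dependent: $\lambda^\dagger=\Lambda_0 e_i\mathbf 1\{P_T^{(i)}>0\}$. This is legitimate because (T3.DR) holds pathwise for every $\lambda^\dagger\ge0$, including random ones. The cost is that the martingale term $\Lambda_0\mathbf 1\{\cdot\}\mathcal M_T^{V,i}$ no longer has zero mean. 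We bound it by $\Lambda_0\mathbb{E}|\mathcal M_T^{V,i}|\le\Lambda_0\sigma_g\sqrt T$, which is of order $\sqrt T$ and can be absorbed into $C_{\det}$. Summing over $i$ (or using $\lambda^\dagger=\Lambda_0\sum_i e_i\mathbf 1\{P_T^{(i)}>0\}$ directly) and setting $\Lambda_0=\zeta$, then dividing by $\zeta$, yields (T3.EV2) with $C_{\det}$ collecting $2\Gamma_T/\sqrt T$, $\mu mB_g^2\sqrt T$, and $\Lambda_0^2/(2\mu\sqrt T)$.

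\textbf{Main obstacle.} The hardest step is Step 2, the pathwise-in-expectation lower bound on $A_T$ together with the sign of the cross term. If the lower bound $\mathbb{E}[A_T]\ge-\Gamma_T$ fails, the fallback $|A_T|\le G_fDT$ loses the $\sqrt T$ rate. In that case we would instead use the comparator $x^\star$, which is feasible in expectation, and invoke the Slater condition only for the dual-weighted term.
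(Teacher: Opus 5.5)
Your Step 1 is the paper's (L7.3). Your Step 3 is more careful than the paper. The paper passes from $\mathbb{E}[P_T^{(i)}]$ to $\mathbb{E}[[P_T^{(i)}]_+]$ by claiming that $\mathbb{E}[P_T^{(i)}]\le 0$ forces $\mathbb{E}[[P_T^{(i)}]_+]=0$, which is false. Your random comparator $\lambda^\dagger=\Lambda_0 e_i\mathbf{1}\{P_T^{(i)}>0\}$ is legitimate, because the dual regret bound is a pathwise inequality valid for every nonnegative $\lambda^\dagger$, including one that depends on the whole trajectory. It does cost an extra $\sigma_g\sqrt{T}$ per constraint after dividing by $\Lambda_0$. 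That term is not in the stated $C_{\det}$, so you would be proving the lemma with a modified constant.

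The genuine gap is Step 2. After Step 1 you have $\Lambda_0\,\mathbb{E}[P_T^{(i)}]\le \Phi_T - X$ with $X=\mathbb{E}[A_T]+\zeta\sum_{j,t}\mathbb{E}[\lambda_t^{(j)}]$. You therefore need $X\ge -O(\sqrt{T})$, i.e.\ (since $\lambda_t\ge 0$) a lower bound on $\mathbb{E}[A_T]$. But (T3.PR), (T3.DR) and (T3.CPD) are regret \emph{upper} bounds for the primal and dual players. In every one of them $\sum_t f_t(x_t)$ appears with a positive coefficient on the smaller side, so for every choice of comparators they bound $A_T$ plus dual-weighted terms from above. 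No arrangement of them bounds $-A_T$ from above. The companion inequality with $\lambda^\dagger=0$ gives $X\le C_2\sqrt{T}$, which is the wrong direction. A lower bound on $A_T$ says the possibly infeasible iterates cannot beat the strictly feasible point $\hat x$ on the loss by much. That is essentially the violation control you are trying to prove, so this route is circular. Switching to the comparator $x^\star$ does not help: you lose the $\zeta\sum\lambda$ term and still need $\mathbb{E}[R_T]\ge -O(\sqrt{T})$, which faces the same obstruction.

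The paper's proof has the same hole. From the upper bound $X\le C_2\sqrt{T}$ in (L7.4) it infers $-X\le C_2\sqrt{T}$ in the case $X\le 0$. This is a sign error, so its case analysis is unjustified and offers no fix. Your ingredients only give the trivial bound $A_T\ge -G_fDT$. With the paper's step size, $\Gamma_T=DG_f\sqrt{T}+DB_gL_gT$ is itself of order $T$, so the stated bound is not really an $O(\sqrt{T})$ bound either. Even so, the trivial fallback reproduces it only in special regimes, roughly $G_f\le B_gL_g$.

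A genuine $O(\sqrt{T})$ violation bound needs a new ingredient that controls the dual variables directly. One candidate is a Lyapunov-drift argument on $\|\lambda_t\|^2$, in the spirit of the drift-plus-penalty analysis of Yu et al. There the Slater condition is applied round by round, so the loss enters only through the per-round bound $f_t(x_t)-f_t(\hat x)\ge -G_fD$. You would combine this with the pathwise inequality $\sum_t g_t^{(i)}(x_t)\le \lambda_{T+1}^{(i)}/\mu$, which follows from the dual update and $\lambda_1=0$. Whether this goes through with the present step sizes would have to be checked.
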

\begin{proof}
We use the primal-dual combined inequality in expectation, combined with the dual OGD regret bound.
Two applications of the combined bound. From (T3.CPD) with $x = \hat{x}$ and $\lambda^\dagger = \Lambda_0 e_i$ (where $\Lambda_0 > 0$ is a free parameter to be chosen later):
\begin{equation}A_T + \Lambda_0\sum_t g_t^{(i)}(x_t) - \sum_j\sum_t \lambda_t^{(j)} g_t^{(j)}(\hat{x}) \leq \Gamma_T + \frac{\Lambda_0^2}{2\mu} + \frac{\mu m B_g^2 T}{2} =: \Phi_T. \tag{L7.1}\end{equation}
From (T3.CPD) with $x = \hat{x}$ and $\lambda^\dagger = 0$:
\begin{equation}A_T + \sum_j\sum_t \lambda_t^{(j)}[g_t^{(j)}(x_t) - g_t^{(j)}(\hat{x})] \leq \Gamma_T. \tag{L7.2}\end{equation}
Since $g_t^{(j)}(\hat{x})$ is drawn i.i.d. from $\mathcal{D}_g$ (Assumption 8) and $\hat{x}$ is a fixed point, $g_t^{(j)}(\hat{x})$ is independent of $\mathcal{F}_{t-1}$. Since $\lambda_t^{(j)}$ is $\mathcal{F}_{t-1}$-measurable, the tower property gives:
$$\mathbb{E}[\lambda_t^{(j)} g_t^{(j)}(\hat{x})] = \mathbb{E}[\lambda_t^{(j)}] \cdot \mathbb{E}[g_t^{(j)}(\hat{x})] = \mathbb{E}[\lambda_t^{(j)}] \cdot \bar{g}^{(j)}(\hat{x}).$$
By the Slater condition $\bar{g}^{(j)}(\hat{x}) \leq -\zeta$:
\begin{equation}-\sum_j\sum_t \mathbb{E}[\lambda_t^{(j)} g_t^{(j)}(\hat{x})] = \sum_j\sum_t \mathbb{E}[\lambda_t^{(j)}](-\bar{g}^{(j)}(\hat{x})) \geq \zeta\sum_j\sum_t \mathbb{E}[\lambda_t^{(j)}]. \tag{L7.Slater}\end{equation}
Similarly, $\mathbb{E}[g_t^{(i)}(x_t) \mid \mathcal{F}_{t-1}] = \bar{g}^{(i)}(x_t)$ (tower property with $x_t \in \mathcal{F}_{t-1}$), so $\mathbb{E}[\sum_t g_t^{(i)}(x_t)] = \mathbb{E}[P_T^{(i)}]$.
Taking expectations in (L7.1) and applying (L7.Slater):
\begin{equation}\mathbb{E}[A_T] + \Lambda_0 \mathbb{E}[P_T^{(i)}] + \zeta\sum_j\sum_t\mathbb{E}[\lambda_t^{(j)}] \leq \Phi_T. \tag{L7.3}\end{equation}
Deriving a matching upper bound on $\mathbb{E}[A_T] + \zeta\sum\sum\mathbb{E}[\lambda_t]$. From the dual OGD regret bound (T3.DR) with $\lambda^\dagger = 0$ (which holds pathwise):
\begin{equation}-\sum_j\sum_t \lambda_t^{(j)} g_t^{(j)}(x_t) \leq \frac{\mu mB_g^2 T}{2} =: R_D. \tag{L7.dual}\end{equation}
In (L7.2), the second term on the left decomposes as:
$$\sum_j\sum_t \lambda_t^{(j)}[g_t^{(j)}(x_t) - g_t^{(j)}(\hat{x})] = \sum_j\sum_t \lambda_t^{(j)} g_t^{(j)}(x_t) - \sum_j\sum_t \lambda_t^{(j)} g_t^{(j)}(\hat{x}).$$
Taking expectations of (L7.2), applying the tower property as above, and using (L7.dual) in expectation ($\mathbb{E}[\sum\sum\lambda_t g_t(x_t)] \geq -R_D$):
$$\mathbb{E}[A_T] + \sum_j\sum_t \mathbb{E}[\lambda_t^{(j)} \bar{g}^{(j)}(x_t)] + \zeta\sum_j\sum_t\mathbb{E}[\lambda_t^{(j)}] \leq \Gamma_T.$$
Using the lower bound $\sum_j\sum_t \mathbb{E}[\lambda_t^{(j)} \bar{g}^{(j)}(x_t)] \geq -R_D$:
\begin{equation}\mathbb{E}[A_T] + \zeta\sum_j\sum_t\mathbb{E}[\lambda_t^{(j)}] \leq \Gamma_T + R_D =: C_2\sqrt{T}. \tag{L7.4}\end{equation}
If $\mathbb{E}[P_T^{(i)}] \leq 0$, then $\mathbb{E}[[P_T^{(i)}]_+] = 0$ trivially (since $[P_T^{(i)}]_+ = 0$ whenever $P_T^{(i)} \leq 0$).
If $\mathbb{E}[P_T^{(i)}] > 0$, then from (L7.3):
$$\Lambda_0\mathbb{E}[P_T^{(i)}] \leq \Phi_T - \mathbb{E}[A_T] - \zeta\sum\sum\mathbb{E}[\lambda_t].$$
Using (L7.4): $-(\mathbb{E}[A_T] + \zeta\sum\sum\mathbb{E}[\lambda_t])$ could be as large as $+C_2\sqrt{T}$ (from the upper bound on the parenthetical) or negative (when the algorithm's regret against $\hat{x}$ is favorable). In either case, when $\mathbb{E}[A_T] + \zeta\sum\sum\mathbb{E}[\lambda_t] > 0$, we have $\Lambda_0\mathbb{E}[P_T^{(i)}] \leq \Phi_T$. When $\mathbb{E}[A_T] + \zeta\sum\sum\mathbb{E}[\lambda_t] \leq 0$, we have $\Lambda_0\mathbb{E}[P_T^{(i)}] \leq \Phi_T + C_2\sqrt{T}$.
Therefore: $\mathbb{E}[[P_T^{(i)}]_+] \leq (\Phi_T + C_2\sqrt{T})/\Lambda_0 =: C_{\det}\sqrt{T}/\zeta$, where we choose $\Lambda_0 = \zeta$ (or more generally $\Lambda_0 = O(1)$ calibrated to the Slater gap).
Summing over $i$:
$$\mathbb{E}\!\left[\sum_i [P_T^{(i)}]_+\right] \leq \frac{m(\Phi_T + C_2\sqrt{T})}{\zeta} = \frac{C_{\det}\sqrt{T}}{\zeta}. $$  
\end{proof}
By Markov's inequality applied to the non-negative random variable $\sum_i [P_T^{(i)}]_+$: for any $\delta_1 \in (0,1)$,
$$\mathbb{P}\!\left(\sum_i [P_T^{(i)}]_+ > \frac{\mathbb{E}[\sum_i [P_T^{(i)}]_+]}{\delta_1}\right) \leq \delta_1.$$
Therefore, with probability $\geq 1 - \delta_1$:
\begin{equation}
\sum_i [P_T^{(i)}]_+ \leq \frac{C_{\det}\sqrt{T}}{\zeta\delta_1}. \tag{T3.PMarkov}\end{equation}
The Markov-based bound scales as $1/\delta_1$ rather than $\log(1/\delta_1)$. Improving this to logarithmic dependence would require a pathwise bound $P_T^{(i)} \leq C\sqrt{T}/\zeta$ (holding for all sample paths, not just in expectation). This amounts to showing $A_T + \zeta\sum_j\sum_t\lambda_t^{(j)} \geq -O(\sqrt{T})$ pathwise. Such a bound would follow from a drift-Lyapunov analysis showing that the primal regret against $\hat{x}$ and the cumulative Slater benefit $\zeta\sum\sum\lambda_t$ stay in balance along the sample path. This is a known open problem in the stochastic constrained OCO setting.
$\mathcal{M}_T^{V,i}$ is a martingale with differences $g_t^{(i)}(x_t) - \bar{g}^{(i)}(x_t)$ that are $\sigma_g$-sub-Gaussian (Assumption 8(c)) and a.s. bounded by $|g_t^{(i)}(x_t) - \bar{g}^{(i)}(x_t)| \leq 2B_g =: b_V$ (Assumption 8(d)).
Predictable quadratic variation: $W_T^{V,i} \leq \sigma_g^2 T$ (deterministic bound).
Applying Freedman directly:
\begin{equation}\mathbb{P}\!\left(\mathcal{M}_T^{V,i} \geq \sqrt{2\sigma_g^2 T\log\frac{2m}{\delta_2}} + \frac{2b_V}{3}\log\frac{2m}{\delta_2}\right) \leq \frac{\delta_2}{m}. \tag{T3.V}\end{equation}
By the union bound over all $i \in [m]$ and the two-sided Freedman bound (applied to both $\mathcal{M}_T^{V,i}$ and $-\mathcal{M}_T^{V,i}$):
\begin{equation}
\mathbb{P}\!\left(\max_i |\mathcal{M}_T^{V,i}| \leq \sqrt{2\sigma_g^2 T\log\frac{4m}{\delta_2}} + \frac{2b_V}{3}\log\frac{4m}{\delta_2}\right) \geq 1-\delta_2. \tag{T3.Vunion}\end{equation}
From (T3.decomp), for each $i$:
$$\left[\sum_t g_t^{(i)}(x_t)\right]_+ \leq [P_T^{(i)}]_+ + |\mathcal{M}_T^{V,i}|.$$
Summing over $i$:
$$\hat{V}_T = \sum_i \left[\sum_t g_t^{(i)}(x_t)\right]_+ \leq \sum_i [P_T^{(i)}]_+ + \sum_i |\mathcal{M}_T^{V,i}|.$$
By a union bound over the event (T3.PMarkov) with failure probability $\delta_1$ and the event (T3.Vunion) with failure probability $\delta_2$: with probability $\geq 1 - \delta_1 - \delta_2$,
\begin{equation}
\hat{V}_T \leq \frac{C_{\det}\sqrt{T}}{\zeta\delta_1} + m\left(\sqrt{2\sigma_g^2 T\log\frac{4m}{\delta_2}} + \frac{2b_V}{3}\log\frac{4m}{\delta_2}\right). \tag{T3.Vfinal}\end{equation}
By a union bound over the regret event (T3.R) with failure probability $\delta$, the expected violation Markov event with failure probability $\delta_1$, and the martingale violation event with failure probability $\delta_2$:
$$\mathbb{P}\!\left(R_T \leq C_R\sqrt{T\log\frac{m}{\delta}} \;\;\text{and}\;\; \hat{V}_T \leq \frac{C_{\det}\sqrt{T}}{\zeta\delta_1} + C_V m\sqrt{T\log\frac{m}{\delta_2}}\right) \geq 1 - \delta - \delta_1 - \delta_2.$$
We bound $\hat{V}_T = \sum_i [\sum_t g_t^{(i)}(x_t)]_+$ (positive part AFTER summing over time). The per-round violation $V_T = \sum_t [\sum_i g_t^{(i)}(x_t)]_+$ (positive part BEFORE summing over time) is a strictly stronger quantity: neither $V_T \leq \hat{V}_T$ nor $V_T \geq \hat{V}_T$ holds universally. Specifically, $\sum_t [h_t]_+ \geq [\sum_t h_t]_+$ by convexity of $[\cdot]_+$. Thus $V_T$ cannot be bounded by $\hat{V}_T$, and we do not claim such a bound.
\section{Proof of Corollary 1}
Under Assumptions 1--7, the same projected OGD as in Theorem 1 satisfies the variance-scaling refinement.
$x^\star_F$ is deterministic (it minimizes the expected loss $F$), while the hindsight comparator $x^\star = \arg\min \sum_t f_t(x)$ from Definition 1 is random. The stochastic regret $R_T^{\mathrm{stoch}} = \sum_t[f_t(x_t) - f_t(x^\star_F)]$ upper-bounds the hindsight regret $R_T$, since $\sum_t f_t(x^\star_F) \geq \sum_t f_t(x^\star)$ by definition of $x^\star$. Thus any upper bound on $R_T^{\mathrm{stoch}}$ is automatically an upper bound on $R_T$.
The stochastic regret decomposes as $R_T^{\mathrm{stoch}} = \sum_t [f_t(x_t) - f_t(x^\star_F)] = \tilde{R}_T + \sum_t \Delta_t$, where $\tilde{R}_T = \sum_t [F(x_t) - F(x^\star_F)]$ is the pseudo-regret and $\Delta_t = [f_t(x_t) - F(x_t)] - [f_t(x^\star_F) - F(x^\star_F)]$.
$S_T$ is bounded on $\mathcal{E}_0(\delta_0)$ (budget $\delta_0$) as in Theorem 1.
The pseudo-regret: $\tilde{R}_T = \sum_t [F(x_t) - F(x^\star_F)] \leq \sum_t \frac{G^2}{2\alpha t} \leq \frac{G^2}{2\alpha}(1+\log T)$ (already absorbed into the $S_T$ bound, since $\tilde{R}_T \leq S_T$ when gradients are bounded).
The stochastic part: $\Delta_t = [f_t(x_t) - F(x_t)] - [f_t(x^\star_F) - F(x^\star_F)]$. By Assumption 7, specifically (A7), evaluated at $x = x_t$ (which is $\mathcal{F}_{t-1}$-measurable, so the conditional formulation applies directly; $x^\star_F$ is deterministic):
$$\mathbb{E}[e^{s\Delta_t} \mid \mathcal{F}_{t-1}] \leq e^{s^2\sigma_V^2 e_t / 2}, \quad e_t = \|x_t - x^\star_F\|^2.$$
Define the exponential supermartingale $Z_t = \exp(\lambda\sum_{s=1}^t \Delta_s - \frac{\lambda^2}{2}\sum_{s=1}^t \sigma_V^2 e_s)$. As in Theorem 1 , $\mathbb{E}[Z_t \mid \mathcal{F}_{t-1}] \leq Z_{t-1}$.
Define $V_T = \sigma_V^2\sum_{t=1}^T e_t$. By Lemma 6 (L6.b), $\mathbb{E}[V_T] \leq \sigma_V^2 Q'(1+\log T)^2$. By Markov's inequality:
$$\mathbb{P}(V_T > \sigma_V^2 Q'(1+\log T)^2/\delta_1) \leq \delta_1.$$
Define $\tau = \min\{t: \sigma_V^2\sum_{s=1}^t e_s > V\} \wedge (T+1)$ with $V = \sigma_V^2 Q'(1+\log T)^2/\delta_1$. The stopped supermartingale $Z_{t \wedge \tau}$ satisfies: on $\{t \leq \tau\}$, $\sum_{s=1}^t \sigma_V^2 e_s \leq V$ deterministically. By Markov on $Z_{T \wedge \tau}$:
$$\mathbb{P}\!\left(\sum_{t=1}^{T \wedge \tau} \Delta_t \geq \frac{\log(1/\delta_2)}{\lambda} + \frac{\lambda V}{2}\right) \leq \delta_2.$$
Optimizing $\lambda = \sqrt{2\log(1/\delta_2)/V}$: the tail bound becomes $\sqrt{2V\log(1/\delta_2)}$.
On the event $\{\tau > T\} \cap \{\text{supermartingale tail}\}$ (probability $\geq 1 - \delta_1 - \delta_2$):
$$\sum_{t=1}^T \Delta_t \leq \sqrt{2V\log(1/\delta_2)} = \sigma_V(1+\log T)\sqrt{\frac{2Q'}{\delta_1}\log\frac{1}{\delta_2}}.$$
Combining with the $S_T$ bound gives the stated result.

\end{document}